\definecolor{pearThree}{HTML}{E74C3C}
\definecolor{pearcomp}{HTML}{B97E29}
\definecolor{pearDark}{HTML}{2980B9}
\definecolor{pearDarker}{HTML}{1D2DEC}
\newtheorem{theorem}{Theorem}
\newtheorem{lemma}[theorem]{Lemma}
\newtheorem{proposition}[theorem]{Proposition}
\newtheorem{definition}[theorem]{Definition}
\newtheorem{assumption}[theorem]{Assumption}
\theoremstyle{definition}
\newtheorem{remark}{Remark}
\newcommand{\PreserveBackslash}[1]{\let\temp=\\#1\let\\=\temp}
\newcolumntype{C}[1]{>{\PreserveBackslash\centering}p{#1}}
\newcolumntype{R}[1]{>{\PreserveBackslash\raggedleft}p{#1}}
\newcolumntype{L}[1]{>{\PreserveBackslash\raggedright}p{#1}}
\definecolor{LightCyan}{rgb}{0.88,1,1}
\newcommand{\vtheta}{\theta}
\DeclareMathOperator{\spann}{span}
\newcommand{\wt}[1]{\widetilde{#1}}
\newcommand{\wb}[1]{\overline{#1}}
\newcommand{\norm}[1]{\left\|#1\right\|}
\newcommand{\Reals}{\mathbb{R}}
\DeclareMathOperator{\EV}{\mathbb{E}}
\newcommand{\Sspace}{\mathcal{S}}
\newcommand{\Aspace}{\mathcal{A}}
\newcommand{\indi}[1]{\mathds{1}\left\{#1\right\}}
\newcommand{\Prob}{\mathbb{P}}
\DeclareMathOperator{\Imm}{Im}
\newcommand{\transp}{\mathsf{T}}
\newcommand{\de}{\mathrm{d}}
\newcommand{\argmax}{\operatornamewithlimits{argmax}}
\newcommand{\hls}{{\small\textsc{UniSOFT}}\xspace}
\newcommand{\algo}[1]{{\small\textsc{LSVI-LEADER}}\xspace}
\title{Reinforcement Learning in Linear MDPs:\\ Constant Regret and Representation Selection}
\author[1]{Matteo Papini}
\affil[1]{Universitat Pompeu Fabra}
\author[2]{Andrea Tirinzoni}
\affil[2]{INRIA Lille}
\author[3]{Aldo Pacchiano\footnote{Work done while at Facebook AI Research.}}
\affil[3]{Microsoft Research}
\author[4]{Marcello Restelli}
\affil[4]{Politecnico di Milano}
\author[5]{Alessandro Lazaric}
\author[5]{Matteo Pirotta}
\affil[5]{Facebook AI Research}
\date{}
\begin{document}
\doparttoc 
\faketableofcontents 

\maketitle
\begin{abstract}
   We study the role of the representation of state-action value functions in regret minimization in finite-horizon Markov Decision Processes (MDPs) with linear structure. We first derive a necessary condition on the representation, called universally spanning optimal features (\hls{}), to achieve constant regret in any MDP with linear reward function. This result encompasses the well-known settings of low-rank MDPs and, more generally, zero inherent Bellman error (also known as the Bellman closure assumption). We then demonstrate that this condition is also sufficient for these classes of problems by deriving a constant regret bound for two optimistic algorithms (LSVI-UCB and ELEANOR). 
   Finally, we propose an algorithm for representation selection and we prove that it achieves constant regret when one of the given representations, or a suitable combination of them, satisfies the \hls{} condition.
\end{abstract}

\section{Introduction}\label{sec:introduction}
The ability of an agent to learn an informative mapping from complex observations to a succinct representation is one of the essential factors for the success of machine learning in fields such as computer vision, language modeling, and more broadly in deep learning~\citep{Bengio2013representation}.
In supervised learning, it is well understood that a ``good'' representation is one that allows to accurately fit any target function of interest (e.g., correctly classify a set of objects in an image).
In Reinforcement Learning (RL), this concept is more subtle, as it can be applied to different aspects of the problem, such as the optimal value function or the optimal policy.
Furthermore, recent works have shown that realizability (e.g., being able to represent the optimal value function) is not a sufficient condition for solving an RL problem, as the sample complexity using realizable representations is exponential in the worst case~\citep[e.g.,][]{weisz2021exponential}.
As such, a desirable property of a ``good'' representation in RL is to enable learning a near-optimal policy with a polynomial sample complexity (or similarly sublinear regret bound).

Several works have focused on online learning --- considering sample complexity or regret minimization --- and identified sufficient assumptions for efficient learning.
Standard examples are tabular Markov Decision Processes (MDPs)~\citep[e.g.,][]{JakschOA10,AzarMK12,AzarOM17}, low or zero inherent Bellman error~\citep[e.g.,][]{Jin2020linear,Zanette2020low,Zanette2020frequentist,jin2021bellmaneluder} and linear mixture MDPs~\citep[e.g.,][]{Yang2019matrix,Ayoub2020vtr,Zhang2021variance}. While, in these settings, the representation is provided as input to the algorithm, an alternative scenario is to learn such representations. In this case, research has focused either on the problem of online representation selection for regret minimization~\citep[e.g.,][]{OrtnerMR14,OrtnerPLFM19,LeePMKB21} or, more recently, on the sample complexity of online representation learning~\citep[e.g.,][]{DuKJAD019,AgarwalKKS20,Modi2021modelfree}. Refer to App.~\ref{app:related.work} for more details.
While this literature has focused on finding a representation enabling learning a near-optimal policy with sublinear regret or polynomial sample complexity, there may be several of such ``good'' representations with significantly different learning performance and existing approaches are not guaranteed to find the most efficient one.
Intuitively, we would like to find representations that require the minimum level of exploration to solve the task. For example, representations that would allow the algorithm to stop exploring after a finite time and play only optimal actions forever (i.e., achieving constant regret), if they exist.
This aspect of representation learning was recently studied by~\citet{hao2020adaptive,papini2021leveraging} in contextual linear bandits, where they showed that certain representations display non-trivial properties that enable much better learning performance. While it is well-known that properties such as dimensionality and norm of the features have an impact on the learning performance,~\citet{hao2020adaptive,papini2021leveraging} proved that it is possible to achieve constant regret (i.e., not scaling with the number of learning steps) if a certain (necessary and sufficient) condition on the features associated with the optimal actions is satisfied. To the best of our knowledge, the impact of similar properties on RL algorithms and how to find such representations is largely unexplored.

\textbf{Contributions.} In this paper, we investigate the concept of ``good'' representations in the context of regret minimization in finite-horizon MDPs with linear structure. In particular, we consider the settings of zero inherent Bellman error (also referred to as Bellman closure)~\citep{Zanette2020low} and low-rank structure~\citep[e.g.,][]{Jin2020linear}. 
Similarly to the bandit case~\citep{hao2020adaptive,papini2021leveraging}, we study the impact of representations on the learning process.
Our contributions are both fundamental and algorithmic.
\textbf{1)} We provide a necessary condition (called \hls{}) for a representation to enable constant regret in any problem with linear reward parametrization. Notably, this result encompasses MDPs with zero inherent Bellman error, and linear mixture MDPs with linearly parametrized rewards. Intuitively, the condition generalizes a similar condition for linear contextual bandits and it requires that the features observed along trajectories generated by the optimal actions provide information on the whole feature space (see Asm.~\ref{asm:feature.structure}). 
\textbf{2)} We provide the first \emph{constant regret bound} for MDPs for both ELEANOR~\citep{Zanette2020low} and LSVI-UCB~\citep{Jin2020linear} when the \hls{} condition is satisfied. As a consequence, we show that good representations are not only necessary but also sufficient for constant regret in MDPs with zero inherent Bellman error or low-rank assumptions.
\textbf{3)} We develop an algorithm, called \algo{}, for representation selection in low-rank MDPs. We prove that in low-rank MDPs, \algo{} suffers the regret of the best representation without knowing it in advance. Furthermore, \algo{} achieves constant regret even when only a suitable combination of the representations satisfies the \hls{} condition despite none of them being ``good''. This is indeed possible thanks to its ability to select a different representation for each stage, state, and action.

\section{Preliminaries}\label{sec:preliminaries}
We consider a time-inhomogeneous finite-horizon Markov decision process (MDP) $M = \big(\Sspace, \Aspace, H,$ $\{r_h\}_{h=1}^H, \{p_h\}_{h=1}^H, \mu\big)$ where $\Sspace$ is the state space and $\Aspace$ is the action space, $H$ is the length of the episode, $\{r_h\}$ and $\{p_h\}$ are reward functions and state-transition probability measures, and $\mu$ is the initial state distribution.
We denote by $r_h(s,a)$ the expected reward of a pair $(s,a) \in \Sspace \times \Aspace$ at stage $h$.
We assume that $\Sspace$ is a measurable space with a possibly infinite number of elements and $\Aspace$ is a
finite set.
A policy $\pi = (\pi_1,\ldots,\pi_H) \in \Pi$ is a sequence of decision rules $\pi_h : \Sspace \to \Aspace$.
For every $h \in [H] := \{1, \ldots, H\}$ and $(s,a) \in \Sspace\times\Aspace$, we define the value functions of a policy $\pi$ as 
\[
  Q_h^\pi(s,a) = r_h(s,a)+\mathbb{E}_{\pi}\left[ \sum_{i=h+1}^H r_i(s_i,a_i) \right], \qquad V^\pi_h(s,a) = Q_h^{\pi}(s,\pi_h(s)),
\]
where the expectation is over probability measures induced by the policy and the MDP over state-action sequences of length $H-h$. Under certain regularity conditions~\citep[e.g.,][]{bertsekas2004stochastic}, there always exists an optimal policy $\pi^\star$ whose value functions are defined by $V^{\pi^\star}_h(s):=V^\star_h(s)= \sup_{\pi} V^\pi_h(s)$ and $Q^{\pi^\star}_h(s,a):=Q^\star_h(s,a)= \sup_{\pi} Q^\pi_h(s,a)$.
The optimal Bellman equation (and Bellman operator $L_h$) at stage $h \in [H]$ is defined as:
\[
    Q_h^\star(s,a) := L_hQ_{h+1}^\star(s,a) = r_h(s,a) + \mathbb{E}_{s' \sim p_h(s,a)}\left[ \max_{a'} Q^\star_{h+1}(s',a')\right].
\]
The value iteration algorithm (a.k.a.\ backward induction) computes $Q^\star$ or $Q^\pi$ by applying the Bellman equations starting from stage $H$ down to $1$, with $V_{H+1}^\pi(s) = 0$ by definition for any $s$ and $\pi$. The optimal policy is simply the greedy policy w.r.t. $Q^\star$: $\pi^\star_h(s) = \argmax_{a \in \Aspace} Q^\star_h(s,a)$. 

In online learning, the agent interacts with an \emph{unknown} MDP in a sequence of $K$ episodes. At each episode $k$, the agent observes an initial state $s_{1}^k$, it selects a policy $\pi_k$, it collects the samples observed along a trajectory obtained by executing $\pi_k$, it updates the policy, and reiterates over the next episode. We evaluate the performance of a learning agent through the regret: $R(K) := \sum_{k=1}^K V_1^\star(s_{1}^k) - V_1^{\pi_k}(s_{1}^k)$.

\paragraph{Linear Representation.}
When the state space is large or continuous, value functions are often described through a parametric representation. A standard approach is to use linear representations of the state-action function $Q_h(s,a) = \phi_h(s,a)^\transp \theta_h$, where $\phi_h : \mathcal{S} \times \mathcal{A} \to \mathbb{R}^d$ is a time-inhomogeneous feature map and $\theta_h \in \mathbb{R}^d$ is an unknown parameter vector.\footnote{It is possible to extend the setting to different feature dimensions $\{d_h\}_{h\in [H]}$.} 
In this paper, we consider MDPs satisfying Bellman closure (i.e., zero Inherent Bellman Error)~\citep{Zanette2020low} or low-rank assumptions~\citep[e.g.,][]{Yang2019matrix,Jin2020linear}.

\begin{assumption}[Bellman Closure]\label{asm:zero.ibe}
    Define the set of bounded value function  $\mathcal{Q}_h = \{Q_h | \theta_h\in\Theta_h : Q_h(s,a)=\phi_h(s,a)^\transp \theta_h, \forall (s,a)\}$ and the associated parameter space $\Theta_h =\{\theta_h \in \mathbb{R}^d : |\phi_h(s,a)^\transp \theta_h| \leq D\}$.
    An MDP has zero Inherent Bellman Error (IBE) if
    \begin{align*}
       \forall h\in[H],\qquad \sup_{Q_{h+1} \in \mathcal{Q}_{h+1}} \inf_{Q_h \in \mathcal{Q}_h} \|Q_h - L_hQ_{h+1}\|_{\infty} = 0.
    \end{align*}
\end{assumption}
This definition implies that the optimal value function is realizable as $Q^\star_h \in \mathcal{Q}_h$. Furthermore, the function space $\mathcal{Q}$ is closed under the Bellman operator, i.e., for all $Q_{h+1}\in \mathcal{Q}_{h+1}$, $L_hQ_{h+1} \in \mathcal{Q}_h$. Under this assumption, value-iteration-based algorithms are guaranteed to converge to the optimal policy in the limit of samples and iterations~ \citep{MunosS08}. In the context of regret minimization, \citet{Zanette2020low} proposed a model-free algorithm, called ELEANOR, that achieves sublinear regret under the Bellman closure assumption, but at the cost of computational intractability.\footnote{ELEANOR works under the weaker assumption of low IBE. \citet{jin2021bellmaneluder} considered the more general case of low Bellman Eluder dimension. Their algorithm reduces to ELEANOR in the case of low IBE.} The design of a tractable algorithm for regret minimization under low IBE assumption is still an open question in the literature.

\begin{assumption}[Low-Rank MDP]\label{asm:lowrank}
    Let $\Theta_h = \mathbb{R}^d$, then an MDP has low-rank structure if 
    \begin{align*}
        \forall s,a,h,s', \quad r_h(s,a) = \phi_h(s,a)^\transp \theta_h, \quad p_h(s'|s,a) = \phi_h(s,a)^\transp \mu_h(s')
    \end{align*}
    where $\mu_h : \mathcal{S} \to \mathbb{R}^d$. Then, for any policy $\pi \in \Pi$, $\exists \theta_h^\pi \in \Theta_h$ such that $Q^\pi_h(s,a) = \phi_h(s,a)^\transp \theta_h^\pi$.
        We assume 
        $\norm{\theta_h}_2 \leq \sqrt{d}$, $\|\int_{s'} \mu_h(s')v(s')\mathrm{d}s'\|_2\leq \sqrt{d}\|v\|_{\infty}$ and $\norm{\phi_h(s,a)}_2\leq 1$, for any $s,a,h$, and function $v : \mathcal{S} \to \mathbb{R}$. 
\end{assumption}

This assumption is \textit{strictly} stronger than Bellman closure~\citep{Zanette2020low} and it implies the value function of \emph{any} policy is linear in the features. Furthermore, under Asm.~\ref{asm:lowrank} sublinear regret is achievable using, e.g., LSVI-UCB ~\citep{Jin2020linear}, a tractable algorithm for low-rank MDPs. \citet{He2020loglin} have recently established a problem-dependent logarithmic regret bound for LSVI-UCB under a strictly-positive minimum gap. The minimum positive gap provides a natural measure of the difficulty of an MDP.
\begin{assumption}\label{asm:gap}
    The suboptimality gap of taking action $a$ in state $s$ at stage $h$ is defined as:
    \begin{equation}
        \Delta_h(s,a) = V^\star_h(s) - Q^\star_h(s,a).
    \end{equation}
    We assume the minimum positive gap $\Delta_{\min} = \min_{s,a,h}\{\Delta_h(s,a)|\Delta_h(s,a)>0\}$ is well defined and that the optimal action is unique, i.e., $|\argmax_a \{Q^\star_h(s,a)\}| = 1$, for any $s \in \Sspace$, $h \in [H]$.
\end{assumption}
In Tab.~\ref{tab:regret.bounds}, we summarize existing bounds in the two settings.
Another structural assumption that has gained popularity in the literature is the linear-mixture structure~\citep{JiaYSW20,Ayoub2020vtr,Zhou2020nearly}, where the transition function admits a form $p_h(s'|s,a) = \phi_h(s'|s,a)^\transp \theta_h$.
No structural requirement is made on the reward, which is typically assumed to be known. As a consequence, the value function may not be linearly representable. However, the fact the reward is known and that it is possible to directly learn the parameters $\theta_h$ of the transition function allow to achieve sublinear regret (even logarithmic) through model-based algorithms. While in this paper we mostly focus on Asm.~\ref{asm:zero.ibe} and~\ref{asm:lowrank}, in Sect.~\ref{sec:necessary} we show that our condition is necessary for constant regret also for linear-mixture MDPs with unknown linear reward.

\begin{table}[t!]
\centering
\footnotesize
    \renewcommand{\arraystretch}{2.4}
    \begin{tabular}{|c|c|c|c|c|}
    \hline
    \makecell{Algorithm\\{\scriptsize (setting)}}
    & Minimax & \makecell{Problem-Dependent\\ Logarithmic} & \makecell{Constant with \hls\\ \emph{(this work)}}\\
    \hline 
    \hline
    \makecell{ELEANOR\\ {\scriptsize\emph{(Bellman Closure)}}}
    &\makecell{$\wt{O}(\sqrt{d^2H^3T})$\\ \citep{Zanette2020low}}
    &N/A
    &
    \cellcolor[HTML]{F0F8FF}
    \makecell{$\frac{d^2 H^4}{\Delta_{\min}\lambda_{+}^{3/2}} {\log^{1/2}\bigg(\frac{d^2H^5}{\delta\Delta_{\min}^2\lambda_+^{3}}\bigg)}$\\
    (Thm.~\ref{thm:const.ele})}
    \\
    \hline
    \makecell{LSVI-UCB\\ {\scriptsize\emph{(low-rank MDPs)}}}    
    &\makecell{$\wt{O}(\sqrt{d^3H^3T})$\\ \citep{Jin2020linear}}
    &\makecell{$O(\frac{d^3H^5}{\Delta_{\min}} \log^2(T))$\\ \citep{He2020loglin}}
    &
    
    \cellcolor[HTML]{F0F8FF}
    \makecell{\hspace{.172in}$\frac{d^3 H^5}{\Delta_{\min}} \log \bigg(\frac{d^4H^6}{\delta\Delta_{\min}^2\lambda_+^3} \bigg)$\hspace{.172in}
    \\
    (Thm.~\ref{thm:const.lsvi})}
    \\
    \hline\hline
    Lower Bound
    &\makecell{$\Omega(\sqrt{d^2H^2T})$\\ \citep[][Remark 5.8]{Zhou2020nearly}} 
    &\makecell{$\Omega(\frac{dH}{\Delta_{\min}})$\\ \citep{He2020loglin}}
    &N/A\\
    \hline
    \end{tabular}
\vspace{0.04in}
\caption{
    Regret comparisons of ELEANOR and LSVI-UCB. For ELEANOR, we consider the special case of Bellman closure.
}
\label{tab:regret.bounds}
\end{table}

\section{Constant Regret for Linear MDPs}\label{sec:constant}
In this section, we introduce \hls{}, a necessary condition for constant regret in any MDP with linear rewards. We show that this condition is also sufficient in MDPs with Bellman closure.
\begin{assumption}\label{asm:feature.structure}
    A feature map is \hls{} (Universally Spanning Optimal FeaTures) for an MDP if 
    it satisfies 
    Asm.~\ref{asm:zero.ibe} or~\ref{asm:lowrank}, and for all $h\in[H]$ the following holds:
    \begin{align*}
        \mathrm{span}\Big\{\phi_h(s,a) \;|\; \forall (s,a), \;\exists \pi \in \Pi : \rho^\pi_h(s,a) > 0\Big\} = \mathrm{span}\Big\{\phi_h^\star(s) \;|\; \forall s,\; \rho^{\star}_h(s) > 0\Big\},
    \end{align*}
    where $\rho_h^\pi(s)= \mathbb{E}[\indi{s_h=s}|M,\pi]$ is the occupancy measure of a policy $\pi$, $\rho_h^\pi(s,a) = \rho_h(s) \indi{\pi_h(s) = a}$, $\rho^{\star}_h(s) := \rho^{\pi^\star}_h(s) $, and $\phi^\star_h(s) := \phi_h(s, \pi_h^\star(s))$.
\end{assumption}
Intuitively, features that are observed by only playing optimal actions must provide information on the whole space of reachable features at each stage $h$.
We notice that Asm.~\ref{asm:feature.structure} reduces to the {\small\textsc{HLS}}\xspace property for contextual bandits considered by~\citet{hao2020adaptive,papini2021leveraging}. The key difference is that, in RL, the reachability of a state plays a fundamental role. For example, features of states that are not reachable by any policy are irrelevant, while features of optimal actions in states that are not reachable by the optimal policy (i.e., $\phi^\star_h(s)$ in a state with $\rho_h^\star(s) = 0$) do not contribute to the span of optimal features since they can only be reached by acting sub-optimally.
In RL, a related structural assumption to Asm.~\ref{asm:feature.structure} is the ``uniformly excited feature'' assumption used by~\citet[][Asm. A4]{AbbasiYadkori2019politex} for average reward problems.
Their assumption is strictly stronger than ours since it requires that all policies generate an occupancy measure under which the features span all directions uniformly well. Such an assumption can be related to the ergodicity assumption for tabular MDPs, which is known to be restrictive.
Another related quantity is the ``explorability'' coefficient introduced by~\citet{Zanette2020rewardagnostic}.
This term represents how explorative (in the feature space) are the optimal policies of the tasks compatible with the MDP, i.e., considering any possible parameter $\theta_h \in \Theta_h$. This coefficient is important in reward-free exploration where the objective is to learn a near optimal policy for any task, which is revealed only once learning has completed. In our setting, we focus only on the properties of the optimal policy for the single task we aim to solve.

It is interesting to look into Asm.~\ref{asm:feature.structure} from an alternative perspective.
Denote by $0\leq \lambda_{h,1} \leq \ldots \leq \lambda_{h,d}$ the eigenvalues of the matrix $\Lambda_h := \mathbb{E}_{s \sim \rho_h^{\star}} \big[ \phi_h^\star(s) \phi_h^\star(s)^\transp \big]$ and by $\lambda^+_h= \min\{\lambda_{h,i} >0, i\in[d]\}$ the minimum positive eigenvalue.
We notice that when the features are non-redundant (i.e.,  $\{\phi_h(s,a)\}$ spans $\mathbb{R}^d$) and the \hls{} assumption holds, then $\lambda^+_h = \lambda_{h,1} > 0$.
As we will see, the minimum positive eigenvalue $\lambda_{h}^+$ plays a fundamental role in the constant regret bound, together with the minimum gap $\Delta_{\min}$.
We provide examples of \hls{} and Non-\hls{} representations in App.~\ref{app:examples.experiments}, as well as their impact on the learning process.

\subsection{\hls{} is Necessary for Constant Regret}\label{sec:necessary}
The following theorem shows that the \hls{} condition is necessary to achieve constant regret in a large class of MDPs.
\begin{theorem}\label{thm:necessity}
    Let $M$ be any MDP with finite states, arbitrary dynamics $p$, linear rewards (i.e., $r_h(s,a)=\phi_h(s,a)^\transp \theta_h$) with Gaussian $\mathcal{N}(0,1)$ noise, unique optimal policy $\pi^\star$, and where condition \hls{} (Asm.~\ref{asm:feature.structure}) is not satisfied. Let $\mathcal{M}$ be the set of MDPs with same dynamics as $M$ but different reward parameters $\{\theta_h\}_{h\in[H]}$. Then, there exists no algorithm that suffers sub-linear regret in all MDPs in $\mathcal{M}$ while suffering constant regret in $M$.
\end{theorem}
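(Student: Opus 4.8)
The plan is to prove the contrapositive flavor of the statement via a standard change-of-measure (information-theoretic) argument: assume an algorithm has constant regret on $M$, and exhibit an alternative MDP $M' \in \mathcal{M}$ — obtained by perturbing the reward parameters along a direction that the optimal features fail to span — on which the same algorithm must incur linear regret. Since the dynamics are shared, the only way to distinguish $M$ from $M'$ is through the reward observations, and these are informative only in directions spanned by the features actually visited. The failure of \hls{} is exactly what guarantees there is a ``blind spot'': a direction $v$ in which the optimal-policy trajectories carry no reward information, yet which is reachable by some policy.

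The key steps, in order, would be: \textbf{(1)} Since Asm.~\ref{asm:feature.structure} fails, pick a stage $h_0$ and a vector $v \in \mathbb{R}^d$ lying in $\mathrm{span}\{\phi_{h_0}(s,a) : \exists\pi,\ \rho^\pi_{h_0}(s,a)>0\}$ but orthogonal to $\mathrm{span}\{\phi^\star_{h_0}(s) : \rho^\star_{h_0}(s)>0\}$. Fix a state-action pair $(\bar s,\bar a)$ reachable at $h_0$ with $\phi_{h_0}(\bar s,\bar a)^\transp v \neq 0$ and $\bar a \neq \pi^\star_{h_0}(\bar s)$ (the inequality must hold for some such pair, else $v$ would be in the optimal span). \textbf{(2)} Construct $M'$ by setting $\theta'_{h_0} = \theta_{h_0} + \epsilon v$ and leaving all other parameters unchanged, for a small $\epsilon>0$ to be tuned. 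Because $v$ is orthogonal to every $\phi^\star_{h_0}(s)$ with $\rho^\star_{h_0}(s)>0$, the quantity $Q^\star_{h_0}$ along the optimal policy's support is unchanged to first order; more carefully, one argues that for small enough $\epsilon$ the optimal policy of $M$ remains optimal in $M'$ \emph{except} possibly at states reached only sub-optimally — and one chooses $\epsilon$ and the sign of $v$ so that in $M'$ the action $\bar a$ becomes strictly better than $\pi^\star_{h_0}(\bar s)$ at $\bar s$, hence $\pi^\star$ is no longer optimal for $M'$ and the regret in $M'$ of any policy that plays like $\pi^\star$ on the reachable optimal support is bounded below by a constant times the probability of reaching $\bar s$. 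This uses the unique-optimal-policy and minimum-gap structure to keep the perturbation analysis clean. \textbf{(3)} Apply a standard regret lower-bound lemma (Pinsker / Bretagnolle–Huber plus the divergence decomposition for Gaussian rewards, as in the bandit literature): if $\mathrm{Alg}$ has regret $\le C$ on $M$, then in particular it plays $\bar a$ at $\bar s$ only $o(K)$ times in expectation under $M$, so the KL divergence between the law of the $K$-episode interaction under $M$ and under $M'$ is $O(\epsilon^2 \cdot \mathbb{E}_M[\#\text{visits to }(\bar s,\bar a)]) = o(K)$ — in fact it stays bounded — which forces the algorithm to also play $\bar a$ at $\bar s$ only $o(K)$ times under $M'$, yielding regret $\Omega(K)$ on $M'$, contradicting sub-linear regret on all of $\mathcal{M}$.

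The main obstacle I anticipate is \textbf{Step (2)}: controlling how the reward perturbation propagates through the Bellman backup so that one can simultaneously (a) guarantee $\pi^\star$ becomes strictly suboptimal in $M'$ at a \emph{reachable} state, and (b) ensure the divergence between the two interaction laws is governed only by visits to the ``blind'' pair $(\bar s,\bar a)$ and not by any other coupling through the shared dynamics. The subtlety is that changing $\theta_{h_0}$ changes $Q^\star_{h_0}$ everywhere, which back-propagates to change $V^\star_{h}$ for $h<h_0$; one must check that for $\epsilon$ small the induced changes never flip the optimal action at any state on the reachable optimal support (which is where the orthogonality of $v$ and the positive gap $\Delta_{\min}$ do the work), so that the divergence decomposition genuinely isolates the single informative pair. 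Handling the case distinction — where exactly the first ``flip'' can occur and showing it can be forced at a reachable state — is the technical heart; everything else is the now-routine Bretagnolle–Huber change-of-measure machinery.
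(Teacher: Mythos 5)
Your overall strategy---change of measure to a reward-perturbed instance in a direction the optimal features do not span, followed by Bretagnolle--Huber---is exactly the paper's strategy (the paper packages it as Lemma~\ref{lemma:bound-psi} plus a closed-form "hardest alternative," but the perturbation $\theta_{h_0}'=\theta_{h_0}+\epsilon v$ you propose is precisely what that closed form produces). However, there is a genuine gap in your Step~(1). Your parenthetical claim that one can always find a \emph{reachable} pair $(\bar s,\bar a)$ with $\phi_{h_0}(\bar s,\bar a)^\transp v\neq 0$ \emph{and} $\bar a\neq\pi^\star_{h_0}(\bar s)$, "else $v$ would be in the optimal span," is false. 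The optimal span in Asm.~\ref{asm:feature.structure} is $\mathrm{span}\{\phi^\star_{h_0}(s):\rho^\star_{h_0}(s)>0\}$---it only contains optimal-action features at states the optimal policy actually visits. So \hls{} can fail solely because of a feature $\phi_{h_0}(\bar s,\pi^\star_{h_0}(\bar s))$ at a state $\bar s$ that is reachable by some policy but has $\rho^\star_{h_0}(\bar s)=0$; in that case every reachable pair with $\phi_{h_0}(s,a)^\transp v\neq 0$ may have $a=\pi^\star_{h_0}(s)$, and your "flip the action at $\bar s$" construction has nothing to flip. The paper handles this as a separate case (its "Case 2"): one takes a policy that \emph{detours} to reach $\bar s$ at stage $h_0$ and then plays the optimal action there, and chooses $\epsilon$ large enough that the reward bonus $\epsilon\,\phi_{h_0}(\bar s,\bar a)^\transp v$ outweighs the cost of the detour; the uniqueness of the optimal occupancy measure guarantees this detour policy is strictly suboptimal in $M$, so its visitation count is bounded under constant regret and the KL stays bounded. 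Your argument as written does not cover this case, and the case is not vacuous.

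Two smaller points. First, $\epsilon$ cannot be taken "small": to make the alternative policy strictly better in $M'$ you need $\epsilon$ to exceed a problem-dependent constant (the relevant gap divided by $|\phi_{h_0}(\bar s,\bar a)^\transp v|$, or the detour cost in the case above). This does not hurt the KL bound---the divergence is $\tfrac{\epsilon^2}{2}\sum_{s,a}\mathbb{E}_M[N_{h_0}^K(s,a)](\phi_{h_0}(s,a)^\transp v)^2$, and every term with a nonzero coefficient corresponds either to a suboptimal action or to a state with $\rho^\star_{h_0}=0$, whose expected counts are $O(1)$ under constant regret---but you should not lean on a "first-order/small-$\epsilon$" picture of how $M'$ differs from $M$. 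Second, the concern you raise about the perturbation back-propagating through the Bellman recursion is a non-issue: you do not need to identify the optimal policy of $M'$, only to exhibit \emph{one} policy that beats $\pi^\star_M$ in $M'$ by a constant margin, and since only $\theta_{h_0}$ changes and $v\perp\phi^\star_{h_0}(s)$ on $\mathrm{supp}(\rho^\star_{h_0})$, the value of $\pi^\star_M$ itself is unchanged in $M'$. With the missing case supplied, your proof goes through and coincides with the paper's.
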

Thm.~\ref{thm:necessity} states that in MDPs with linear reward, the \hls{} condition is \emph{necessary} to achieve constant regret for any ``provably efficient'' algorithm. 
Notably, this result does not put any restriction on the transition model, which can be arbitrary and known. This means that as soon as the reward is linear and unknown to the learning agent, the \hls{} condition is necessary to attain constant regret. 
This result applies to low-rank MDPs, linear-mixture MDPs with unknown linear rewards, and MDPs with Bellman closure (Bellman closure implies linear rewards, see Prop. 2 by~\citet{Zanette2020low}).

\textbf{Proof sketch of Theorem \ref{thm:necessity}.} 
The key intuition behind the proof is that an algorithm achieving a constant regret must select sub-optimal actions only a finite number of times. Nonetheless, in order to learn the optimal policy, all features associated with suboptimal actions should be explored enough. Since \hls{} does not hold, this cannot happen by executing the optimal policy alone and requires selecting suboptimal policies for long enough, thus preventing constant regret.

More formally, we call an algorithm ``provably efficient'' if it suffers sub-linear regret on the given class of MDPs $\mathcal{M}$. Formally, we use the following definition, which is standard to prove problem-dependent lower bounds \citep[e.g.,][]{Simchowitz2019gap,Xu2021finegrained}.
\begin{definition}[$\alpha$-consistency]\label{def:good-alg-prob}
Let $\alpha \in (0,1)$, then an algorithm $\mathsf{A}$ is $\alpha$-consistent on a class of MDPs $\mathcal{M}$ if, for each $M\in\mathcal{M}$  and $K\geq 1$, there exists a constant $c_M$ (independent from $K$) such that $\mathbb{E}_{M}^{\mathsf{A}}\left[R(K) \right] \leq c_M K^\alpha$.\footnote{In practice, all existing ``provably-efficient'' algorithms we are interested in are included in this class and $c_M$ is polynomial in all problem-dependent quantities (e.g., $d$, $H$). For instance, LSVI-UCB and ELEANOR are $\nicefrac{1}{2}$-consistent on the class of low-rank and Bellman-closure MDPs, where they enjoy worst-case $\wt{O}(\sqrt{K})$ regret bounds (with $c_M$ being $O(\sqrt{d^3H^4})$ and $O(\sqrt{d^2H^4})$, respectively).}
\end{definition}

The following lemma is the key result for proving Thm.~\ref{thm:necessity} and it might be of independent interest. It shows that any consistent algorithm must explore sufficiently all relevant directions in the feature space to discriminate any sub-optimal policy from the optimal one. The proof (reported in App.~\ref{app:necessary}) leverages techniques for deriving asymptotic lower bounds for linear contextual bandits \citep[e.g.,][]{lattimore2017end,hao2020adaptive,tirinzoni2020asymptotically}.
\begin{lemma}\label{lemma:bound-psi}
    Let $M,\mathcal{M}$ be as in Thm.~\ref{thm:necessity} and $\mathsf{A}$ be any $\alpha$-consistent algorithm on $\mathcal{M}$. For any $\pi\in\Pi$, denote by $\Psi_h^\pi := \sum_{s,a}\rho_h^\pi(s,a)\phi_h(s,a)$ its expected features at stage $h$ and $\Delta(\pi) := V_1^\star - V_1^\pi$ its sub-optimality gap. Then, for any $\pi\in\Pi$ with $\Delta(\pi) > 0$ and $h\in[H]$,
    \begin{align*}
\limsup_{K\rightarrow\infty} \log(K)\|\Psi_h^\pi - \Psi_h^{\star}\|_{\mathbb{E}_{{M}}^{\mathsf{A}}[\Lambda_h^K]^{-1}}^2 \leq \frac{\Delta(\pi)^2}{2(1-\alpha)},
\end{align*}
where $\Psi_h^\star := \Psi_h^{\pi^\star}$ and $\Lambda_h^K := \sum_{k=1}^K \phi_h(s_h^k,a_h^k)\phi_h(s_h^k,a_h^k)^\transp$.
\end{lemma}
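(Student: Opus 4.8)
The plan is to adapt the classical change-of-measure argument used for asymptotic lower bounds in linear contextual bandits. Fix a policy $\pi$ with $\Delta(\pi)>0$ and a stage $h\in[H]$. We want to show that if the algorithm's cumulative design matrix $\mathbb{E}_M^{\mathsf{A}}[\Lambda_h^K]$ does not accumulate enough information in the direction $\Psi_h^\pi-\Psi_h^\star$, then one can construct an alternative reward parameter $\theta'$ (keeping the dynamics fixed) under which $\pi$ becomes optimal, yet the algorithm cannot statistically distinguish $M$ from this alternative MDP $M'$ — forcing it to incur linear regret in $M'$ and contradicting $\alpha$-consistency. First I would write $\Delta(\pi) = V_1^\star - V_1^\pi = \sum_{h=1}^H \langle \Psi_h^{\star} - \Psi_h^{\pi}, \theta_h\rangle$, which holds because rewards are linear and the value of any policy is the sum over stages of its expected rewards (this uses only linearity of $r_h$, not Bellman closure or low-rank, so it is valid for the whole class $\mathcal{M}$). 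The relevant ``confusing'' perturbation is a shift of $\theta_h$ along the vector $\Psi_h^{\pi}-\Psi_h^{\star}$, scaled so that the value gap of $\pi$ under the new parameter becomes slightly negative while the value gap of $\pi^\star$ stays bounded; one must also check that no \emph{other} policy's value is perturbed enough to overtake $\pi$, which follows by a Cauchy–Schwarz bound tying every policy's perturbation to the same direction.

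The key quantitative step is the Bretagnolle–Huber / divergence decomposition. For a perturbation $\theta_h' = \theta_h + \epsilon v$ with $v$ proportional to $\Psi_h^\pi - \Psi_h^\star$, the KL divergence between the trajectory distributions of $M$ and $M'$ over $K$ episodes is, because only the stage-$h$ reward means change and the noise is $\mathcal{N}(0,1)$,
\[
\mathrm{KL}\big(\Prob_M^{\mathsf{A}} \,\|\, \Prob_{M'}^{\mathsf{A}}\big) = \tfrac{1}{2}\,\mathbb{E}_M^{\mathsf{A}}\!\Big[\textstyle\sum_{k=1}^K \big(\phi_h(s_h^k,a_h^k)^\transp (\theta_h'-\theta_h)\big)^2\Big] = \tfrac{\epsilon^2}{2}\, \|v\|_{\mathbb{E}_M^{\mathsf{A}}[\Lambda_h^K]}^2 .
\]
Then the standard regret-based lower bound on KL between a good and a confusing instance — $\mathrm{KL}(\Prob_M^{\mathsf{A}}\|\Prob_{M'}^{\mathsf{A}}) \gtrsim (1-\alpha)\log K$ up to lower-order terms, obtained by combining the consistency bounds $\mathbb{E}_{M}^{\mathsf{A}}[R(K)], \mathbb{E}_{M'}^{\mathsf{A}}[R(K)] \le c K^\alpha$ with the fact that $\pi$ is suboptimal by $\Omega(1)$ in $M$ and optimal in $M'$ — yields $\tfrac{\epsilon^2}{2}\|v\|^2_{\mathbb{E}_M^{\mathsf{A}}[\Lambda_h^K]} \geq (1-\alpha)\log K (1+o(1))$. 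Taking $v = \Psi_h^\pi-\Psi_h^\star$ and optimizing the scaling $\epsilon$ against the constraint ``$\pi$ must become optimal under $\theta_h'$'' (which forces $\epsilon$ roughly proportional to $\Delta(\pi)/\|\Psi_h^\pi-\Psi_h^\star\|^2$ in the appropriate norm) and rearranging gives precisely
\[
\limsup_{K\to\infty}\ \log(K)\,\big\|\Psi_h^\pi-\Psi_h^\star\big\|^2_{\mathbb{E}_M^{\mathsf{A}}[\Lambda_h^K]^{-1}} \ \le\ \frac{\Delta(\pi)^2}{2(1-\alpha)} .
\]
Here one passes from the $\mathbb{E}_M^{\mathsf{A}}[\Lambda_h^K]$-norm on $v$ to the inverse-norm on $\Psi_h^\pi-\Psi_h^\star$ by choosing $v$ in the optimal (worst-case) direction, exactly as in the linear bandit lower bounds; a mild technical point is handling the pseudo-inverse when $\mathbb{E}_M^{\mathsf{A}}[\Lambda_h^K]$ is rank-deficient, which is resolved by restricting attention to the span of reachable features (outside that span the perturbation has no effect on any observation and the inequality is vacuous).

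The main obstacle I anticipate is the \emph{multi-stage coupling}: unlike in bandits, perturbing $\theta_h$ changes the optimal policy's behavior at \emph{all} stages through the Bellman backup, so one must argue carefully that (i) the confusing instance $M'$ still lies in $\mathcal{M}$ (same dynamics, only $\theta$ changes — this is by construction), (ii) the policy $\pi$ really does become (near-)optimal in $M'$ and no third policy sneaks ahead, and (iii) the divergence is controlled by the stage-$h$ rewards alone. Point (iii) is clean because the dynamics are identical and the noise is only on rewards, so the chain rule for KL over the trajectory collapses to a sum of stagewise Gaussian KLs and only stage $h$ is perturbed. Points (i)–(ii) require choosing the perturbation direction to be exactly $\Psi_h^\pi - \Psi_h^\star$ and the magnitude just large enough that $\langle \Psi_h^\pi - \Psi_h^\star,\theta_h'\rangle$ flips sign, while simultaneously using $\langle \Psi_h^{\pi''} - \Psi_h^\star, \theta_h'-\theta_h\rangle \le \epsilon \|\Psi_h^{\pi''}-\Psi_h^\star\|_2 \|\Psi_h^{\pi}-\Psi_h^\star\|_2$ and the minimum-gap-type slack to ensure every other $\pi''$ remains suboptimal in $M'$; choosing $\epsilon$ small enough (it shrinks like $1/\log K \to 0$ once we plug in the divergence constraint) makes this automatic in the limit, so the construction is consistent asymptotically. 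The rest is bookkeeping on the $o(1)$ terms in the divergence-to-regret inequality.
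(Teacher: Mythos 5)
Your proposal follows essentially the same route as the paper's proof: a change of measure to an alternative MDP with identical dynamics and a perturbed stage-$h$ reward parameter, the Gaussian KL decomposition $\mathrm{KL}=\tfrac12\sum_h\|\theta_h-\wt\theta_h\|^2_{\mathbb{E}_M^{\mathsf{A}}[\Lambda_h^K]}$, a Bretagnolle--Huber plus $\alpha$-consistency argument giving $\mathrm{KL}\gtrsim(1-\alpha)\log K$, and a Lagrangian optimization of the perturbation subject to the value-reversal constraint, which yields exactly $(\Delta(\pi)+\epsilon)^2/\big(2\|\Psi_h^\pi-\Psi_h^\star\|^2_{\mathbb{E}_M^{\mathsf{A}}[\Lambda_h^K]^{-1}}\big)$. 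The only notable difference is that your worry about ``no third policy sneaking ahead'' is unnecessary: the paper's constraint is merely $\wt V_1^\pi\ge\wt V_1^{\pi^\star}+\epsilon$, and its Bretagnolle--Huber event counts pulls of the $M$-optimal policies (all of which are $\epsilon$-suboptimal in the alternative because they share the same expected features $\Psi_h^\star$), so it is irrelevant which policy is actually optimal in the confusing instance.
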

We now proceed by contradiction: suppose that $\mathsf{A}$ suffers constant expected regret on $M$ even though the MDP does not satisfy the \hls{} condition. Then, since $\mathsf{A}$ plays sub-optimal actions only a finite number of times, it is possible to show that, for each $h\in[H]$, there exists a positive constant $\lambda_M > 0$ such that $\mathbb{E}_{{M}}^{\mathsf{A}}[\Lambda_h^K] \preceq \Lambda_h^\star + \lambda_M I$, where $\Lambda_h^\star := K\sum_{s : \rho_h^\star(s) > 0} \phi^\star_h(s)\phi_h^\star(s)^\transp$. Furthermore, since \hls{} does not hold, there exists a stage $h\in[H]$ and a sub-optimal policy $\pi$ (i.e., with $\Delta(\pi) > 0$) such that the vector $\Psi_h^\pi - \Psi_h^\star$ does not belong to $\mathrm{span}\left\{\phi_h^\star(s) | \rho^{\star}_h(s) > 0\right\}$. Then, since such space is exactly the one spanned by all the eigenvectors of $\Lambda_h^\star$ associated with a non-zero eigenvalue, there exists a positive constant $\epsilon > 0$ (independent of $K$) such that $\|\Psi_h^\pi - \Psi_h^{\star}\|_{(\Lambda_h^\star + \lambda_M I)^{-1}}^2 \geq \epsilon^2 / \lambda_M$. 
That is, even if the (positive) eigenvalues of $\Lambda_h^\star$ grow with $K$, the weighted norm of $\Psi_h^\pi - \Psi_h^\star$, which is not in the span of the eigenvectors of such matrix, cannot decrease below a positive constant. Combining these steps with Lem.~\ref{lemma:bound-psi}, we obtain
\begin{align*}
 \frac{\Delta(\pi)^2}{2(1-\alpha)} 
 \geq \limsup_{K\rightarrow\infty} \log(K)\|\Psi_h^\pi - \Psi_h^{\star}\|_{(\Lambda_h^\star + \eta I)^{-1}}^2 \geq \frac{\epsilon^2}{\lambda_M}\limsup_{K\rightarrow\infty} \log(K),
\end{align*}
which is clearly a contradiction. Therefore, $\mathsf{A}$ cannot suffer constant regret in $M$ while suffering sub-linear regret in all other MDPs in $\mathcal{M}$, and our claim follows.

\subsection{\hls{} is Sufficient for Constant Regret}\label{sec:sufficient}
While the \hls condition is necessary for achieving constant regret in a large class of MDPs, in the following, we prove that ELEANOR and LSVI-UCB attain constant regret when the \hls{} assumption holds, thus implying that it is a sufficient condition in MDPs with low-rank and Bellman closure structure.

\begin{theorem}\label{thm:const.ele}
    Consider an MDP and a representation $\{\phi_h\}_{h\in[H]}$ satisfying the Bellman closure (Asm.~\ref{asm:zero.ibe}) and \hls assumptions (Asm.~\ref{asm:feature.structure}). 
    Under Asm.~\ref{asm:gap}, with probability at least $1-3\delta$, ELEANOR\footnote{ELEANOR and LSVI-UCB are defined up to a regularization parameter $\lambda$ that we set to $\lambda=1$.} suffers a \emph{constant} regret 
    \begin{align*}
        R(K) \lesssim H^{3/2}d\sqrt{\wb\tau\log\frac{\wb\tau}{\delta}}
        ,
    \end{align*}
    where $\wb{\tau} = H\wb\kappa$ and $\wb\kappa$ is the last episode ELEANOR suffers a non-zero regret. Furthermore, $\wb\kappa \lesssim \max\Big\{\frac{d^2H^4}{\lambda_+^{2}},\frac{dH^4}{\Delta_{\min}^{2}\lambda_+^{3}} \Big\}$\footnote{Here $\lesssim$ hides logarithmic terms in $\lambda_+,H$, and $d$, but not in $K$.},
 where $\lambda_+ := \min_h \{\lambda_h^+\} > 0$.
\end{theorem}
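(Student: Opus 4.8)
The plan is to bound the regret of ELEANOR by a ``burn-in'' argument: show that after a finite number of episodes, the confidence intervals used by the algorithm become tight enough along all relevant feature directions that the greedy (optimistic) policy must coincide with $\pi^\star$, after which no further regret is incurred. First I would recall the structure of ELEANOR's analysis from~\citet{Zanette2020low}: under Bellman closure, at each episode $k$ and stage $h$ the algorithm maintains a least-squares estimate with a design matrix $\Lambda_h^k = \lambda I + \sum_{j<k}\phi_h(s_h^j,a_h^j)\phi_h(s_h^j,a_h^j)^\transp$, and the per-episode regret on a good event is controlled by the sum over $h$ of the bonus $\|\phi_h(s_h^k,a_h^k)\|_{(\Lambda_h^k)^{-1}}$ times a $\wt O(\sqrt{d}H)$-type confidence width. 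The key departure from the minimax analysis is that, once the algorithm has played only optimal actions for a stretch of episodes, the visited features $\phi_h(s_h^k,a_h^k)=\phi_h^\star(s_h^k)$ are i.i.d.\ draws from $\rho_h^\star$ (up to the episode-dependent initial state, which I would handle by noting the occupancy $\rho_h^\star$ still has the same covariance $\Lambda_h$, or by a union over initial states), so by a matrix Chernoff / Azuma argument the empirical design matrix concentrates: after $\wt\Omega(d/\lambda_+^2)$ such episodes, $\Lambda_h^k \succeq c\, k\, \lambda_+ I$ restricted to the span of optimal features, hence $\|\phi_h^\star(s)\|_{(\Lambda_h^k)^{-1}}^2 \lesssim 1/(k\lambda_+)$ uniformly.

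The second ingredient is to upgrade this to \emph{all} reachable features, not just optimal ones — this is exactly where the \hls{} assumption enters. By Asm.~\ref{asm:feature.structure}, $\mathrm{span}\{\phi_h(s,a)\}$ over all reachable $(s,a)$ equals $\mathrm{span}\{\phi_h^\star(s): \rho_h^\star(s)>0\}$, which is the span of the eigenvectors of $\Lambda_h$ with nonzero eigenvalue. Therefore once $\Lambda_h^k$ is well-conditioned ($\succeq c\,k\,\lambda_+ I$) on that span, we get $\|\phi_h(s,a)\|_{(\Lambda_h^k)^{-1}}^2 \lesssim 1/(k\lambda_+)$ for \emph{every} reachable $(s,a)$, with a hidden constant depending on how the reachable features decompose in the $\Lambda_h$-eigenbasis (bounded since $\|\phi_h\|\le 1$ and norms are finite). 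Plugging this into ELEANOR's optimism/pessimism sandwich, the gap between the optimistic $Q$-value and the true $Q^\star$ at any $(s,a,h)$ is at most $\wt O(\sqrt{d}H \cdot \sqrt{1/(k\lambda_+)})$, summed over $H$ stages gives $\wt O(\sqrt{d}H^2/\sqrt{k\lambda_+})$; once this is below $\Delta_{\min}/(2H)$ — i.e., for $k \gtrsim d H^4/(\Delta_{\min}^2\lambda_+^3)$, absorbing the extra $\lambda_+$ from the eigenbasis change-of-norm and an extra $H$ from propagating through stages — the greedy policy w.r.t.\ the optimistic values is forced to be $\pi^\star$ in every reachable state, so regret stops. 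Combining the two thresholds gives $\wb\kappa \lesssim \max\{d^2H^4/\lambda_+^2,\ dH^4/(\Delta_{\min}^2\lambda_+^3)\}$, and then $\wb\tau = H\wb\kappa$; the final bound $R(K)\lesssim H^{3/2}d\sqrt{\wb\tau\log(\wb\tau/\delta)}$ follows by bounding the regret accumulated over the first $\wb\kappa$ episodes using the standard elliptical-potential lemma (the $\sum_k \|\phi_h(s_h^k,a_h^k)\|_{(\Lambda_h^k)^{-1}} \le \sqrt{\wb\kappa \cdot d\log(\cdot)}$ bound), each episode contributing at most $\wt O(\sqrt{d}H\cdot\text{(potential)})$.

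\textbf{Main obstacle.} The delicate point is the bootstrapping between ``plays only optimal actions'' and ``design matrix is well-conditioned.'' A priori the algorithm need not play $\pi^\star$ before its confidence sets have shrunk, but the confidence sets only shrink along directions that have been explored — and exploration under optimism might pick suboptimal actions precisely when the estimates are still loose. The resolution I would pursue is an inductive/self-bounding argument over episodes: define $\wb\kappa$ as the last episode with nonzero regret and show that \emph{any} episode $k$ with nonzero regret forces a lower bound on the least-squares error, which by a martingale concentration bound on $\Lambda_h^k$ (valid regardless of the policy, since the features are always norm-bounded) can hold only for $k$ below the stated threshold — this is the same flavor of argument used in the logarithmic-regret analysis of~\citet{He2020loglin}, adapted so that the relevant eigenvalue is the \emph{positive} part $\lambda_+$ rather than a global minimum eigenvalue. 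Handling the episode-dependent initial-state distribution (so that the visited optimal features are not quite i.i.d.) and making sure the matrix-concentration constants only involve $\lambda_+$ and not the full spectrum are the remaining technical hurdles, but they are routine given the reachability-vs-span equivalence supplied by \hls{}.
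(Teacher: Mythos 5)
Your overall architecture matches the paper's: an anytime analysis of ELEANOR's bonuses, a design-matrix lower bound along the span of optimal features, the \hls{} equivalence between that span and the span of all reachable features, and the gap assumption forcing zero regret after a finite $\wb\kappa$, with $R(K)=R(\wb\kappa)$ bounded by the elliptical-potential argument. You have also correctly located the central difficulty (the circularity between ``plays optimally'' and ``design matrix is well-conditioned''). However, your proposed resolution of that circularity does not go through as stated. A martingale concentration bound on $\Lambda_h^k$ that is ``valid regardless of the policy'' only controls the deviation of the empirical design matrix from its conditional expectation; it cannot by itself guarantee that $\Lambda_h^k$ grows linearly along the optimal-feature directions, because if the algorithm plays suboptimally the conditional expectation itself need not contain $k\Lambda_h^\star$. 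The missing ingredient is quantitative and unconditional: one first proves an anytime worst-case bound $R(k)\le g(k)=\wt{O}(\sqrt{k})$ for ELEANOR (this needs only Bellman closure and global optimism, no \hls{}), and then observes that the number of episodes in which the trajectory deviates from $\pi^\star$ at any stage before $h$ is at most $R(k)/\Delta_{\min}\le g(k)/\Delta_{\min}=\wt{O}(\sqrt{k})$. Restricting the sum defining $\Lambda_h^k$ to the remaining episodes --- on which the stage-$h$ features are genuinely distributed as $\phi_h^\star(s)$ with $s\sim\rho_h^\star$ --- and applying matrix Azuma yields $\Lambda_h^{k+1}\succeq k\Lambda_h^\star+\lambda I-\Delta_{\min}^{-1}g(k)I-\wt{O}(\sqrt{k})I$, which is the paper's key lemma and is what makes the penalty sublinear relative to the linear growth $k\lambda_h^+$. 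Your ``nonzero regret at episode $k$ forces a large least-squares error, which can only happen for small $k$'' framing still presupposes the very design-matrix growth you are trying to establish.

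A second, smaller gap: you invoke an ``optimism/pessimism sandwich'' relating $\overline{Q}_h^k$ to $Q_h^\star$ at every $(s,a,h)$, but ELEANOR is only guaranteed to be optimistic at the first stage, not stage-wise (this is precisely the difference from LSVI-UCB that the paper emphasizes). To conclude $\Delta_h(s,\pi_h^k(s))=0$ for $h>1$ you need the weaker ``almost local optimism'' property $\overline{Q}_h^k(s,a)+2\beta_k\norm{\phi_h(s,a)}_{(\Lambda_h^k)^{-1}}\ge Q_h^\star(s,a)$, which follows from the fact that the true Bellman-closure parameters $\theta_h^\star$ are feasible for ELEANOR's optimization program and a triangle inequality on the program's constraints; the resulting extra term $2\beta_k\norm{\phi_h^\star(s)}_{(\Lambda_h^k)^{-1}}$ must then be carried through the per-stage gap bound. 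This does not change the rates (it only affects constants), but without it the claim that the greedy policy at stage $h>1$ is forced to be optimal is unsupported for ELEANOR.
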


Alternatively, we can prove the following result for LSVI-UCB.

\begin{theorem}\label{thm:const.lsvi}
    Consider an MDP and a representation $\{\phi_h\}_{h\in[H]}$ satisfying the low-rank (Asm.~\ref{asm:lowrank}) and \hls{} assumptions (Asm.~\ref{asm:feature.structure}). 
    Under Asm.~\ref{asm:gap}, with probability $1-3\delta$, LSVI-UCB suffers a \emph{constant} regret
    \begin{align*}
        R(K)\lesssim \frac{d^3 H^5}{\Delta_{\min}} \log \big(dH^2 \wb\kappa / \delta\big),
    \end{align*}
    where $\wb\kappa$ is the last episode LSVI-UCB suffers a non-zero regret and is upper-bounded as $\wb{\kappa} \lesssim \max\Big\{\frac{d^3H^4}{\lambda_+^{2}},\frac{d^2H^4}{\Delta_{\min}^{2}\lambda_+^{3}} \Big\}$, where $\lambda_+ := \min_h \{\lambda_h^+\} > 0$.
\end{theorem}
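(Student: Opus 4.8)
The plan is to mirror the structure of the proof of Theorem \ref{thm:const.ele} (the ELEANOR case) but adapted to the mechanics of LSVI-UCB under the low-rank assumption, leveraging the gap assumption (Asm.~\ref{asm:gap}) and the He--Zanette-style logarithmic regret analysis \citep{He2020loglin}. The core idea is: once the algorithm has accumulated enough information along the optimal trajectories, the confidence ellipsoids shrink enough that every suboptimal action at every reachable state is excluded by the gap $\Delta_{\min}$, after which LSVI-UCB plays only optimal actions and incurs zero regret. So the argument has two pieces — (i) bound the number of episodes $\wb\kappa$ with nonzero regret, and (ii) bound the regret accumulated during those $\wb\kappa$ episodes — and the whole point of \hls{} is that it makes $\wb\kappa$ a constant rather than growing with $K$.

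First I would set up the standard LSVI-UCB optimism/concentration machinery: with probability $1-3\delta$, the least-squares value estimates $\wh Q_h^k$ satisfy $|\wh Q_h^k(s,a) - (L_h \wh Q_{h+1}^k)(s,a)| \le \beta_k \|\phi_h(s,a)\|_{(\Lambda_h^{k})^{-1}}$ with the usual bonus radius $\beta_k = \wt O(dH\sqrt{\log(dH\wb\kappa/\delta)})$, and hence $\wh V_1^k(s_1^k) \ge V_1^\star(s_1^k)$ (optimism) and the per-episode regret is controlled by the sum of bonuses along the played trajectory. The key structural step is to show that the design matrices $\Lambda_h^k = \lambda I + \sum_{j<k}\phi_h(s_h^j,a_h^j)\phi_h(s_h^j,a_h^j)^\transp$ grow linearly in the directions spanned by the optimal features: as long as the algorithm plays the optimal policy, it visits states $s \sim \rho_h^\star$ and so $\EV[\phi_h^\star(s)\phi_h^\star(s)^\transp] = \Lambda_h \succeq \lambda_h^+ \Pi_h$ on the span of reachable optimal features, and a matrix-Azuma/Freedman argument gives $\Lambda_h^k \succeq c\, \lambda_h^+ k \cdot \Pi_h + \lambda I$ on that subspace once $k$ is large enough (say $k \gtrsim d H^2 \log(dH\wb\kappa/\delta)/(\lambda_+)^2$, absorbing the failure probability of the matrix concentration). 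Now invoke \hls{}: under Asm.~\ref{asm:feature.structure} the span of optimal features equals the span of \emph{all} reachable features at stage $h$, so this lower bound actually holds on the whole relevant subspace, meaning $\|\phi_h(s,a)\|_{(\Lambda_h^k)^{-1}}^2 \lesssim 1/(\lambda_+ k)$ for every reachable $(s,a)$. Plugging into the bonus, the confidence width at any reachable $(s,a)$ at episode $k$ is $\lesssim \beta_k/\sqrt{\lambda_+ k}$, which falls below $\Delta_{\min}/(2H)$ — enough, by the standard gap-elimination argument (propagating $H$ stages of value error), to guarantee $\wh Q_h^k(s,\pi_h^\star(s)) > \wh Q_h^k(s,a)$ for all suboptimal $a$ — once $k \gtrsim \beta_k^2 H^2/(\lambda_+ \Delta_{\min}^2) = \wt O(d^2 H^4/(\lambda_+ \Delta_{\min}^2))\cdot \tfrac{1}{\lambda_+^2}$-ish; combining the two thresholds yields $\wb\kappa \lesssim \max\{d^3 H^4/\lambda_+^2,\ d^2 H^4/(\Delta_{\min}^2 \lambda_+^3)\}$ as claimed (the extra $d$ and $\lambda_+$ powers come from the bonus radius and from needing the matrix lower bound to hold simultaneously across the subspace). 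A subtle point: the optimism argument requires that the algorithm has been playing optimally up to episode $k$ for the design matrix to have grown along $\rho_h^\star$ — this needs an inductive "once optimal, stays optimal" argument, i.e. one shows by induction on $k$ that if no suboptimal action was played before $k$ and $k$ exceeds the threshold, then no suboptimal action is played at $k$ either, closing the induction and pinning down $\wb\kappa$.

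Second, for the regret during the first $\wb\kappa$ episodes: since each such episode contributes at most $H$ to the regret naively, but more tightly $R(K) \le \sum_{k\le \wb\kappa}\sum_h 2\beta_k \|\phi_h(s_h^k,a_h^k)\|_{(\Lambda_h^k)^{-1}}$, applying Cauchy--Schwarz and the elliptical potential lemma over $\wb\kappa$ episodes gives $R(K) \lesssim \beta_{\wb\kappa} H \sqrt{d \wb\kappa \log(\wb\kappa)}$; substituting $\beta_{\wb\kappa} = \wt O(dH\sqrt{\log})$ and the bound on $\wb\kappa$, the dominant term works out to $\tfrac{d^3 H^5}{\Delta_{\min}}\log(dH^2\wb\kappa/\delta)$, matching the statement (one checks that the $d^3H^4/\lambda_+^2$ branch of $\wb\kappa$ contributes a lower-order term once the $\lambda_+$ cancellations are tracked, so the $\Delta_{\min}$-branch dominates). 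The main obstacle — and where I'd spend the most care — is the matrix concentration step showing $\Lambda_h^k \succeq c\lambda_+ k \Pi_h$: the visited states form a martingale (the occupancy $\rho_h^\star$ is only realized in expectation), the noise-free transition still makes $\phi_h(s_h^k,a_h^k)$ a random vector, and one must control the smallest \emph{positive} eigenvalue uniformly, which requires either a dimension-free matrix Freedman inequality or a careful union bound over a net of the relevant subspace; handling the regularization $\lambda I$ and the non-reachable directions cleanly (so that inverting $\Lambda_h^k$ restricted to the reachable span is legitimate) is the fiddly part. Everything else is a bookkeeping adaptation of \citet{Jin2020linear} and \citet{He2020loglin}.
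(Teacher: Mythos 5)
Your overall architecture (shrinkage of confidence widths on the span of optimal features, gap elimination after a finite $\wb\kappa$, then bounding the regret over the first $\wb\kappa$ episodes) is the right one, but two of your key steps do not go through as written. First, the design-matrix lower bound. You correctly identify that $\Lambda_h^k \succeq c\lambda_h^+ k\,\Pi_h$ requires the visited states to be (approximately) distributed as $\rho_h^\star$, but your proposed fix --- an inductive ``once optimal, stays optimal'' argument --- cannot be closed: during the first $\wb\kappa$ episodes the algorithm \emph{does} play suboptimal actions (that is the whole exploration phase), so the induction hypothesis ``no suboptimal action was played before $k$'' fails at the base case and the design matrix need not have grown along $\rho_h^\star$ by the time $k$ reaches your threshold. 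The paper's resolution (Lemma~\ref{lem:loewner}) is different and is the crux of the whole proof: it first establishes an \emph{anytime} regret bound $R(k)\le g(k)$ (worst-case $\wt{O}(\sqrt{k})$, or the logarithmic bound of \citet{He2020loglin}), and then uses the minimum-gap assumption to convert it into a bound $g(k)/\Delta_{\min}$ on the expected number of episodes in which the trajectory deviates from the optimal one before stage $h$. This yields $\Lambda_h^{k+1}\succeq k\Lambda_h^\star+\lambda I-\Delta_{\min}^{-1}g(k)I-\wt{O}(\sqrt{k})I$ \emph{unconditionally}, with the deviation term sublinear, so no induction on optimal play is needed. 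Without this (or an equivalent) mechanism your threshold $\wb\kappa$ is not actually pinned down.

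Second, the final regret accounting does not produce the claimed bound. Applying Cauchy--Schwarz and the elliptical potential lemma over the first $\wb\kappa$ episodes gives $R(K)\lesssim\beta_{\wb\kappa}H\sqrt{d\,\wb\kappa\log\wb\kappa}$; substituting $\wb\kappa\lesssim d^2H^4/(\Delta_{\min}^2\lambda_+^3)$ yields a bound of order $d^{5/2}H^4/(\Delta_{\min}\lambda_+^{3/2})$ up to logarithms --- the $\lambda_+^{-3/2}$ factor does \emph{not} cancel, contrary to your claim. (This is exactly the shape of the paper's ELEANOR bound in Thm.~\ref{thm:const.ele}, which is obtained by this worst-case route.) To get $\frac{d^3H^5}{\Delta_{\min}}\log(dH^2\wb\kappa/\delta)$ with $\lambda_+$ appearing only inside the logarithm, the paper instead evaluates the problem-dependent \emph{logarithmic} regret bound $g(k)\propto\frac{d^3H^5}{\Delta_{\min}}\log(k)$ of \citet{He2020loglin} at $k=\wb\kappa$, so that $\wb\kappa$ (and hence $\lambda_+$) enters only through a $\log$. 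You need to replace your elliptical-potential step by $R(K)=R(\wb\kappa)\le g(\wb\kappa)$ with $g$ the logarithmic bound. The remaining ingredients of your sketch (optimism, the role of \hls{} in equating the span of optimal features with the span of all reachable features, the $\wt{O}(k^{-1/2})$ width via the minimum positive eigenvalue, and the matrix-Azuma concentration) match the paper's Lemmas~\ref{lem:gapbound}, \ref{lem:key} and Theorem~\ref{thm:general.const}.
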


In both cases, $\wb{\kappa}$ is polynomial in all the problem-dependent terms and independent of the number of episodes $K$ (see Lem.~\ref{lem:kappabound_ele} and~\ref{lem:kappabound_lsvi}). As a result, ELEANOR and LSVI-UCB achieves a constant regret that only depends on ``static'' MDP and representation characteristics, thus indicating that after a finite time the agent only executes the optimal policy. Notice also that the bounds should be read as minimum between the constant regret and the minimax regret $O(\sqrt{K})$, which may be tighter for small $K$.
The main difference between the two previous bounds is that for ELEANOR we build on the anytime minimax regret bound, while for LSVI-UCB, we derive a more refined constant-regret guarantee by building on its problem-dependent bound of~\citet{He2020loglin}. Unfortunately, limiting factor for applying the analysis in~\citep{He2020loglin} seems to be the fact that ELEANOR is not optimistic at each stage $h$ but rather only at the first stage. As such, whether ELEANOR can achieve a problem-dependent logarithmic regret based on local gaps that can be leverage to improve our analysis is an open question in the literature.

\paragraph{Combined proof sketch of Thm.~\ref{thm:const.ele} and Thm.~\ref{thm:const.lsvi}.}
We provide a general proof sketch that can be instantiated to both ELEANOR and LSVI-UCB. The purpose is to illustrate what properties an algorithm must have to exploit good representations, and how this leads to constant regret. Consider a learnable feature map $\{\phi_h\}_{h\in[H]}$ and an algorithm with the following properties:
\begin{enumerate}[label=(\alph*),noitemsep,topsep=0pt,parsep=0pt,partopsep=0pt,leftmargin=0.3in]
    \item \label{p:greedy}
    Greedy w.r.t.\ a Q-function estimate: $\pi^k_h(s) = \arg\max_{a\in\Aspace}\{\overline{Q}_h^k(s,a)\}$.
    \item \label{p:global.opot}
    Global optimism: $\overline{V}_1^k(s) \ge V_1^\star(s)$ where,  for all $h \geq 1$, we set $\overline{V}_h^k(s)=\max_{a\in\Aspace}\{\overline{Q}_h^k(s,a)\}$.
    \item \label{p:local.opt} 
    Almost local optimism: $\forall h>1, \exists C_h\ge0$ s.t.\ 
    $\overline{Q}^k_h(s,a)+C_h\beta_k\norm{\phi_h(s,a)}_{(\Lambda_h^k)^{-1}} \ge Q^\star_h(s,a)$. 
    \item \label{p:confidence}
    Confidence set: let $\Lambda_h^{k}=\sum_{i=1}^{k-1}\phi_h(s_h^i,a_h^i)\phi_h(s_h^i,a_h^i)^\transp + \lambda I$ and $\beta_k \in \mathbb{R}_+$ be logarithmic in $k$, then
    $\overline{V}_h^k(s_h^k) - V_h^{\pi_k}(s_h^k) \le 2\beta_k\norm{\phi_h(s_h^k,a_h^k)}_{(\Lambda_h^k)^{-1}} + \EV_{s'\sim p_h(s_h^k,a_h^k)}\left[\overline{V}_{h+1}^k(s')-V^{\pi_k}_{h+1}(s')\right]$.
\end{enumerate}
These properties are verified by ELEANOR~\citep[][App.\ C]{Zanette2020low} and LSVI-UCB~\citep[][Lem. B.4, B.5]{Jin2020linear}. Note that for LSVI-UCB condition~\ref{p:local.opt}  is trivially verified since the algorithm is optimistic at each stage ($C_h=0$). On the other hand, ELEANOR is only guaranteed to be optimistic at the first stage, and~\ref{p:local.opt} is thus important ($C_h=2$).
First, we use existing techniques to establish an \emph{any-time} regret bound, either worst-case or problem-dependent. We call this $g(k)$ and prove that $R(k)\le g(k) \le \wt{O}(\sqrt{k})$ for any $k$ with probability $1-2\delta$. 

Next, we show that, under Asm.~\ref{asm:feature.structure}, the eigenvalues of the design matrix grow almost linearly, making the confidence intervals decrease at a $1/\sqrt{k}$ rate. From some algebra and a martingale argument,
\begin{align}\label{eq:design}
    \Lambda_h^{k+1} \succeq k\Lambda_h^\star + \lambda I - \Delta_{\min}^{-1}g(k) I - \wt{O}(\sqrt{k})I,
\end{align}
where $\Lambda_h^\star=\EV_{s\sim\rho_h^\star}[\phi_h^\star(s)\phi_h^\star(s)^\transp]$.
The \hls property ensures that the linear term is nonzero in relevant directions, while the regret bound of the algorithm makes the penalty term sublinear. Then, we show that, for any \emph{reachable} $(s,a)$,
\begin{equation}\label{eq:shrink}
    \beta_k\norm{\phi_h(s,a)}_{(\Lambda_h^k)^{-1}} \le \beta_k \frac{k-\wt{O}(\sqrt{k})}{(k\lambda_{h}^+ -\wt{O}(\sqrt{k}))^{3/2}} = \wt{O}(k^{-1/2}),
\end{equation}
where $\lambda_{h}^+$ is the minimum \emph{nonzero} eigenvalue of $\Lambda_h^\star$. From~\eqref{eq:shrink}, we can see that $\lambda_{h}^+$ plays a fundamental role in the rate of decrease.
Finally, we show that, under the gap assumption, these uniformly-decreasing confidence intervals allow learning the optimal policy in a finite time. From the Bellman equations, we have that
\begin{equation}\label{eq:sum.of.gaps}
    V_1^\star(s_1^k) - V_1^{\pi^k}(s_1^k) = \EV_{\pi^k}\left[\sum_{h=1}^H\Delta_h(s_h,a_h)|s_1=s_1^k\right],
\end{equation}
while from~\ref{p:greedy}-\ref{p:confidence}, for any reachable state,
\begin{equation*}
\begin{aligned}
    \Delta_h(s,\pi^k_h(s)) \le 2\EV_{\pi^k}\left[\sum_{i=h}^H \beta_k\norm{\phi_i(s_i,a_i)}_{(\Lambda_i^k)^{-1}}|s_h=s\right] 
    + \mathds{1}_{h>1}
    C_h\beta_k\norm{\phi_h^\star(s)}_{(\Lambda_h^k)^{-1}}.
\end{aligned}
\end{equation*}
The second term (with $\mathds{1}_{h>1}$) accounts for the almost-optimism of ELEANOR, while it is zero in LSVI-UCB due to the stage-wise optimism.
Then, for every $h\in[H]$, we can use~\eqref{eq:shrink} to control the feature norms. 
Thus, there exists an episode $\kappa_h$ independent of $K$ satisfying
\begin{align}\label{eq:tau}
    \Delta_h(s,\pi^k_h(s)) &\leq \beta_{\kappa_h}\sum_{i=h}^H(2+\mathds{1}_{i=h>1}C_h)\frac{\kappa_h -8\sqrt{\kappa_h\log(2d\kappa_hH/\delta)} - g(\kappa_h)}{(\kappa_h\lambda_{i}^{+} - 8\sqrt{\kappa_h\log(2d\kappa_h H/\delta)} - g(\kappa_h))^{3/2}} 
    < \Delta_{\min},
 \end{align}
%
By definition of minimum gap, then $\Delta_h(s,\pi^k_h(s))=0$ for $k>\kappa_h$. 
Then, for $k>\wb{\kappa}=\max_{h}\{\kappa_h\}$, $V_1^\star(s_1^k) - V_1^{\pi^k}(s_1^k)=0$. But this means the algorithm only accumulates regret up to $\wb{\kappa}$, that is, $R(K)=R(\wb{\kappa})\le g(\wb{\kappa})=O(1)$ for all $K>\wb{\kappa}$. This holds with probability $1-3\delta$, also taking into account the martingale argument from~\eqref{eq:design}. Note that $\{\kappa_h\}$ are by definition monotone for LSVI-UCB.
The final bounds are then obtained by instantiating the specific values of $\beta_k$ and $g(k)$ for the two algorithms we analyzed.

\section{Representation Selection in Low-Rank MDPs}\label{sec:rep.selection}
In Sec.~\ref{sec:constant}, we have highlighted the benefits that a \hls{} representation brings to optimistic algorithms in MDPs with Bellman closure and low rank structure. In this section, we take one step further and investigate the \emph{representation selection} problem. 
Since ELEANOR is a computationally intractable algorithm, we build on LSVI-UCB and low-rank MDPs (Asm.~\ref{asm:lowrank}) and we introduce LSVI-LEADER (Alg.~\ref{alg:LSVI-LEADER}), an algorithm that adaptively selects representations in a given set.

\begin{algorithm}[t]
\footnotesize
\KwIn{Representations $\{\Phi_j\}_{j\in [N]}$, confidence values $\{\beta_k\}_{k\in [K]}$}
\For{$k=1, \ldots, K$ }{
Receive the initial state $s_1^k$\\
\For{$h=H, \ldots, 1$}{
 $\Lambda_h^k(j)=\lambda I + \sum_{i=1}^{k-1}\phi_h^{(j)}(s_h^i,a_h^i)\phi_h^{(j)}(s_h^i,a_h^i)^\transp$ $\forall$ $j \in [N]$. \\
 $\boldsymbol{w}_h^k(j) = \Lambda_h^{k}(j)^{-1}\sum_{i=1}^{k-1}\phi_h^{(j)}(s_h^i,a_h^i)\left(r_h(s_h^i,a_h^i)+\max\limits_{a\in\Aspace}\overline{Q}^{k}_{h+1}(s_{h+1}^i,a)\right),~\forall j \in [N]$ \\
 $\overline{Q}_h^k(s,a) = \min\left\{H,\min_{j \in [N]}\left( \phi_h^{(j)}(s,a)^\transp \boldsymbol{w}_h^k(j) + \beta_{k}\norm{\phi_h^{(j)}(s,a)}_{\Lambda_h^k(j)^{-1}} \right)\right\}$
}
\For{$h=1, \ldots, H$}{
Execute action $a_h^k=\pi_h^k(s_h^k) := \argmax_{a\in\mathcal{A}}\overline{Q}_h^k(s_h^k,a)$. 
 }
}

\caption{LSVI-LEADER}
\label{alg:LSVI-LEADER}
\end{algorithm}

Given a set of $N$ representations $\{\Phi_j\}_{j\in[N]}$ satisfying Asm.~\ref{asm:lowrank}, where $\Phi_j = \big\{\phi_h^{(j)}\big\}_{h\in[H]}$, 
at each stage $h\in[H]$ of episode $k\in[K]$, LSVI-LEADER solves $N$ different regression problems to compute an optimistic value function for each representation. Then, the final estimate $\overline{Q}_h^k(s,a)$ is taken as the \emph{minimum} across these different optimistic value functions. Notably, this implies that LSVI-LEADER implicitly \emph{combines} representations, in the sense that the selected representations (i.e., those with tightest optimism) might vary for different 
stages. This is exploited in the following result, which shows that constant regret is achievable even if none of the given representations is globally \hls{}.
\begin{theorem}\label{thm:mix.stage}
    Given an MDP $M$ and a set of representations $\{\Phi_j\}_{j\in[N]}$ satisfying the low-rank assumption (Asm.~\ref{asm:lowrank}), let $\mathcal{Z}$ be the set of $H^N$ representations obtained by combining those in $\{\Phi_j\}_{j\in[N]}$ across different stages.\footnote{Note that any combination of features in $\Phi_j$ is learnable, since each representation is learnable in the low-rank MDP sense.} Then, with probability at least $1-2\delta$, LSVI-LEADER suffers at most a regret
    \[
    R(K) \leq \min_{z\in\mathcal{Z}}\wt{R}(K, z, \{\beta_k\}),
    \]
    where $\wt{R}(K, z, \beta_k)$ is either the worst-case regret bound of LSVI-UCB \citep{Jin2020linear} or the problem-dependent one \citep{He2020loglin} when the algorithm is executed with representation $z$ and confidence values $\beta_k\propto dH\sqrt{N\log(2dNHk/\delta)}$. Moreover, if $\mathcal Z$ contains a \hls{} representation $z^\star$, then LSVI-LEADER achieves constant regret with problem-dependent values of $z^\star$ (see Thm.~\ref{thm:const.lsvi}).
\end{theorem}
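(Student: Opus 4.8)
The plan is to prove Theorem~\ref{thm:mix.stage} in two parts, mirroring the two conclusions of the statement. The first part establishes that LSVI-LEADER inherits the regret bound of the best stage-wise combination $z \in \mathcal{Z}$; the second part shows that, when some $z^\star \in \mathcal{Z}$ is \hls{}, the generic argument from the combined proof sketch of Thm.~\ref{thm:const.ele} and Thm.~\ref{thm:const.lsvi} applies and yields constant regret. The key observation that drives everything is that, because $\overline{Q}_h^k$ is the pointwise minimum over the $N$ representations of individually optimistic estimates, it is still optimistic (this uses that under Asm.~\ref{asm:lowrank} the optimal value function is linear in \emph{each} representation, so each of the $N$ inner estimates upper-bounds $Q_h^\star$ whenever the corresponding confidence event holds), yet it is never larger than the estimate one would have obtained by committing to a single representation at that stage. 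Hence for any fixed stage-wise assignment $z$, the per-step optimistic gap $\overline{V}_h^k(s) - V_h^{\pi_k}(s)$ is controlled by the feature norms $\norm{\phi_h^{(z_h)}(s,a)}_{\Lambda_h^k(z_h)^{-1}}$ of that assignment.

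For the first part I would set up the standard LSVI-UCB regret decomposition: with $\beta_k \propto dH\sqrt{N\log(2dNHk/\delta)}$ (the extra $\sqrt{N}$ and $\log N$ absorb a union bound over the $N$ simultaneous regression problems and over $k$), one shows the good event --- on which all $N\times H$ confidence bounds hold for all $k$ --- has probability at least $1-2\delta$. On this event, optimism (property~\ref{p:global.opot}) gives $R(K) \le \sum_k (\overline{V}_1^k(s_1^k) - V_1^{\pi_k}(s_1^k))$, and then the clipping-and-recursion argument of Jin et al.\ (property~\ref{p:confidence}) bounds this by $\sum_{k,h} 2\beta_k \norm{\phi_h^{(j_h^k)}(s_h^k,a_h^k)}_{\Lambda_h^k(j_h^k)^{-1}}$ where $j_h^k$ is the minimizing representation at $(k,h)$. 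The crucial step is that for \emph{each fixed} $h$, $\norm{\phi_h^{(j_h^k)}(s_h^k,a_h^k)}_{\Lambda_h^k(j_h^k)^{-1}} \le \norm{\phi_h^{(z_h)}(s_h^k,a_h^k)}_{\Lambda_h^k(z_h)^{-1}}$ for any target assignment $z$, since $j_h^k$ is the argmin; summing the right-hand side over $k$ via the elliptical potential lemma applied to the single representation $\Phi_{z_h}$ recovers exactly the LSVI-UCB analysis of representation $z$ (worst-case, or problem-dependent via He et al.\ if one instead starts from the local-gap decomposition~\eqref{eq:sum.of.gaps}). Taking the minimum over $z\in\mathcal{Z}$ gives $R(K)\le\min_z \wt{R}(K,z,\{\beta_k\})$.

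For the second part, suppose $z^\star\in\mathcal{Z}$ is \hls{}. I would verify that LSVI-LEADER satisfies properties~\ref{p:greedy}--\ref{p:confidence} of the combined proof sketch \emph{with respect to the fixed representation $\Phi_{z^\star_h}$ at each stage $h$}: it is greedy w.r.t.\ $\overline{Q}^k$; it is globally optimistic; it is stage-wise optimistic so $C_h=0$; and the confidence-set inequality holds with the $z^\star$-feature norms by the argmin bound above. Now run the sufficiency argument verbatim: on the good event, plug the any-time bound $g(k) = \wt{O}(\sqrt{k})$ from part one into the design-matrix lower bound~\eqref{eq:design} for the matrices $\Lambda_h^k(z^\star_h)$, invoke the \hls{} property of $z^\star$ to conclude $\Lambda_h^k(z^\star_h) \succeq k\Lambda_h^\star(z^\star) + (\text{sublinear})$ on relevant directions, derive the $\wt{O}(k^{-1/2})$ shrinkage~\eqref{eq:shrink} of the confidence widths along reachable features, and finally use the gap assumption (Asm.~\ref{asm:gap}) to conclude that after a finite episode $\overline{\kappa}$ --- polynomial in $d,H,1/\Delta_{\min},1/\lambda_+$ exactly as in Thm.~\ref{thm:const.lsvi} up to the $N$-dependent inflation of $\beta_k$ --- the greedy policy is optimal, so $R(K)=R(\overline{\kappa})=O(1)$.

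The main obstacle I anticipate is the design-matrix lower bound~\eqref{eq:design} in the representation-selection setting: in the single-representation proof, $\Lambda_h^k$ accumulates $\phi_h(s_h^i,a_h^i)$ for \emph{every} past episode, and the martingale argument relates the empirical sum to $k\Lambda_h^\star$ minus a term controlled by how often suboptimal actions were played. Here the matrix $\Lambda_h^k(z^\star_h)$ is built from the \emph{true} executed features under $z^\star_h$, so this is actually fine --- the data matrix does not depend on which representation "won" the minimum, only the policy does --- and the policy is the greedy one w.r.t.\ $\overline{Q}^k$, whose suboptimal-play count is controlled by $g(k)$ from part one. The subtlety to get right is that the any-time bound $g(k)$ used inside~\eqref{eq:design} must hold for LSVI-LEADER itself (not for a hypothetical single-representation run), which is precisely what part one delivers with the correct $\sqrt{N}$-inflated $\beta_k$; so the two parts must be sequenced carefully, with part one's guarantee fed as input to part two. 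A secondary bookkeeping point is that $\overline{\kappa}$ and hence the final constant now carry the $\beta_k$ inflation ($N$ and $\log N$ factors), which should be stated explicitly rather than hidden in $\lesssim$.
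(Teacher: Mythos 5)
Your overall architecture matches the paper's proof almost exactly: optimism of the pointwise minimum (each of the $N$ inner estimates is individually optimistic because $Q^\star$ is linear in every low-rank representation), the argmin bound $\min_{j}\|\phi_h^{(j)}(s,a)\|_{\Lambda_h^k(j)^{-1}} \le \|\phi_h^{(z_h)}(s,a)\|_{\Lambda_h^k(z_h)^{-1}}$ for any fixed stage-wise assignment $z$, the elliptical-potential / He-et-al.\ decomposition applied per stage to the single representation $z_h$, and, for the second claim, rerunning the constant-regret machinery of Thm.~\ref{thm:const.lsvi} on $z^\star$. Your observation that the design matrices $\Lambda_h^k(z^\star_h)$ are built from executed features and therefore do not care which representation ``won'' the minimum is exactly the right resolution of the only real subtlety in the second part.

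There is, however, one genuine gap: your justification for the confidence width. You write that ``the extra $\sqrt{N}$ and $\log N$ absorb a union bound over the $N$ simultaneous regression problems and over $k$.'' A union bound over $N$ events only converts $\log(1/\delta)$ into $\log(N/\delta)$, i.e.\ it would justify $\beta_k \propto dH\sqrt{\log(2dNHk/\delta)}$ \emph{without} the multiplicative $N$ under the root. The actual source of the $\sqrt{N}$ is different and is the main technical content of the proof: the self-normalized martingale bound for $\sum_i \phi_h^{(j)}(s_h^i,a_h^i)\bigl(\overline{V}_{h+1}^k(s_{h+1}^i)-\mathbb{P}_h\overline{V}_{h+1}^k(s_h^i,a_h^i)\bigr)$ must hold uniformly over the class of functions $\overline{V}_{h+1}^k$ that LSVI-LEADER can produce, and this class is now a minimum over $N$ linear-plus-bonus functions, parametrized by $N$ weight vectors and $N$ matrices. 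Its $\epsilon$-covering number is (roughly) the $N$-th power of the single-representation one, so $\log\mathcal{N}_\epsilon$ picks up a multiplicative factor $N$, which becomes the $\sqrt{N}$ in $\beta_k$ (this is the multi-representation analogue of Lemma~D.6 of \citet{Jin2020linear} proved in App.~\ref{app:rep.selection}). You state the correct $\beta_k$, so the downstream steps are fine, but as written your proof never verifies that this $\beta_k$ actually makes the good event hold with probability $1-2\delta$ — and the union-bound argument you offer would not suffice to establish it. You need the covering argument to close this step.
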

This result shows that LSVI-LEADER adapts to the \emph{best} representation automatically, i.e., without any prior knowledge about the properties of the representations. In particular, it shows a problem-dependent (or worst-case) bound when there is no \hls{} representation, while it attains constant regret when a representation, potentially mixed through stages, is \hls{}.
This is similar to what was obtained by~\citet{papini2021leveraging} for linear contextual bandits. Indeed, LSVI-LEADER reduces to their algorithm in the case $H=1$. While the cost of representation selection is only logarithmic in linear bandits, the cost becomes polynomial (i.e., $\sqrt{N}$ in the worst-case bound and $N$ in the problem-dependent one) in RL. This is due to the structure induced by the Bellman equation, which requires a cover argument over $H^N$ functions (more details in the proof sketch). Note that for $H=1$, the analysis can be refined to obtain a $\log(N)$ dependence, due to the lack of propagation through stages, and recover the result in~\citep{papini2021leveraging}. We refer the read to App.~\ref{app:examples.experiments} for a numerical validation.

\paragraph{Proof sketch of Thm.~\ref{thm:mix.stage}.} The proof relies on the following important result, which extends Lem.~B.4 of \cite{Jin2020linear} and shows that the deviation between the optimistic value function computed by LSVI-LEADER and the true one scales with the \emph{minimum} confidence interval across the different representations. Formally, with probability $1-2\delta$, for any $\pi\in\Pi,s\in\mathcal{S},a\in\mathcal{A},h\in[H],k\in[K]$,
\begin{align*}
    \overline{Q}_h^k(s,a) - Q_h^{\pi}(s,a) \le 2\beta_k\min_{j\in[N]} \norm{\phi_h^{(j)}(s,a)}_{\Lambda_h^k(j)^{-1}} + \EV_{s'\sim p_h(s,a)}\left[\overline{V}_{h+1}^k(s')-V^{\pi}_{h+1}(s')\right].
\end{align*}
As in \citep{Jin2020linear}, the derivation of this result combines the well-known self-normalized martingale bound in \citep{abbasi2011improved} with a covering argument over the space of possible optimistic value functions. In our setting, the structure of such function space requires us to build $N$ different covers, one for each different representation. This, in turn, requires the confidence values $\beta_k$ to be inflated by an extra factor $\sqrt{N}$ w.r.t.\ learning with a single representation. 

The generality of this result allows us to easily derive, for any fixed representation $z\in\mathcal{Z}$, both the worst-case regret bound of \cite{Jin2020linear} and the problem-dependent one of \cite{He2020loglin}. To see this, note that the regret decompositions in both of these two papers rely on an upper bound to $\overline{V}_h^k(s_h^k) - V_h^{\pi_k}(s_h^k)$ as a function of the \emph{fixed} representation used by LSVI-UCB (see the proof of Theorem 3.1 of \cite{Jin2020linear} and Lemma 6.2 of \cite{He2020loglin}). Then, fix any $z\in\mathcal{Z}$ and call $z_h$ its features at stage $h$. Note that $z_h\in\{\phi_h^{(j)}\}_{j\in[M]}$. Moreover, by definition of low-rank structure, since each $\Phi_j$ induces a low-rank MDP, their combination does too. Thus, $z$ is learnable. Then, instantiating the concentration bound stated above for policy $\pi^k$, state $s_h^k$, action $a_h^k$, stage $h$, and by upper bounding the minimum with the representation selected in $z_h$, we get
\begin{align*}
    \overline{V}_h^k(s_h^k) - V_h^{\pi_k}(s_h^k) \le 2\beta_k \norm{z_h(s_h^k,a_h^k)}_{\Lambda_h^k(j)^{-1}} + \EV_{s'\sim p_h(s_h^k,a_h^k)}\left[\overline{V}_{h+1}^k(s')-V^{\pi_k}_{h+1}(s')\right].
\end{align*}
From here, one can carry out exactly the same proofs of \cite{Jin2020linear} and \cite{He2020loglin}, thus obtaining the same regret bound that LSVI-UCB enjoys when executed with the fixed representation $z\in\mathcal{Z}$ and confidence values $\{\beta_k\}_{k\in[K]}$. Hence, we conclude that the regret of LSVI-LEADER is upper bounded by the minimum of these regret bounds for all representations $z\in\mathcal{Z}$, thus proving the first result. To obtain the second result, simply notice that, if $z^\star \in \mathcal{Z}$ is \hls{}, then we can use the refined analysis for LSVI-UCB of Thm.~\ref{thm:const.lsvi} to show that $\wt{R}(K,z^\star,\{\beta_k\})$ is upper bounded by a constant independent of $K$, hence proving constant regret for LSVI-LEADER.

\subsection{Representation Selection under a Mixing Condition}

We show that the LSVI-LEADER algorithm not only is able to select the best representation among a set of viable representations, and to combine representations for the different stages, but also to stitch representations together \emph{across states and actions}.
With this in mind we introduce the notion of a mixed ensemble of representations. 

\begin{definition}\label{def:mixing-hls}
 Consider an MDP $M$ and a set of representations $\{\Phi_j\}_{j\in[N]}$ satisfying the low-rank assumption (Asm.~\ref{asm:lowrank}). The collection of feature maps $\{ \Phi_j\}_{j\in[M]}$ is \hls{}-mixing if  for all $s,a \in \mathcal{S}\times \mathcal{A}$ and $h \in [H]$, there exists $j$ such that $\phi^{(j)}_h(s,a) \in \mathrm{span}\left\{\phi_h^{(j)}(s, \pi^\star_h(s)) | \rho^{\star}_h(s) > 0\right\}$.
\end{definition}

We show that when presented with a \hls{}-mixing family of representations, LSVI-LEADER is able to successfully combine these and obtain a regret guarantee that may be better than what is achievable by running LSVI-UCB using any of these representations in isolation.

\begin{theorem}\label{thm:mix.state}
Consider an MDP $M$ and a set of representations $\{\Phi_j\}_{j\in[N]}$ satisfying the low-rank  (Asm.~\ref{asm:lowrank}) and \hls{}-mixing assumptions. If $\Delta_{\min} >0$ (Asm.~\ref{asm:gap}), then with probability at least $1-3\delta$, there exist a constant $\wt{\kappa} = \max_{h} \{\kappa_h\}$ independent from $K$ such that the regret of LSVI-LEADER after $K$ episodes is at most:
    \begin{equation*}
        R(K) \leq \min_{z\in\mathcal{Z}}\wt{R}\big( \wt{\kappa}, z, \{\beta_k\} \big),
    \end{equation*}
    where $\mathcal{Z}$, $\wt{R}$ and $\beta_k$ are defined as in Thm.~\ref{thm:mix.stage}. 
\end{theorem}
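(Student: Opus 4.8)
\textbf{Proof proposal for Theorem~\ref{thm:mix.state}.}
The plan is to run the ``combined proof sketch'' machinery of Thms.~\ref{thm:const.ele}--\ref{thm:const.lsvi} with the single‑representation confidence width replaced by the pointwise minimum over representations, and to use the \hls{}‑mixing condition to supply, for each reachable $(s,a,h)$, a representation whose design matrix grows in the right directions. First I would record that \algo{} satisfies properties~\ref{p:greedy}--\ref{p:confidence} with $C_h=0$. Property~\ref{p:greedy} is the definition of $\pi^k$. Properties~\ref{p:global.opot} and~\ref{p:local.opt} follow from the optimism argument already used in the proof of Thm.~\ref{thm:mix.stage}: with $\beta_k\propto dH\sqrt{N\log(2dNHk/\delta)}$, each of the $N$ per‑representation estimates $\phi_h^{(j)}(s,a)^\transp\boldsymbol{w}_h^k(j)+\beta_k\norm{\phi_h^{(j)}(s,a)}_{\Lambda_h^k(j)^{-1}}$ is an optimistic LSVI‑UCB estimate (the $\sqrt N$ inflation covers all $N$ function classes), so their minimum, truncated at $H\ge Q_h^\star$, still upper bounds $Q_h^\star$; hence $\overline{Q}_h^k\ge Q_h^\star$ and $\overline{V}_h^k\ge V_h^\star$. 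Property~\ref{p:confidence}, with width $\beta_k\min_{j\in[N]}\norm{\phi_h^{(j)}(s_h^k,a_h^k)}_{\Lambda_h^k(j)^{-1}}$, is exactly the concentration lemma established in the proof of Thm.~\ref{thm:mix.stage}. Combined with Thm.~\ref{thm:mix.stage} itself, this gives, with probability $1-2\delta$, an anytime bound $R(k)\le g(k):=\min_{z\in\mathcal Z}\wt R(k,z,\{\beta_k\})=\wt O(\sqrt k)$.

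Next I would run the eigenvalue‑growth step. The matrices $\Lambda_h^k(j)$ are built from the \emph{common} visited pairs $(s_h^i,a_h^i)$, independently of which representation is selected for optimism; so by the same algebra and matrix‑martingale argument behind~\eqref{eq:design} (a union bound over the $N$ representations and $H$ stages only changes log factors and costs one extra $\delta$, for $1-3\delta$ overall), simultaneously for all $j\in[N]$ and $h\in[H]$,
\begin{align*}
\Lambda_h^{k+1}(j)\succeq k\,\Lambda_h^\star(j)+\lambda I-\Delta_{\min}^{-1}g(k)\,I-\wt O(\sqrt k)\,I,\qquad \Lambda_h^\star(j):=\EV_{s\sim\rho_h^\star}\!\big[\phi_h^{(j),\star}(s)\,\phi_h^{(j),\star}(s)^\transp\big],
\end{align*}
where $\phi_h^{(j),\star}(s):=\phi_h^{(j)}(s,\pi_h^\star(s))$ and $g(k)=\wt O(\sqrt k)$ makes the penalty sublinear. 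Fix any reachable $(s,a)$ at stage $h$. By \hls{}‑mixing there is $j=j(s,a,h)$ with $\phi_h^{(j)}(s,a)\in\mathrm{span}\{\phi_h^{(j),\star}(s'):\rho_h^\star(s')>0\}$, and that span is precisely the range of the eigenvectors of $\Lambda_h^\star(j)$ with nonzero eigenvalue; restricting the PSD inequality to it and using $\norm{\phi_h^{(j)}(s,a)}_2\le1$,
\begin{align*}
\beta_k\min_{j'\in[N]}\norm{\phi_h^{(j')}(s,a)}_{\Lambda_h^k(j')^{-1}}\le \beta_k\norm{\phi_h^{(j)}(s,a)}_{\Lambda_h^k(j)^{-1}}\le \frac{\beta_k}{\sqrt{k\lambda_+-\Delta_{\min}^{-1}g(k)-\wt O(\sqrt k)}}=\wt O(k^{-1/2}),
\end{align*}
where $\lambda_+:=\min_{h\in[H],\,j\in[N]}\lambda_h^+(j)>0$ and $\lambda_h^+(j)$ is the smallest nonzero eigenvalue of $\Lambda_h^\star(j)$. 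Crucially this bound is \emph{uniform} over the (possibly infinite) set of reachable pairs.

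Finally I would close with the gap argument as in~\eqref{eq:sum.of.gaps}--\eqref{eq:tau}. From~\ref{p:greedy}--\ref{p:confidence} with $C_h=0$, for any reachable $s$ and $h$ one has $\Delta_h(s,\pi_h^k(s))\le 2\,\EV_{\pi^k}\!\big[\sum_{i=h}^H\beta_k\min_{j'}\norm{\phi_i^{(j')}(s_i,a_i)}_{\Lambda_i^k(j')^{-1}}\mid s_h=s\big]\le H\cdot\wt O(k^{-1/2})$ by the previous step; hence there is $\kappa_h$, polynomial in $d,H,N,\Delta_{\min}^{-1},\lambda_+^{-1}$ and independent of $K$, such that $\Delta_h(s,\pi_h^k(s))<\Delta_{\min}$ for all $k>\kappa_h$, which by Asm.~\ref{asm:gap} forces $\Delta_h(s,\pi_h^k(s))=0$. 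Thus for $k>\wt\kappa:=\max_h\kappa_h$ every stage is played optimally and $V_1^\star(s_1^k)-V_1^{\pi^k}(s_1^k)=0$ by~\eqref{eq:sum.of.gaps}; the algorithm therefore accrues regret only up to episode $\wt\kappa$, so $R(K)=R(\wt\kappa)\le g(\wt\kappa)=\min_{z\in\mathcal Z}\wt R(\wt\kappa,z,\{\beta_k\})$, with probability $1-3\delta$.

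\textbf{Main obstacle.} The delicate point is the second step: one must argue that the near‑linear growth $\Lambda_h^k(j)\succeq k\Lambda_h^\star(j)-(\text{penalty})\,I$ holds for \emph{every} representation at once, even though \algo{} switches the selected representation across states and actions. This works only because all $N$ design matrices are fed the common trajectory; then \hls{}‑mixing is precisely what lets the $\min_{j'}$ in the confidence width pick, pointwise, the representation for which $\Lambda_h^\star(j)$ already spans the needed direction, so that no single representation need be globally \hls{}. A secondary (routine) subtlety, handled as in Thm.~\ref{thm:const.lsvi}, is the uniformity of $\kappa_h$ over an infinite state space, which holds because the $\wt O(k^{-1/2})$ estimate depends only on $d,H,N,\lambda_+,\Delta_{\min}$.
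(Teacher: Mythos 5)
Your proposal is correct and follows essentially the same route as the paper: establish optimism and the min-over-representations confidence bound for \algo{} (the paper's Lemmas~\ref{lemma::peeling_lemma_mixing} and~\ref{lemma::optimism_leader}), observe that all $N$ design matrices are fed the common trajectory so the eigenvalue-growth bound holds for every $j$ simultaneously, use \hls{}-mixing to pick pointwise a representation whose optimal-feature span contains $\phi_h^{(j)}(s,a)$ (the paper's Lemma~\ref{lem:key} instantiated per $(h,s,a)$), and close with the gap argument. The only differences are cosmetic: the paper keeps a local constant $\lambda^+(h,s,a)=\max_{j\in\mathcal{J}(h,s,a)}\lambda^+_{h,j}$ while you use the cruder global $\lambda_+=\min_{h,j}\lambda_h^+(j)$, and your shortcut bound $\beta_k/\sqrt{k\lambda_+ - \cdots}$ glosses over the Kantorovich-type step (Prop.~\ref{prop:kanto}) that produces the paper's $(k-\cdots)/(k\lambda_+-\cdots)^{3/2}$ form -- both of which affect only constants, not the conclusion.
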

Under the \hls{}-mixing condition, LSVI-LEADER may not converge to selecting a single representation for each stage $h$ but rather to mixing multiple representations. In fact, it may select a different representation in different regions of the state-action space. 
This is the main difference w.r.t.\ Thm.~\ref{thm:mix.stage}, where constant regret is shown when there exists a representation $z^\star$ that is \hls{}, and the value $\kappa_h$ depends on the minimum positive eigenvalue of $z^\star_h$. In the case of \hls{}-mixing, $\kappa_h$ depends on properties of a combination of representations at stage $h$. We provide a characterization of $\kappa_h$ in the full proof in App.~\ref{app:rep.selection}.

\section{Conclusions}\label{sec:conclusions}
We investigated the properties that make a representation efficient for online learning in MDPs with Bellman closure. We introduced \hls{}, a necessary and sufficient condition to achieve a constant regret bound in this class of MDPs. We demonstrate that existing optimistic algorithms are able to adapt to the structure of the problem and achieve constant regret. Furthermore, we introduce an algorithm able to achieve constant regret by mixing representations across states, actions and stages in the case of low-rank MDPs.
An interesting direction raised by our paper is whether it is possible to leverage the \hls{} structure for probably-efficient representation learning, rather than selection. Another direction can be to leverage these insights to drive the design of auxiliary losses for representation learning, for example in deep RL.

\bibliography{bibliography}
\bibliographystyle{plainnat}

\clearpage
\begin{appendix}
\addcontentsline{toc}{section}{Appendix} 
\part{Appendix} 
\parttoc 

\section{Related Work} \label{app:related.work}
The representation selection problem has been originally studied in the context of tabular MDPs. Given a set of representation mapping histories to (sequences of actions, observations, and rewards) to a finite set of states, the goal of the learning agent is to solve the MDP under an appropriate representation. The standard assumption is that at least one representation induces an MDP. Several papers have investigated this online learning problem and provided algorithms based on the optimism principle~\citep[e.g.,][]{OrtnerMR14,OrtnerPLFM19}. The settings and the representation learning objective are different from ours. In particular, this line of research aims at finding any representation that is good for learning but the methods are not guaranteed to find the most efficient. 

Recently, a few papers have focused on representation learning with theoretical guarantees. \citet{DuKJAD019} considered the representation learning problem in block MDPs with rich observations, where the objective is to learn the compact latent representation. Representation learning in low-rank MDPs was recently studied in~\citep{AgarwalKKS20,Modi2021modelfree,lu2021power}. We believe that these papers are orthogonal to our work for several reasons. 
We start considering the setting in~\citep{AgarwalKKS20,Modi2021modelfree}. First, they operate in the reward-free setting where the objective is to learn a representation of the low-rank MDP that can be used to efficiently learn an optimal policy once a reward is given. For us, a reward is given from the start and learning/selecting a good representation in the meantime is just a way to suffer less regret.
Second, our representation selection objective is different (and arguably more challenging) than the one considered by~\citet{AgarwalKKS20,Modi2021modelfree}.
They aim at finding a representation with low mean square error, i.e. any realizable representation of the low-rank MDP. On the other hand, we wish to find a \hls{} representation among a set of realizable representations, which makes the representation learning problem harder. In App. F, \citet{papini2021leveraging} showed that reducing the MSE is not enough for this purpose. It is shown that this only allows the algorithm to end up with a set of realizable representations, but after that, a different algorithmic scheme, whose primary objective is reducing regret (like LSVI-LEADER), is needed to find the \hls{} one. Therefore, even if the approach in these papers could be extended to the regret minimization setting, there would be no guarantee that running that algorithm would recover a \hls{} representation as in our case. Finally, it is unclear how to transform their sample complexity into a regret bound. In particular, it is not just a matter of translating a sample complexity bound into a regret bound: both the interaction protocol and the algorithmic schemes are different w.r.t.\ our work. Even if we directly translated the sample complexity bounds of these papers into regret bounds, we note that, while it is true that they could scale as $\log(|\Phi|)$, they also contain several dependencies which are orders of magnitude worse than in our work. For instance, the sample complexity provided in~\citep[][Thm. 2]{Modi2021modelfree} scales as $A^{13}$ ($A$ is the number of actions). This is also an unreasonable dependence in any case of practical interest we can think of.
Finally, \citet{lu2021power} studied the effect of representation learning on the sample complexity in multi-task settings, which is quite different from the single-task regret minimization problem considered in this paper. 

\section{Notation}\label{app:notation}

\begin{table}[h]
    \centering
    \caption{Notation.}
    \begin{tabular}{lll}
    \hline
       $\Sspace$ && state space \\
       $\Aspace$ && action space \\
       $H$ && episode length \\
       $r_h$ && reward function at stage $h$ \\
       $p_h$ && transition function at stage $h$\\
       $\mu$ && initial-state distribution\\
       $K$ && number of episodes \\
       $T$ &$=$& $HK$, total number of interactions \\
       $\pi_h$ && policy for stage $h$\\
       $\Pi$ && policy space\\
       $Q_h^\pi$ && state-action value function of policy $\pi$ at stage $h$\\
       $V_h^\pi$ &$=$& $Q_h^\pi(s,\pi_h(s))$ \\
       $\pi^\star_h$ && optimal policy for stage $h$ \\
       $Q^\star_h$ &$=$& $Q^{\pi^\star}_h$, optimal value function at stage $h$ \\
       $V^\star_h(s)$ &$=$& $\max_{a\in\Aspace}Q^\star_h(s,a)$ \\
       $L_h$ && Bellman's optimality operator for stage $h$ \\
       $\pi_h^k$ && policy played by the algorithm at stage $h$ of episode $k$ \\
       $\phi_h$ && feature map for stage $h$ \\
       $R(K)$ && regret suffered in the first $K$ episodes \\
       $d$ && feature dimension \\
       $D$ &$=$& $H$, value function upper bound \\
       $\mathcal{Q}_h$ && set of linear bounded value functions for stage $h$ \\
       $\Theta_h$ && set of parameters of linear bounded value functions for stage $h$ \\
       $\Delta_h(s,a)$ &$=$& $V_h^\star(s) - Q_h^\star(s,a)$, suboptimality gap  \\
       $\Delta_{\min}$ && minimum positive gap (see Asm.~\ref{asm:gap}) \\
       $\phi^\star_h(s)$ &$=$& $\phi_h(s,\pi_h^\star(s))$, optimal features for state $s$ at stage $h$\\
       $\rho_h^{\pi}$ && occupancy measure of policy $\pi$ at stage $h$ (see Asm.~\ref{asm:feature.structure}) \\
       $\Lambda_h^\star$ &$=$& $\EV_{s\sim\rho_h^\star}[\phi_h^\star(s)\phi_h^\star(s)]$, optimal covariance matrix \\
       $\lambda_h^+$ && minimum nonzero eigenvalue of $\Lambda_h^\star$ \\
       $\delta$ && failure probability \\
       $\overline{\kappa}$ && last episode at which nonzero regret is paid (see proof of Thm.~\ref{thm:general.const}) \\
       $\overline{\tau}$ &$=$& $H\overline{\kappa}$ \\
       $\beta_k$ && confidence radius, see~\eqref{eq:beta.eleanor} for ELEANOR and~\eqref{eq:beta.lsvi} for LSVI-UCB \\
       $\lambda$ &$=$& $1$, regularization parameter \\
       $\Lambda_h^k$ &$=$& $\lambda I + \sum_{i=1}^{k-1}\phi_h(s_h^i,a_h^i)\phi_h(s_h^i,a_h^i)^\transp$, design matrix \\
       $\overline{Q}^k_h$ && optimistic value function for stage $h$ at episode $k$ \\
       $\overline{V}_h^k(s)$ &$=$& $\max_{a\in\Aspace}\overline{Q}_h^k(s,a)$\\
       \hline
    \end{tabular}
    \label{tab:notation}
\end{table}
\clearpage
\section{\hls{} is Necessary: Proofs of Section \ref{sec:necessary}}\label{app:necessary}

We illustrate all the detailed proofs needed for showing that the \hls{} condition is necessary to achieve constant regret (Thm.~\ref{thm:necessity}). For the sake of completeness, we restate here all the assumptions on the MDP $M$ under consideration.

\paragraph{Assumptions on MDP $M$.}

\begin{itemize}
\item $\mathcal{S}$ and $\mathcal{A}$ finite, $H \geq 1$ arbitrary;
\item Linear rewards: $r_h(s,a) = \langle \theta_h, \phi(s,a)\rangle$ with $\mathcal{N}(0,1)$ noise;
\item Arbitrary transition probabilities $\{p_h\}_{h\in[H]}$ and initial-state distribution $\mu$;
\item Unique optimal policy $\pi^\star$: $|\{a : Q^\star_h(s,a)=V^\star_h(s)\}| = 1$ and $\pi_h^\star(s) = \argmax_{a}Q^\star_h(s,a)$ for all $s,h$;
\item \hls{} condition (Asm.~\ref{asm:feature.structure} does not hold).
\end{itemize}
Moreover, recall that we define $\mathcal{M}$ as any set of MDPs that contains (but it can be larger than) all the MDPs which are equivalent to $M$ in all components except for the reward parameters $\{\theta_h\}_{h\in[H]}$, which can be arbitrary vectors in $\mathbb{R}^d$. Formally,
\begin{align*}
    \mathcal{M} \supseteq \left\{ \wt{M} = \left(\Sspace, \Aspace, H, \{\wt{r}_h\}_{h=1}^H, \{p_h\}_{h=1}^H, \mu\right) \mid \forall h\in[H],\exists \wt{\theta}_h \in \mathbb{R}^d : \wt{r}_h(s,a) = \langle \wt{\theta}_h, \phi(s,a)\rangle\right\}.
\end{align*}
Intuitively, $\mathcal{M}$ contains at least all the MDPs that could be faced by an agent that knows the linear-reward structure of the problem but that does not know the true parameters $\{\theta_h\}_{h\in[H]}$. Obviously, if the agent knows all the components of $M$ except for the reward parameters, the set $\mathcal{M}$ can be taken exactly as the set on the righthand side above (which would contain all and only the realizable MDPs). On the other hand, in the more general case where the agent does not know the dynamics as well, set $\mathcal{M}$ can be enlarged by including all the realizable MDPs with different transition probabilities (e.g., those with low-rank or low-IBE structure, or even the whole set of unstructured dynamics). Our proof that \hls{} is necessary for constant regret holds for an agent that only knows that the true MDP $M$ belongs to this general set $\mathcal{M}$ and thus encompasses all the relevant settings mentioned in Sec.~\ref{sec:necessary}.

In the following proofs we shall write $\mathbb{P}_{M}^{\mathsf{A}}$ ($\mathbb{E}_{M}^{\mathsf{A}}$) to denote the probability (expectation) operator under MDP $M$ and the chosen algorithm ${\mathsf{A}}$.

\subsection{Proof of Lemma~\ref{lemma:bound-psi}}

Let ${M}$ be our true MDP and $\wt{{M}} \in \mathcal{M}$ be any other MDP which is equivalent to $M$ in all components except for the reward parameters, which are given by $\{\wt{\theta}_h\}_{h\in[H]}$. We start by a standard decomposition of the expected log-likelihood ratio between the observations generated in the two MDPs. Fix $K \geq 1$ and let $\mathrm{KL}(\mathbb{P}_{{M}}, \mathbb{P}_{\wt{{M}}})$ denote the KL-divergence between the distributions of the observations collected by algorithm ${\mathsf{A}}$ over $K$ episodes. Using, e.g., Lemma 5 of \cite{domingues2021episodic} together with the closed-form of the KL divergence between Gaussian distributions,
\begin{align*}
\mathrm{KL}(\mathbb{P}_{{M}}, \mathbb{P}_{\wt{{M}}}) = \sum_{s,a}\sum_{h\in[H]}\mathbb{E}_{{M}}^{\mathsf{A}}[N_h^K(s,a)] \frac{(\langle \phi(s,a), \theta_h-\wt{\theta}_h \rangle)^2}{2} = \frac{1}{2}\sum_{h\in[H]} \|\theta_h - \wt{\theta}_h\|_{\mathbb{E}_{{M}}^{\mathsf{A}}[\Lambda_h^K]}^2,
\end{align*}
where $\Lambda_h^K := \sum_{s,a} N_h^K(s,a)\phi(s,a)\phi(s,a)^T$ and $N_h^K(s,a) := \sum_{k=1}^K\indi{s_h^k=s,a_h^k=a}$. 

Suppose that, for sufficiently large $K$, the matrix $\mathbb{E}_{{M}}^{\mathsf{A}}[\Lambda_h^K]$ is invertible.\footnote{\cite{lattimore2017end} proved that this is indeed true for consistent algorithms. Otherwise, one could simply make the matrix positive-definite by adding $\lambda I$ for some arbitrary $\lambda > 0$ and the derivation still holds.} 
We now proceed as follows. For a fixed $h\in[H]$ and sub-optimal policy $\pi\in\Pi$ (i.e., with $\Delta(\pi) > 0$), we seek the hardest MDP $\wt{M}$ to discriminate from ${M}$ (i.e., that minimizes $\mathrm{KL}(\mathbb{P}_{{M}}, \mathbb{P}_{\wt{{M}}})$) where policy $\pi$ is strictly better (in terms of expected return) than $\pi^\star$ and where we change only the parameter $\theta_h$ w.r.t. ${M}$. Formally, we minimize
\begin{align*}
\mathrm{minimize}_{\wt{\theta}_h \in \mathbb{R}^d}\|\theta_h - \wt{\theta}_h\|_{\mathbb{E}_{{M}}^{\mathsf{A}}[\Lambda_h^K]}^2
\end{align*}
subject to the constraint $\wt{V}_1^\pi \geq \wt{V}_1^{\pi^\star} + \epsilon$. First note that the expected return of policy $\pi$ can be equivalently written as
\begin{align*}
V_1^\pi = \sum_{s,a}\sum_{h\in[H]}\rho_h^\pi(s,a)r_h(s,a) = \sum_{h\in[H]}\langle \theta_h, \sum_{s,a}\rho_h^\pi(s,a)\phi(s,a)\rangle = \sum_{h\in[H]}\langle \theta_h, \Psi_h^\pi\rangle.
\end{align*}
Moreover, since ${M}$ and $\wt{{M}}$ have same transition probabilities, $\Psi_h^\pi = \wt{\Psi}_h^\pi$ for each $\pi,h$. Thus, $\wt{V}_1^\pi = \sum_{h\in[H]}\langle \wt{\theta}_h, \Psi_h^\pi\rangle$ and the constraint can be rewritten in the more convenient form
\begin{align*}
\sum_{h\in[H]}\langle \wt{\theta}_h, \Psi_h^\pi\rangle \geq \sum_{h\in[H]}\langle \wt{\theta}_h, \Psi_h^{\star}\rangle + \epsilon.
\end{align*}
Using Lemma \ref{lemma:closed-form-alternative}, the optimization problem has a closed-form expression. Therefore, let $\Gamma_h^\epsilon(\pi) \subseteq \mathcal{M}$ be the set of MDPs over which we are optimizing, that is, with (1) same transition probabilities as $\mathcal{M}$, (2) same reward parameters as $\mathcal{M}$ at all stages except $h$, and (3) $\wt{V}_1^\pi \geq \wt{V}_1^{\pi^\star} + \epsilon$. Using Lemma \ref{lemma:closed-form-alternative} together with the rewritings above, for any $\pi\in\Pi, h\in[H]$ and $\epsilon \geq 0$,
\begin{align}\label{eq:aaaaa}
\min_{\wt{{M}} \in \Gamma_h^\epsilon(\pi)} \mathrm{KL}(\mathbb{P}_{{M}}, \mathbb{P}_{\wt{{M}}}) = \frac{ \left(\Delta(\pi)+\epsilon\right)^2}{2\|\Psi_h^\pi - \Psi_h^{\star}\|_{\mathbb{E}_{{M}}^{\mathsf{A}}[\Lambda_h^K]^{-1}}^2}.
\end{align}

We now show that $\mathrm{KL}(\mathbb{P}_{{M}}, \mathbb{P}_{\wt{{M}}})$ is lower bounded by a quantity that increases logarithmically in $K$ for any $\wt{{M}} \in \Gamma_h^\epsilon(\pi)$ with $\epsilon > 0$.
Let $E_K := \{\sum_{\pi\in\Pi^\star} N_K(\pi) < f(K)\}$, where $N_K(\pi) := \sum_{k=1}^K \indi{\pi^k = \pi}$, $\Pi^\star$ is the set of all deterministic policies with maximal expected return in $M$, and $f(K)$ will be specified later. Using Lemma \ref{lemma:bh-ineq},
\begin{align}\label{eq:che.cazzo.ne.so}
\mathrm{KL}(\mathbb{P}_{M}, \mathbb{P}_{\wt{M}}) \geq \log\frac{1}{\mathbb{P}_{M}(E_K) + \mathbb{P}_{\wt{M}}(E^c)} -\log 2.
\end{align}
Now note that, under the assumption that $\mathsf{A}$ is $\alpha$-consistent, 
\begin{align*}
c_M K^\alpha \geq \mathbb{E}_{{M}}^{\mathsf{A}}\left[{R}(K) \right] = \sum_{\pi\in\Pi} \mathbb{E}_{{M}}^{\mathsf{A}}\left[N_K(\pi)\right]\Delta(\pi)
\geq \Delta \sum_{\pi\notin\Pi^\star}\mathbb{E}_{{M}}^{\mathsf{A}}\left[N_K(\pi)\right] .
\end{align*}
Here, with some abuse of notation, $\Delta$ is the minimum policy gap. Therefore,
\begin{align*}
\mathbb{P}_{{M}}(E_K) = \mathbb{P}_{{M}}\left(K - \sum_{\pi\notin\Pi^\star}N_K(\pi) < f(K)\right) \leq \frac{\sum_{\pi\notin\Pi^\star}\mathbb{E}_{{M}}^{\mathsf{A}}\left[N_K(\pi)\right]}{K-f(K)} \leq \frac{K^\alpha c_M/ \Delta }{K-f(K)},
\end{align*}
where the first inequality is Markov's inequality. Note that, since $\Psi_h^\pi = \Psi_h^\star$ for all optimal policies $\pi\in\Pi^\star$ and since the transition probablities of ${M}$ and $\wt{{M}}$ are the same, $\wt{V}_1^\pi = \wt{V}_1^{\pi^\star}$ for all $\pi\in\Pi^\star$. Hence, all optimal policies for ${M}$ have a gap of at least $\epsilon$ in $\wt{{M}}$. This implies that
\begin{align*}
c_{\wt{M}}K^\alpha \geq \mathbb{E}_{\wt{{M}}}^{\mathsf{A}}\left[R(K) \right]
\geq \epsilon \mathbb{E}_{\wt{{M}}}^{\mathsf{A}}\left[\sum_{\pi\in\Pi^\star}N_K(\pi)\right] .
\end{align*}
Therefore,
\begin{align*}
\mathbb{P}_{\wt{{M}}}(E_K^c) = \mathbb{P}_{\wt{\mathcal{M}}}\left(\sum_{\pi\in\Pi^\star}N_K(\pi) \geq f(K)\right) \leq \frac{\mathbb{E}_{\wt{M}}^{\mathsf{A}}\left[\sum_{\pi\in\Pi^\star}N_K(\pi)\right]}{f(K)} \leq \frac{K^\alpha c_{\wt{{M}}}/ \epsilon }{f(K)}.
\end{align*}
If we set $f(K) = K/2$ and plug the two bounds above into \eqref{eq:che.cazzo.ne.so}, we obtain
\begin{align*}
\mathrm{KL}(\mathbb{P}_{{M}}, \mathbb{P}_{\wt{{M}}}) \geq \log\frac{K^{1-\alpha}}{2c_M/ \Delta_{} + 2c_{\wt{M}}/ \epsilon} -\log 2.
\end{align*}
Finally, for any $\wt{{M}} \in \Gamma_h^\epsilon(\pi)$ with $\epsilon > 0$,
\begin{align*}
\liminf_{K\rightarrow\infty}\frac{\mathrm{KL}(\mathbb{P}_{{M}}, \mathbb{P}_{\wt{{M}}})}{\log(K)} \geq 1-\alpha.
\end{align*}
This holds for any $\epsilon > 0$. Hence, in combination with \eqref{eq:aaaaa}, we proved that, for any sub-optimal policy $\pi$ and stage $h$,
\begin{align*}
    \liminf_{K\rightarrow\infty}\frac{1}{\log(K)} \frac{ \Delta(\pi)^2}{2\|\Psi_h^\pi - \Psi_h^{\star}\|_{\mathbb{E}_{{M}}^{\mathsf{A}}[\Lambda_h^K]^{-1}}^2} \geq 1-\alpha.
\end{align*}
Rearranging concludes the proof.

\subsection{Proof of Theorem~\ref{thm:necessity}}

We now use Lemma \ref{lemma:bound-psi} to prove that the \hls{} condition is necessary for constant regret. We proceed in different steps.

\paragraph{Step 1. Controlling the design matrix.} 

Suppose that the algorithm suffers constant regret on instance ${M}$. This means that, for some constant $C_M$ (different from the $c_M$ used in the definition of $\alpha$-consistence),
\begin{align}
    \mathbb{E}_{{M}}^{\mathsf{A}}\left[\mathrm{R}(K) \right] \leq C_M.
\end{align}
Since $\mathbb{E}_{{M}}^{\mathsf{A}}\left[\mathrm{R}(K) \right] = \sum_{h}\sum_{s,a}\mathbb{E}_{{M}}^{\mathsf{A}}\left[N_h^K(s,a) \right] \Delta_h(s,a)$, we have that $\sum_{h}\sum_{s,a\neq\pi_h^\star(s)}\mathbb{E}_{{M}}^{\mathsf{A}}\left[N_h^K(s,a) \right] \leq C_{M}/\Delta_{\min}$, where $\Delta_{\min}$ is the minimum value-function gap. Therefore, the expected design matrix at each $h\in[H]$ satifies
\begin{align*}
\mathbb{E}_{{M}}^{\mathsf{A}}[\Lambda_h^K] 
&= \sum_{s,a} \mathbb{E}_{{M}}^{\mathsf{A}}[N_h^K(s,a)]\phi(s,a)\phi(s,a)^T  
\\ &= \sum_{s} \mathbb{E}_{{M}}^{\mathsf{A}}[N_h^K(s,\phi_h^\star(s))]\phi^\star_h(s)\phi_h^\star(s)^T + \sum_{s,a\neq\pi^\star_h(s)} \mathbb{E}_{{M}}^{\mathsf{A}}[N_h^K(s,a)]\phi(s,a)\phi(s,a)^T  
\\ &\preceq \sum_{s} \mathbb{E}_{{M}}^{\mathsf{A}}[N_h^K(s)]\phi^\star_h(s)\phi_h^\star(s)^T + L^2\frac{C_M}{\Delta_{\min}}I
\\ &\preceq K\sum_{s : \rho_h^\star(s) > 0} \phi^\star_h(s)\phi_h^\star(s)^T + \sum_{s : \rho_h^\star(s) = 0} \mathbb{E}_{{M}}^{\mathsf{A}}[N_h^K(s)]\phi^\star_h(s)\phi_h^\star(s)^T + L^2\frac{C_M}{\Delta_{\min}}I.
\end{align*}
We now bound the expected number of times the algorithm visit states which are not visited by an optimal policy. Take any $s$ such that $\rho_h^\star(s) = 0$. Since any optimal policy has the same state distribution $\rho_h^\star$, the event $s_h^k = s$ implies that $\pi^k \notin \Pi^\star$. Therefore,
\begin{align*}
\mathbb{E}_{{M}}^{\mathsf{A}}[N_h^K(s)] = \mathbb{E}_{{M}}^{\mathsf{A}}[\sum_{k=1}^K \indi{s_h^k = s}] \leq \mathbb{E}_{{M}}^{\mathsf{A}}[\sum_{k=1}^K \indi{\pi^k \notin \Pi^\star}] = \mathbb{E}_{{M}}^{\mathsf{A}}[\sum_{\pi\notin\Pi^\star}N_K(\pi)].
\end{align*}
Moreover, since the algorithm suffers constant regret,
\begin{align*}
    \Delta\mathbb{E}_{{M}}^{\mathsf{A}}[\sum_{\pi\notin\Pi^\star}N_K(\pi)] \leq \mathbb{E}_{{M}}^{\mathsf{A}}\left[\mathrm{R}(K) \right] \leq C_M.
\end{align*}
Therefore, we conclude that
\begin{align*}
\mathbb{E}_{{M}}^{\mathsf{A}}[\Lambda_h^K] \preceq
K\sum_{s : \rho_h^\star(s) > 0} \phi^\star_h(s)\phi_h^\star(s)^T + L^2\left(\frac{C_M}{\Delta_{\min}} + S_h\frac{C_M}{\Delta} \right)I,
\end{align*}
where $S_h := S-|\mathrm{supp}(\rho_h^\star))|$.

\paragraph{Step 2. Controlling the feature expectations.}

We now show that, since \hls{} does not hold, there exists a sub-optimal policy $\pi$ such that $\Psi_h^\pi$ is not in the span of the optimal features. By directly using the definition of \hls{} (Asm.~\ref{asm:feature.structure}), we have that there must exist a state-action pair $s,a$ which is reachable at time $h$ (i.e., $\exists \pi\in\Pi : \rho^\pi_h(s,a) > 0$) such that $\phi(s,a) \notin \mathrm{span}\left\{\phi_h^\star(s) | \rho^\star_h(s) > 0\right\}$. Clearly, we have only two cases:
\begin{enumerate}
\item $\rho_h^\star(s) > 0$ and $a \neq \pi_h^\star(s)$;
\item $\rho_h^\star(s) = 0$ and $a$ is arbitrary (even an optimal action).
\end{enumerate}
For Case 1, simply take a policy $\pi$ that is equivalent to $\pi^\star$ everywhere except that $\pi_h(s) = a$. Clearly, the policy is sub-optimal, in the sense that $\Delta(\pi) = V_1^\star - V_1^\pi > 0$. Moreover, it is easy to check that $\Psi_h^\pi - \Psi_h^\star = \rho_h^\star(s) (\phi(s,a) - \phi_h^\star(s))$. Therefore, $\Psi_h^\pi \notin \mathrm{span}\left\{\phi_h^\star(s) | \rho^\star_h(s) > 0\right\}$.

For Case 2, choose $\pi$ in such a way that $\rho_h^\pi(s) > 0$ (we know that one such policy exists due to the reachability of $s$). This only requires selecting the actions of $\pi$ for all stages $h' < h$. For all stages $h' > h$, set $\pi$ equal to $\pi^\star$ except for $\pi_h(s) = a$. Note that, even if $a$ is optimal at time $h$, $\pi$ is strictly sub-optimal (i.e., $\Delta(\pi) > 0$) since no optimal policy can achieve the condition $\rho_h^\pi(s) > 0$ by the uniqueness of the optimal state distribution. Moreover,
\begin{align*}
\Psi_h^\pi - \Psi_h^\star &= 
\sum_{s',a'}\rho_h^\pi(s',a')\phi(s',a') - \sum_{s'}\rho_h^\star(s')\phi_h^\star(s') 
\\ &= \rho_h^\pi(s)\phi(s,a) - \underbrace{\rho_h^\star(s)}_{=0}\phi_h^\star(s) + \sum_{s'\neq s}(\rho_h^\pi(s',a')-\rho_h^\star(s'))\phi_h^\star(s').
\end{align*}
Thus, we still conclude $\Psi_h^\pi \notin \mathrm{span}\left\{\phi_h^\star(s) | \rho^\star_h(s) > 0\right\}$.

\paragraph{Step 3. Concluding the proof.}

Combining Lemma \ref{lemma:bound-psi} with Step 1 and Step 2, we have that, for some $h\in[H]$ and policy $\pi$ such that $\Delta(\pi) > 0$ and $\Psi_h^\pi \notin \mathrm{span}\left\{\phi_h^\star(s) | \rho^\star_h(s) > 0\right\}$,
\begin{align*}
 \limsup_{K\rightarrow\infty} \log(K)\|\Psi_h^\pi - \Psi_h^{\star}\|_{(\Lambda_h^\star + \eta I)^{-1}}^2 \leq \frac{\Delta(\pi)^2}{2(1-\alpha)},
\end{align*}
where $\Lambda_h^\star := K\sum_{s : \rho_h^\star(s) > 0} \phi^\star_h(s)\phi_h^\star(s)^T$ and $\eta := L^2\left(\frac{C_M}{\Delta_{\min}} + S_h\frac{C_M}{\Delta} \right) > 0$. Using Lemma \ref{lemma:abcd}, we have that there exists an $\epsilon > 0$ (independent of $K$) such that $\|\Psi_h^\pi - \Psi_h^{\star}\|_{(\Lambda_h^\star + \eta I)^{-1}} \geq \frac{\epsilon}{\sqrt{\eta}}$. Therefore, we get that
\begin{align*}
 \limsup_{K\rightarrow\infty} \log(K) \leq \frac{\eta\Delta(\pi)^2}{2\epsilon^2(1-\alpha)},
\end{align*}
which clearly does not hold since the left-hand side grows with $K$ while the right-hand side is constant. Therefore, we have a contradiction, and the algorithm $\mathsf{A}$ cannot achieve constant regret on this non-\hls{} instance while being consistent on all other instances in $\mathcal{M}$. Our claim that \hls{} is necessary follows.

\subsection{Auxiliary Results}

\begin{lemma}\label{lemma:closed-form-alternative}
Let $A \in \mathbb{R}^{d\times d}$ be any positive semi-definite invertible matrix. For $\pi\in\Pi$, $h\in[H]$, and $\epsilon \geq 0$, consider the following optimization problem:
\begin{equation*}
    \begin{aligned}
        &\min_{\theta \in \mathbb{R}^d}&& \| \theta - \theta_h \|_{A}^2\\
        &\mathrm{subject\ to} && \sum_{l\in[H], l\neq h}\langle \theta_{l}, \Psi_l^\pi - \Psi_l^{\star}\rangle + \langle \theta, \Psi_h^\pi - \Psi_h^{\star}\rangle \geq \epsilon
    \end{aligned}
\end{equation*}
Then, for $\wb{\theta}$ a minimizer we have
\begin{align*}
\| \wb{\theta} - \theta_h \|_{A}^2 = \frac{(\Delta(\pi) + \epsilon)^2}{\|\Psi_h^\pi - \Psi_h^{\star}\|_{A^{-1}}^2}.
\end{align*}
\end{lemma}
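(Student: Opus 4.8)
The plan is to solve the constrained quadratic minimization by Lagrangian duality, exploiting that the objective is a strictly convex quadratic in $\theta$ and the single constraint is affine. First I would rewrite the constraint in a cleaner form: noting that $\sum_{l\neq h}\langle \theta_l, \Psi_l^\pi - \Psi_l^\star\rangle = (V_1^\pi - V_1^\star) - \langle \theta_h, \Psi_h^\pi - \Psi_h^\star\rangle$ by the identity $V_1^\pi = \sum_{l\in[H]}\langle\theta_l, \Psi_l^\pi\rangle$ used earlier in the excerpt, the constraint $\sum_{l\neq h}\langle\theta_l, \Psi_l^\pi - \Psi_l^\star\rangle + \langle\theta, \Psi_h^\pi - \Psi_h^\star\rangle \geq \epsilon$ becomes $\langle \theta - \theta_h, \Psi_h^\pi - \Psi_h^\star\rangle \geq \epsilon + (V_1^\star - V_1^\pi) = \epsilon + \Delta(\pi)$. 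So with the change of variable $u := \theta - \theta_h$ and $v := \Psi_h^\pi - \Psi_h^\star$, the problem is exactly: minimize $\|u\|_A^2$ subject to $\langle u, v\rangle \geq \epsilon + \Delta(\pi)$.

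Next I would solve this standard problem. Since $\epsilon + \Delta(\pi) \geq 0$ (as $\epsilon \geq 0$ and $\Delta(\pi) \geq 0$; one should note that if $v = 0$ the problem is infeasible for $\epsilon + \Delta(\pi) > 0$, but in the application $\pi$ is chosen sub-optimal with $\Psi_h^\pi \neq \Psi_h^\star$, or one simply restricts to the feasible case), the constraint is active at the optimum, so we minimize $\|u\|_A^2$ subject to $\langle u, v\rangle = \epsilon + \Delta(\pi)$. Writing $u = A^{-1}w$, this is minimize $w^\transp A^{-1} w$ subject to $\langle A^{-1}w, v\rangle = \langle w, A^{-1}v\rangle = \epsilon + \Delta(\pi)$; by Cauchy–Schwarz in the $A^{-1}$ inner product, $(\epsilon+\Delta(\pi))^2 = \langle w, A^{-1}v\rangle^2 \leq \|w\|_{A^{-1}}^2 \|v\|_{A^{-1}}^2 = \|u\|_A^2 \|v\|_{A^{-1}}^2$, with equality when $w \parallel v$, i.e. $u = c A^{-1}v$ for the scalar $c$ that meets the constraint. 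This yields $\|u\|_A^2 = (\epsilon+\Delta(\pi))^2 / \|v\|_{A^{-1}}^2 = (\Delta(\pi)+\epsilon)^2 / \|\Psi_h^\pi - \Psi_h^\star\|_{A^{-1}}^2$, which is the claimed value. Alternatively one can carry this out via a KKT/Lagrangian computation: stationarity gives $2Au = \lambda v$, hence $u = \frac{\lambda}{2}A^{-1}v$, and plugging into the active constraint determines $\lambda$; the same formula drops out.

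There is essentially no serious obstacle here — the result is a classical fact about the minimum $A$-norm vector subject to one linear constraint, and the only real content is the algebraic simplification of the constraint using the value-function decomposition and the equality $\Psi_h^\pi = \wt\Psi_h^\pi$. The one point deserving a line of care is feasibility and the case $v = 0$: if $\Psi_h^\pi = \Psi_h^\star$ then for $\epsilon > 0$ (or $\Delta(\pi) > 0$) the feasible set is empty and the statement is vacuous; in the intended application $v \neq 0$, so the formula holds with the stated finite value. I would also briefly remark that $A$ positive semi-definite and invertible means positive definite, so $\|\cdot\|_A$ and $\|\cdot\|_{A^{-1}}$ are genuine norms and the Cauchy–Schwarz step is valid.
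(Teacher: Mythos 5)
Your proof is correct and follows essentially the same route as the paper: both reduce to a single-constraint quadratic program, use the identity $V_1^\pi-V_1^\star=\sum_l\langle\theta_l,\Psi_l^\pi-\Psi_l^\star\rangle$ to turn the constraint offset into $\Delta(\pi)+\epsilon$, and solve via Lagrangian stationarity (the paper works through the explicit Lagrange dual in $\lambda$, while your primary presentation packages the same computation as Cauchy--Schwarz in the $A^{-1}$ inner product). Your extra remarks on feasibility when $\Psi_h^\pi=\Psi_h^\star$ and on positive definiteness are sensible but not needed in the paper's application.
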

\begin{proof}
To simplify notation, let us define $b:= \sum_{l\in[H], l\neq h}\langle \theta_{l}, \Psi_l^\pi - \Psi_l^{\star}\rangle $. The corresponding Lagrange dual problem is
\begin{align*}
    \max_{\lambda \geq 0} \min_{\theta \in \mathbb{R}^d} \left\{ \| \theta - \theta_h \|_{A}^2 - \lambda \left( \langle \theta, \Psi_h^\pi - \Psi_h^{\star}\rangle + b - \epsilon \right) \right\}.
\end{align*}
Let $f(\theta,\lambda)$ denote the resulting objective function. Taking the gradient w.r.t. $\theta$,
\begin{align*}
    \nabla_{\theta}f(\theta,\lambda) = 2A(\theta-\theta_h) - \lambda(\Psi_h^\pi - \Psi_h^{\star}),
\end{align*}
and equating it to zero, we obtain
\begin{align*}
    \theta = \theta_h + \frac{\lambda}{2}A^{-1}(\Psi_h^\pi - \Psi_h^{\star}).
\end{align*}
Plugging this back to the original objective we get
\begin{align*}
    f(\lambda) &= \frac{\lambda^2}{4}\|A^{-1}(\Psi_h^\pi - \Psi_h^{\star})\|_{A}^2 - \lambda \left(\langle \theta_h, \Psi_h^\pi - \Psi_h^{\star}\rangle + \frac{\lambda}{2}\|\Psi_h^\pi - \Psi_h^{\star}\|_{A^{-1}}^2 + b - \epsilon \right)\\ &= -\frac{\lambda^2}{4}\|\Psi_h^\pi - \Psi_h^{\star}\|_{A^{-1}}^2  - \lambda \left(\langle \theta_h, \Psi_h^\pi - \Psi_h^{\star}\rangle + \sum_{l\in[H], l\neq h}\langle \theta_{l}, \Psi_l^\pi - \Psi_l^{\pi^\star}\rangle - \epsilon \right)\\ &= -\frac{\lambda^2}{4}\|\Psi_h^\pi - \Psi_h^{\star}\|_{A^{-1}}^2  + \lambda \left(\Delta(\pi)+\epsilon\right).
\end{align*}
Differentiating with respect to $\lambda$ and equating to zero we obtain
\begin{align*}
    \lambda = \frac{ 2\left(\Delta(\pi)+\epsilon\right)}{\|\Psi_h^\pi - \Psi_h^{\star}\|_{A^{-1}}^2}.
\end{align*}
Therefore, plugging this back into the objective value
\begin{align*}
    \| \wb{\theta} - \theta_h \|_{A}^2 = \frac{ \left(\Delta(\pi)+\epsilon\right)^2}{\|\Psi_h^\pi - \Psi_h^{\star}\|_{A^{-1}}^2}.
\end{align*}
\end{proof}

\begin{lemma}[Bretagnolle–Huber inequality, see, e.g., Thm.~14.2 of \cite{lattimore2020bandit}]\label{lemma:bh-ineq}
Let $\mathbb{P}$ and $\mathbb{Q}$ be probability
measures on the same measurable space $(\Omega, \mathcal{F})$ and let $E\in\mathcal{F}$ be an arbitrary
event. Then,
\begin{align*}
\mathbb{P}(E) + \mathbb{Q}(E^c) \geq \frac{1}{2}e^{-\mathrm{KL}(\mathbb{P},\mathbb{Q})}.
\end{align*}
\end{lemma}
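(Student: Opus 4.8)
The plan is to route the bound through the total variation distance and the Hellinger affinity, both of which admit clean integral characterizations. First I would fix a common dominating measure $\mu$ (for instance $\mu = \mathbb{P} + \mathbb{Q}$) and write $p = d\mathbb{P}/d\mu$, $q = d\mathbb{Q}/d\mu$. The starting observation is that the particular event $E$ can only do worse than the optimal (likelihood-ratio) test, so for \emph{any} $E$,
\begin{align*}
\mathbb{P}(E) + \mathbb{Q}(E^c) \geq \inf_{E'}\big[\mathbb{P}(E') + \mathbb{Q}((E')^c)\big] = \int \min(p,q)\,d\mu = 1 - \mathrm{TV}(\mathbb{P},\mathbb{Q}),
\end{align*}
where the infimum is attained at $E' = \{p \leq q\}$ and the integral formula for total variation is standard. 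Thus it suffices to prove the measure-independent inequality $\int \min(p,q)\,d\mu \geq \tfrac12 e^{-\mathrm{KL}(\mathbb{P},\mathbb{Q})}$.

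Next I would introduce the Hellinger affinity $\rho := \int \sqrt{pq}\,d\mu$ as a bridge, and bound each side of it separately. On the overlap side, writing $\sqrt{pq} = \sqrt{\min(p,q)}\,\sqrt{\max(p,q)}$ and applying Cauchy–Schwarz gives $\rho^2 \leq \big(\int\min(p,q)\,d\mu\big)\big(\int\max(p,q)\,d\mu\big)$; since $\int\max(p,q)\,d\mu = 2 - \int\min(p,q)\,d\mu \leq 2$, this rearranges to $\int\min(p,q)\,d\mu \geq \tfrac12\rho^2$. On the divergence side, writing $\rho = \mathbb{E}_{\mathbb{P}}\big[e^{-\frac12\log(p/q)}\big]$ and invoking Jensen's inequality with the convex map $x\mapsto e^{-x/2}$ yields $\rho \geq e^{-\frac12\mathbb{E}_{\mathbb{P}}[\log(p/q)]} = e^{-\frac12\mathrm{KL}(\mathbb{P},\mathbb{Q})}$, hence $\rho^2 \geq e^{-\mathrm{KL}(\mathbb{P},\mathbb{Q})}$. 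Chaining the two bounds closes the argument:
\begin{align*}
\mathbb{P}(E) + \mathbb{Q}(E^c) \geq \int\min(p,q)\,d\mu \geq \tfrac12\rho^2 \geq \tfrac12 e^{-\mathrm{KL}(\mathbb{P},\mathbb{Q})}.
\end{align*}

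The computations here are elementary (just Cauchy–Schwarz and Jensen), so the only real care points are measure-theoretic rather than analytic. The main subtlety is the regime where $\mathbb{P}$ is not absolutely continuous with respect to $\mathbb{Q}$: then $\mathrm{KL}(\mathbb{P},\mathbb{Q}) = +\infty$, the right-hand side is $0$, and the inequality holds trivially, so the Jensen step need only be justified on the event $\mathbb{P}\ll\mathbb{Q}$, where $q > 0$ holds $\mathbb{P}$-almost surely and $\log(p/q)$ is $\mathbb{P}$-integrable (with the convention that $\{p = 0\}$ contributes nothing to $\rho$). The identity $\rho = \mathbb{E}_{\mathbb{P}}[\sqrt{q/p}]$ and the overlap formula for total variation are the two facts I would cite as standard; handling these null-set conventions correctly is the only place where a careless argument could slip.
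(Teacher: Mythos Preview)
Your argument is correct and is essentially the standard proof of the Bretagnolle--Huber inequality (via the chain $1-\mathrm{TV}\geq\tfrac12\rho^2\geq\tfrac12 e^{-\mathrm{KL}}$ using Cauchy--Schwarz and Jensen). Note, however, that the paper does not actually prove this lemma: it is stated as a cited auxiliary result (Thm.~14.2 of \cite{lattimore2020bandit}) and used as a black box in the proof of Lemma~\ref{lemma:bound-psi}, so there is no ``paper's own proof'' to compare against --- your write-up simply supplies the omitted textbook argument, and does so correctly.
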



\section{\hls{} is Sufficient: Proofs of Section \ref{sec:sufficient}}\label{app:sufficient}
We first prove that \hls{} is sufficient for a whole class of algorithms, as done in the proof sketch of Section~\ref{sec:sufficient}. We will then instantiate this result to ELEANOR and LSVI-UCB.

Consider the following assumptions.
\begin{assumption}\label{asm:algo.weak}
Consider a feature map $\{\phi_h\}_{h\in[H]}$ and a Q-function estimate $\overline{Q}_h^k$.     
    There is an event $G(\delta)$ that holds with probability at least $1-\delta$ under which: 
    \begin{enumerate}[label={(\alph*)}]
    \item
    Global optimism: $\overline{V}_1^k(s) \ge V_1^\star(s)$ where $\overline{V}_h^k(s)=\max_{a\in\Aspace}\{\overline{Q}_h^k(s,a)\}$,\label{asm:algo.globalopt}
    \item\label{asm:algo.confidence}
    Confidence set: let $\Lambda_h^{k}=\sum_{i=1}^{k-1}\phi_h(s_h^i,a_h^i)\phi_h(s_h^i,a_h^i)^\transp + \lambda I$ and $\beta_k \in \mathbb{R}_+$ be increasing and logarithmic in $k$, then
    $\overline{V}_h^k(s_h^k) - V_h^{\pi^k}(s_h^k) \le 2\beta_k\norm{\phi_h(s_h^k,a_h^k)}_{(\Lambda_h^k)^{-1}} + \EV_{s'\sim p_h(s_h^k,a_h^k)}\left[\overline{V}_{h+1}^k(s')-V^{\pi^k}_{h+1}(s')\right]$,
\end{enumerate}
simultaneously for all $h\in[H]$, $k\ge 1$ and $s\in\Sspace$, where $\delta\in(0,1)$ is a parameter of the algorithm.
\end{assumption}

\begin{assumption}\label{asm:algo.strong} 
    The algorithm satisfies Assumption~\ref{asm:algo.weak}, and additionally
    there exist a set of constants $(C_h)_{h\in[H]}$ such that, under the event $G(\delta)$: \begin{enumerate}[label={(\alph*)}]\addtocounter{enumi}{2}
    \item\label{asm:algo.localopt}
    (Almost) local optimism:
    $\qquad\overline{Q}^k_h(s,a)+C_h\beta_k\norm{\phi_h(s,a)}_{(\Lambda_h^k)^{-1}} \ge Q^\star_h(s,a)$,
    \end{enumerate}
    for all $h=2,\dots,H$, $k\ge 1$, $s\in\Sspace$ and $a\in\Aspace$.
\end{assumption}

Assumption~\ref{asm:algo.strong} characterizes the class of algorithms for which we are going to prove a constant bound on the regret under \hls{}. However, we first study the regret under the weaker Assumption~\ref{asm:algo.weak}, following the proof pattern from~\citep{Jin2020linear}.


\begin{lemma}\label{lem:general.worstcase}
    Under Assumption~\ref{asm:algo.weak}, assuming event $G(\delta)$ holds, there exists a $\wt{O}(\sqrt{K})$ function $g$ such that, with probability $1-\delta$, for all $K\ge 1$:
\begin{equation}
    R(K) \le H\beta_K\sqrt{2dK\log(1+K/\lambda)} + 2H^2\sqrt{K\log(2HK/\delta)} = \wt{O}(\sqrt{K}).
\end{equation}

\end{lemma}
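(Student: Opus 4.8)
The plan is to follow the regret decomposition of \citep{Jin2020linear}, recast in terms of the abstract properties of Assumption~\ref{asm:algo.weak}. Throughout I would work on the event $G(\delta)$ (which holds with probability at least $1-\delta$), under which \ref{asm:algo.globalopt} and \ref{asm:algo.confidence} hold simultaneously for all $k,h,s$. Define the realized value gap $\delta_h^k := \overline{V}_h^k(s_h^k) - V_h^{\pi^k}(s_h^k)$, with the convention $\overline{V}_{H+1}^k = V_{H+1}^{\pi^k} = 0$. Global optimism \ref{asm:algo.globalopt} gives $V_1^\star(s_1^k) \le \overline{V}_1^k(s_1^k)$, so $R(K) \le \sum_{k=1}^K \delta_1^k$, and it remains to control this sum.

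First I would unroll the Bellman-type recursion furnished by \ref{asm:algo.confidence}: writing $W_{h+1}^k(s) := \overline{V}_{h+1}^k(s) - V_{h+1}^{\pi^k}(s)$ and introducing the one-step prediction error $\xi_{h+1}^k := \EV_{s'\sim p_h(s_h^k,a_h^k)}[W_{h+1}^k(s')] - W_{h+1}^k(s_{h+1}^k)$, property \ref{asm:algo.confidence} reads $\delta_h^k \le 2\beta_k\norm{\phi_h(s_h^k,a_h^k)}_{(\Lambda_h^k)^{-1}} + \delta_{h+1}^k + \xi_{h+1}^k$. Iterating over $h = 1,\dots,H$ with $\delta_{H+1}^k = 0$ and summing over episodes,
\[
R(K) \le 2\sum_{h=1}^H\sum_{k=1}^K \beta_k\norm{\phi_h(s_h^k,a_h^k)}_{(\Lambda_h^k)^{-1}} + \sum_{h=1}^H\sum_{k=1}^K \xi_{h+1}^k .
\]

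It then remains to bound the two double sums. For the bonus term, fix $h$: using that $\beta_k$ is increasing ($\beta_k \le \beta_K$), Cauchy--Schwarz, and the standard elliptical-potential (self-normalized determinant) lemma together with $\norm{\phi_h}_2 \le 1$ and $\lambda = 1$ — so that $\norm{\phi_h(s_h^k,a_h^k)}_{(\Lambda_h^k)^{-1}} \le 1$ — one gets $\sum_{k=1}^K \norm{\phi_h(s_h^k,a_h^k)}_{(\Lambda_h^k)^{-1}} = O\big(\sqrt{dK\log(1+K/\lambda)}\big)$; summing over the $H$ stages and folding in numerical constants yields the first term $H\beta_K\sqrt{2dK\log(1+K/\lambda)}$. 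For the noise term, I would order the $\xi_{h+1}^k$ lexicographically in $(k,h)$ and note that each has zero conditional mean given the history up to $(s_h^k,a_h^k)$: this holds precisely because $\overline{V}_{h+1}^k$ is a deterministic function of the data collected in episodes $1,\dots,k-1$, the policy $\pi^k$ is fixed at the start of episode $k$, and $s_{h+1}^k \sim p_h(\cdot\mid s_h^k,a_h^k)$; moreover $|\xi_{h+1}^k| \le 2H$ since both value functions take values in $[0,H]$. A time-uniform Azuma--Hoeffding inequality — obtained e.g.\ by a peeling argument over dyadic ranges of $K$, or a union bound over episodes, which is what produces the $\log(2HK/\delta)$ factor — then bounds $\sum_{h=1}^H\sum_{k=1}^K \xi_{h+1}^k$ by $2H^2\sqrt{K\log(2HK/\delta)}$ simultaneously for all $K \ge 1$ on an event of probability at least $1-\delta$. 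Adding the two bounds gives the stated inequality, and since $\beta_K$ is logarithmic in $K$ the resulting $g(K)$ is $\wt{O}(\sqrt{K})$.

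\textbf{Main obstacle.} The only genuinely delicate point is making the martingale estimate \emph{anytime} — valid for all $K\ge1$ under a single failure event of probability $\delta$ — since a plain Azuma inequality only controls the sum at a fixed horizon; this forces either a time-uniform concentration bound or a careful union bound over episodes, and it is this step that inflates the logarithmic factor and the power of $H$ appearing in the second term. One must also be scrupulous about the filtration so that $\xi_{h+1}^k$ really is a martingale difference, the crucial fact being that $\overline{V}_{h+1}^k$ and $\pi^k$ are both determined before the episode-$k$ transitions are sampled. The remaining ingredients — the recursion from \ref{asm:algo.confidence} and the elliptical-potential bound on the sum of bonuses — are entirely routine.
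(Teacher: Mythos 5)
Your proposal is correct and follows essentially the same route as the paper's proof: optimism plus recursive application of the confidence-set property to decompose the regret into a sum of bonuses (handled via $\beta_k\le\beta_K$, Cauchy--Schwarz, and the elliptical potential lemma) and a martingale term (handled via an anytime Azuma inequality with a union bound over the $H$ stages, which is exactly where the $\log(2HK/\delta)$ factor and the extra power of $H$ arise). No substantive differences.
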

\begin{proof}
Under event $G(\delta)$:
\begin{align}
    R(K) &= \sum_{k=1}^K V_1^\star(s_1^k) - V_1^{\pi^k}(s_1^k) \nonumber\\
    &\le \sum_{k=1}^K \overline{V}_1^k(s_1^k) - V_1^{\pi^k}(s_1^k) &&\text{\ref{asm:algo.globalopt}}\\
    &\le \underbrace{2\sum_{h=1}^H\beta_{K}\sum_{k=1}^K\norm{\phi_h(s_h^k,a_h^k)}_{(\Lambda_{h}^k)^{-1}}}_{(A)} + \underbrace{\sum_{k=1}^K\sum_{h=1}^H\zeta_{h}^k}_{(B)},
\end{align} 
where the last inequality is from recursive application of~\ref{asm:algo.confidence} and the fact that $\beta_k$ is increasing, and:
\begin{equation}
    \zeta_{h}^k = \EV_{s'\sim p_h(s_h^k,a_h^k)}[\overline{V}_{h+1}^k(s') - V^{\pi^k}_{h+1}(s')] - \overline{V}_{h+1}^k(s_{h+1}^k) + V^{\pi^k}_{h+1}(s_{h+1}^k),
\end{equation}
where expectations are conditioned on the history up to the beginning of episode $k$. We bound $(A)$ using the Elliptical Potential Lemma~\citep[e.g.,][]{abbasi2011improved}:
\begin{align}
    (A)&=2\beta_K\sum_{h=1}^H\sum_{k=1}^K\norm{\phi_h(s_h^k,a_h^k)}_{(\Lambda_{h}^k)^{-1}}\\ 
    &2\beta_K\sum_{h=1}^H\le \sqrt{K\sum_{k=1}^K\norm{\phi_h(s_h^k,a_h^k)}^2_{(\Lambda_{h}^k)^{-1}}} \\
    &\le H\beta_K\sqrt{2dK\log(1+K/\lambda)}.
\end{align}
Since $\zeta_h^k$ is a martingale difference sequence with $\zeta_h^k\le 2H$, we can use Azuma's inequality (Prop.~\ref{prop:azuma}) to bound $(B)$:
\begin{equation}
    \sum_{k=1}^K\zeta_h^k \le 2H\sqrt{K \log(2K/\delta_{h})}, 
\end{equation}
with probability $1-\delta_h$ for all $K\ge 1$.
To make it hold with probability $1-\delta$ for all $h\in[H]$, we set $\delta_{h}=\delta/H$. Finally:
\begin{equation}
    (B) = \sum_{h=1}^H\sum_{k=1}^K\zeta_h^k \le 2H^2\sqrt{K\log(2HK/\delta)}.
\end{equation}
\end{proof}

The stronger Assumption~\ref{asm:algo.strong} is needed to upper-bound the gaps.
\begin{lemma}\label{lem:gapbound}
    Under Assumption~\ref{asm:algo.strong}, assuming event $G(\delta)$ holds, for all $s\in\Sspace$, $h\in[H]$ and $k\ge 1$:
    \begin{equation*}
        \Delta_h(s,\pi_h^k(s)) \le 2\EV_{\pi^k}\left[\sum_{i=h}^H\beta_k\norm{\phi_i(s_i,a_i)}_{(\Lambda_i^k)^{-1}}\Bigg| s_h=s\right] + \indi{h>1}C_h\beta_k\norm{\phi^\star(s)}_{(\Lambda_h^k)^{-1}}.
    \end{equation*}
\end{lemma}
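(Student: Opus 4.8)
The plan is to bound the gap $\Delta_h(s, \pi_h^k(s)) = V_h^\star(s) - Q_h^\star(s, \pi_h^k(s))$ by inserting the optimistic estimates $\overline{Q}_h^k$ and $\overline{V}_h^k$ and then peeling off one stage at a time via a telescoping/recursive argument over the trajectory followed by $\pi^k$. First I would write
\[
    \Delta_h(s, \pi_h^k(s)) = \big(V_h^\star(s) - \overline{Q}_h^k(s, \pi_h^k(s))\big) + \big(\overline{Q}_h^k(s, \pi_h^k(s)) - Q_h^\star(s, \pi_h^k(s))\big).
\]
For the second bracket, since $\pi^k$ is greedy w.r.t.\ $\overline{Q}^k$ (property~\ref{p:greedy}), we have $\overline{Q}_h^k(s,\pi_h^k(s)) = \overline{V}_h^k(s)$; but more directly, I would apply the (almost) local optimism property~\ref{asm:algo.localopt} when $h > 1$: $\overline{Q}_h^k(s,a) - Q_h^\star(s,a) \le C_h \beta_k \norm{\phi_h(s,a)}_{(\Lambda_h^k)^{-1}}$, and note that at $a = \pi_h^k(s)$, because $\pi^k$ is greedy, $\phi_h(s,\pi_h^k(s))$ should be replaced by $\phi^\star(s) = \phi_h(s,\pi_h^\star(s))$ in the final bound --- this is the subtle point where one has to be careful, using that $\overline{Q}_h^k(s,\pi_h^k(s)) \ge \overline{Q}_h^k(s,\pi_h^\star(s))$ (greedy) combined with local optimism applied at $\pi_h^\star(s)$ instead, so that $\overline{Q}_h^k(s,\pi_h^k(s)) \ge Q_h^\star(s,\pi_h^\star(s)) - C_h\beta_k\norm{\phi^\star(s)}_{(\Lambda_h^k)^{-1}} = V_h^\star(s) - C_h\beta_k\norm{\phi^\star(s)}_{(\Lambda_h^k)^{-1}}$. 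That already disposes of the first bracket (up to the $C_h$ term with the indicator $\indi{h>1}$), so the remaining work is to bound $\overline{Q}_h^k(s,\pi_h^k(s)) - Q_h^\star(s,\pi_h^k(s))$.

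For this remaining term I would use a recursion. Since $Q_h^\star = L_h Q_{h+1}^\star = r_h + \EV_{s'\sim p_h}[V_{h+1}^\star(s')]$, and since the confidence-set property~\ref{asm:algo.confidence} (suitably stated for arbitrary $s$, not just $s_h^k$ --- I would invoke the version that actually holds pointwise, as is available for both ELEANOR and LSVI-UCB) gives
\[
    \overline{Q}_h^k(s,a) - Q_h^\star(s,a) \le 2\beta_k\norm{\phi_h(s,a)}_{(\Lambda_h^k)^{-1}} + \EV_{s'\sim p_h(s,a)}\big[\overline{V}_{h+1}^k(s') - V_{h+1}^\star(s')\big],
\]
and since $\overline{V}_{h+1}^k(s') - V_{h+1}^\star(s') = \overline{Q}_{h+1}^k(s',\pi_{h+1}^k(s')) - V_{h+1}^\star(s') \le \overline{Q}_{h+1}^k(s',\pi_{h+1}^k(s')) - Q_{h+1}^\star(s',\pi_{h+1}^k(s'))$ (the last inequality because $Q_{h+1}^\star(s',\pi_{h+1}^k(s')) \le V_{h+1}^\star(s')$), I can unroll this recursion along the trajectory generated by $\pi^k$ starting from $(s,\pi_h^k(s))$ at stage $h$. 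Each unrolling step contributes a $2\beta_k\norm{\phi_i(s_i,a_i)}_{(\Lambda_i^k)^{-1}}$ term, the expectation over $s'$ accumulates into $\EV_{\pi^k}[\cdot \mid s_h = s]$, and the recursion terminates at $h = H+1$ where $\overline{V}_{H+1}^k = V_{H+1}^\star = 0$. This yields $\overline{Q}_h^k(s,\pi_h^k(s)) - Q_h^\star(s,\pi_h^k(s)) \le 2\EV_{\pi^k}[\sum_{i=h}^H \beta_k\norm{\phi_i(s_i,a_i)}_{(\Lambda_i^k)^{-1}} \mid s_h = s]$, which combined with the first-bracket bound gives exactly the claim.

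The main obstacle I anticipate is handling the optimism properties at an \emph{arbitrary} state $s$ and action $\pi_h^k(s)$ rather than at the on-trajectory pair $(s_h^k, a_h^k)$: the statements~\ref{asm:algo.confidence} and~\ref{asm:algo.localopt} as written are pointwise in $s$ (and in $a$ for~\ref{asm:algo.localopt}), so one must make sure the versions used for ELEANOR and LSVI-UCB indeed hold for all $s$ (and all $a$), which is the case by their respective concentration lemmas. A second delicate point is the bookkeeping that makes the $\indi{h>1}C_h\norm{\phi^\star(s)}$ term appear only once (at the top stage $h$) and not recursively at every subsequent stage: this works because at every deeper stage $i > h$ I relax $\overline{V}_i^k(s_i) - V_i^\star(s_i) \le \overline{Q}_i^k(s_i,\pi_i^k(s_i)) - Q_i^\star(s_i,\pi_i^k(s_i))$ and apply the confidence property~\ref{asm:algo.confidence} (cost $2\beta_k\norm{\phi_i}$), never needing local optimism again --- so the $C_h$ term is genuinely incurred only at the initial stage, and only when $h > 1$ since at $h = 1$ global optimism (property~\ref{asm:algo.globalopt}) already gives $\overline{V}_1^k(s) \ge V_1^\star(s)$ directly, with no $C_1$ penalty.
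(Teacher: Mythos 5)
Your proposal is correct and follows the same skeleton as the paper's proof: the $\indi{h>1}C_h$ term is produced exactly as in the paper, by applying almost-local optimism~\ref{asm:algo.localopt} at the \emph{optimal} action $\pi_h^\star(s)$ and then absorbing $\overline{Q}_h^k(s,\pi_h^\star(s))$ into $\overline{V}_h^k(s)$ via greediness, and the sum of confidence widths comes from a stage-wise telescoping along the trajectory of $\pi^k$ (with global optimism~\ref{asm:algo.globalopt} covering the case $h=1$). The one point of divergence is the comparator in the recursion: you unroll $\overline{Q}_i^k - Q_i^\star$, relaxing $\overline{V}_{i}^k(s)-V_{i}^\star(s)\le \overline{Q}_{i}^k(s,\pi_{i}^k(s))-Q_{i}^\star(s,\pi_{i}^k(s))$ at each step, whereas the paper first writes $\Delta_h(s,\pi_h^k(s))\le V_h^\star(s)-Q_h^{\pi^k}(s,\pi_h^k(s))=V_h^\star(s)-V_h^{\pi^k}(s)$ and then telescopes $\overline{V}_i^k-V_i^{\pi^k}$ using property~\ref{asm:algo.confidence} in the exact form in which it is assumed. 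Your route therefore requires the confidence inequality with $V^\star$ in place of $V^{\pi^k}$; this variant is indeed available for both ELEANOR and LSVI-UCB (the underlying concentration lemmas hold for any comparator policy, including $\pi^\star$), but it is not literally what Assumption~\ref{asm:algo.weak}\ref{asm:algo.confidence} grants, so the paper's decomposition is marginally more economical with respect to the stated hypotheses. The caveat you raise about needing the confidence property at arbitrary states $s$ rather than only at the realized pair $(s_h^k,a_h^k)$ is well taken, and it applies equally to the paper's own final step, so it does not distinguish the two arguments.
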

\begin{proof}
    \begin{align}
        \Delta_h(s,\pi_h^k(s))&= V^\star_h(s) - Q^\star_h(s_h^k,\pi_h^k(s)) \\
        &\le V^\star_h(s) - Q^{\pi^k}_h(s_h^k,\pi_h^k(s)) \\
        &= V^\star_h(s) - V^{\pi^k}_h(s) \\
        &= Q^\star_h(s, \pi^\star_h(s)) - V^{\pi^k}_h(s) \\
        &\le \overline{Q}_h^k(s,\pi^\star_h(s)) + \indi{h>1}C_h\beta_k\norm{\phi_h(s,\pi^\star_h(s))}_{(\Lambda_{h}^k)^{-1}} - V^{\pi^k}_h(s) \label{pp:gapbound.1} \\
        &\le \overline{V}_h^k(s) + \indi{h>1}C\sqrt{\gamma_{hk}}\norm{\phi_h(s,\pi^\star_h(s))}_{\Lambda_{hk}^{-1}} - V^{\pi^k}_h(s) \\
        &\le 2\EV_{\pi^k}\left[
            \sum_{i=h}^H\beta_k\norm{\phi_i(s_i,a_i)}_{(\Lambda_i^k)^{-1}}
            \Bigg| s_h=s\right]  
            \\\nonumber&\qquad
            +\indi{h>1}C_h\beta_k\norm{\phi_h(s_h^k,\pi^\star_h(s_h^k))}_{(\Lambda_h^k)^{-1}},
    \end{align}
    where~\eqref{pp:gapbound.1} uses~\ref{asm:algo.globalopt} for $h=1$ and~\ref{asm:algo.localopt} for $h>1$, while the last inequality is from recursive application of~\ref{asm:algo.confidence}.
\end{proof}

Now we can prove our main result on constant regret:

\begin{theorem}\label{thm:general.const}
Any algorithm satisfying Assumption~\ref{asm:algo.strong} enjoys constant regret if the representation has the \hls property (Asm.~\ref{asm:feature.structure}) and Assumption~\ref{asm:gap} on the minimum gap holds. In general, let $g:\mathbb{N}\to\Reals_+$ be any increasing $\wt{O}(\sqrt{K})$ function such that, with probability $1-2\delta$ for all $K\ge 1$, $R(K)\le g(K)$. Then, under Assumptions~\ref{asm:gap},~\ref{asm:feature.structure},~\ref{asm:algo.strong}, with probability $1-3\delta$ for all $K\ge 1$:
\begin{equation}
    R(K) \le g(\overline{\kappa}) = O(1),
\end{equation}
where $\overline{\kappa}$ is a constant independent of $K$. 
\end{theorem}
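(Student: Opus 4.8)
The plan is to formalize the combined proof sketch from Section~\ref{sec:sufficient}, in three stages: (i) a lower bound on the design matrices $\Lambda_h^k$ that grows linearly in $k$ along the directions relevant to the optimal policy; (ii) a resulting uniform $\wt{O}(k^{-1/2})$ decay of the confidence bonuses $\beta_k\norm{\phi_h(s,a)}_{(\Lambda_h^k)^{-1}}$ on all \emph{reachable} state-action pairs; and (iii) a finite-time collapse of the per-stage gaps $\Delta_h(s,\pi_h^k(s))$ to zero via Lemma~\ref{lem:gapbound} and Assumption~\ref{asm:gap}, which pins down the last episode $\overline{\kappa}$ with nonzero regret, after which the regret is frozen at $g(\overline{\kappa})$.

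\textbf{Step 1 (controlling the design matrix).} Work under event $G(\delta)$ and on the event $\{R(K)\le g(K)\ \forall K\}$, which together hold with probability $1-3\delta$ once we add an extra $\delta$ for a martingale concentration step below. Fix $h\in[H]$. Decompose each played feature as $\phi_h(s_h^k,a_h^k)=\phi_h^\star(s_h^k)+(\phi_h(s_h^k,a_h^k)-\phi_h^\star(s_h^k))$, the second term being nonzero only when a suboptimal action is played at stage $h$. Since $R(K)\le g(K)$ and every suboptimal action costs at least $\Delta_{\min}$ of regret (by Assumption~\ref{asm:gap} and the Bellman decomposition~\eqref{eq:sum.of.gaps}), the number of episodes in which $a_h^k\neq\pi_h^\star(s_h^k)$ is at most $g(K)/\Delta_{\min}$. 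Hence, dropping those rank-one contributions,
\begin{equation*}
    \Lambda_h^{k+1} \succeq \sum_{i=1}^k \phi_h^\star(s_h^i)\phi_h^\star(s_h^i)^\transp + \lambda I - \frac{g(k)}{\Delta_{\min}} I.
\end{equation*}
Now $\sum_{i=1}^k \phi_h^\star(s_h^i)\phi_h^\star(s_h^i)^\transp$ is, conditionally, a sum whose mean at episode $i$ is $\EV_{s\sim\rho_h^{\pi^i}}[\phi_h^\star(s)\phi_h^\star(s)^\transp]$; but again by the regret bound, $\pi^i$ differs from $\pi^\star$ on at most $g(k)/\Delta_{\min}$ episodes, so all but that many of these conditional means equal $\Lambda_h^\star=\EV_{s\sim\rho_h^\star}[\phi_h^\star(s)\phi_h^\star(s)^\transp]$. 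Applying a matrix Azuma/Freedman bound (Prop.~\ref{prop:azuma} adapted to matrices, costing the extra $\delta$) to the martingale part yields
\begin{equation*}
    \Lambda_h^{k+1} \succeq k\Lambda_h^\star + \lambda I - \frac{g(k)}{\Delta_{\min}} I - c\sqrt{k\log(dkH/\delta)}\, I,
\end{equation*}
matching~\eqref{eq:design}. Since $g(k)=\wt{O}(\sqrt{k})$, the penalty is $\wt{O}(\sqrt k)$ and the first term dominates in every direction where $\Lambda_h^\star$ has a nonzero eigenvalue, i.e.\ (by the \hls property, Assumption~\ref{asm:feature.structure}) in $\spann\{\phi_h^\star(s)\mid\rho_h^\star(s)>0\}$, which by \hls equals $\spann\{\phi_h(s,a)\mid(s,a)\text{ reachable at }h\}$.

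\textbf{Step 2 (shrinking bonuses).} Let $(s,a)$ be reachable at stage $h$, so $\phi_h(s,a)$ lies in the span above; writing $\phi_h(s,a)$ in the eigenbasis of $\Lambda_h^\star$, only coordinates with eigenvalue $\ge\lambda_h^+\ge\lambda_+>0$ appear. Combining with Step~1 and $\norm{\phi_h(s,a)}_2\le 1$ (no extra factor, by choosing $\norm{\cdot}$ as in Asm.~\ref{asm:lowrank}) gives, once $k$ is large enough that the penalty is, say, at most $\tfrac12 k\lambda_+$,
\begin{equation*}
    \norm{\phi_h(s,a)}_{(\Lambda_h^k)^{-1}}^2 \le \frac{1}{k\lambda_+ - \wt{O}(\sqrt k)},
\end{equation*}
so $\beta_k\norm{\phi_h(s,a)}_{(\Lambda_h^k)^{-1}}=\wt{O}(k^{-1/2})$ since $\beta_k$ is logarithmic in $k$; this is exactly~\eqref{eq:shrink} (the $3/2$ exponent there comes from bounding $\norm{\phi}_2$ by a sum of $\wt O(\sqrt k)$-many played features rather than by $1$, and either route works). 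Crucially the optimal features $\phi_h^\star(s)$ for $\rho_h^\star(s)>0$ are themselves reachable, so the $\indi{h>1}C_h\beta_k\norm{\phi^\star(s)}_{(\Lambda_h^k)^{-1}}$ term in Lemma~\ref{lem:gapbound} is controlled the same way.

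\textbf{Step 3 (finite-time optimality and conclusion).} Plug Step~2 into Lemma~\ref{lem:gapbound}: for every reachable $s$ and every $h$,
\begin{equation*}
    \Delta_h(s,\pi_h^k(s)) \le \beta_k\sum_{i=h}^H\big(2+\indi{i=h>1}C_h\big)\,\wt{O}(k^{-1/2}) \;\xrightarrow[k\to\infty]{}\;0.
\end{equation*}
Because $\beta_k$ grows only polylogarithmically, the right-hand side is a monotonically decreasing (up to logs) function of $k$ that is strictly below $\Delta_{\min}$ for all $k\ge\kappa_h$, where $\kappa_h$ is defined by the threshold condition~\eqref{eq:tau} and is a finite quantity depending only on $\Delta_{\min}$, $\lambda_+$, $H$, $d$, $\delta$ and the algorithm's $\beta$, $g$ — not on $K$. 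By definition of the minimum positive gap (Asm.~\ref{asm:gap}), any $\Delta_h(s,\pi_h^k(s))<\Delta_{\min}$ must equal $0$, i.e.\ $\pi_h^k(s)=\pi_h^\star(s)$ on every reachable state, for $k\ge\kappa_h$; for unreachable $s$ the choice is irrelevant. Setting $\overline{\kappa}=\max_h\kappa_h$, the Bellman decomposition~\eqref{eq:sum.of.gaps} gives $V_1^\star(s_1^k)-V_1^{\pi^k}(s_1^k)=0$ for all $k>\overline{\kappa}$, hence $R(K)=R(\overline{\kappa})\le g(\overline{\kappa})$ for all $K$, which is the claimed constant. Everything holds on the intersection of $G(\delta)$, the regret event ($1-2\delta$ total from the instantiation of $g$), and the matrix-martingale event ($\delta$), i.e.\ with probability $1-3\delta$.

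\textbf{Main obstacle.} The delicate point is Step~1: making rigorous the passage from ``few suboptimal episodes'' to the \emph{additive, dimension-wise} lower bound $\Lambda_h^{k+1}\succeq k\Lambda_h^\star-\wt O(\sqrt k)I$. One must (a) handle that the conditional covariance at episode $i$ is $\EV_{\rho_h^{\pi^i}}[\phi_h^\star\phi_h^{\star\transp}]$ rather than $\Lambda_h^\star$, using a regret-to-policy-discrepancy bound to say these coincide except on $\wt O(\sqrt k)$ episodes, and (b) concentrate the martingale deviation \emph{in operator norm} uniformly over all $k$ (a union bound over $k$ inside the log is fine since $\beta_k$, $g(k)$ absorb it). One also has to be careful that the bonus decay in Step~2 is needed only on reachable pairs — it genuinely fails off the span, which is precisely why the \hls hypothesis, equating the optimal span with the reachable span, is doing the essential work; without it $\Psi$-type directions survive and the argument (correctly) breaks, mirroring the necessity proof of Theorem~\ref{thm:necessity}.
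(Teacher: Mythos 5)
Your proposal is correct and follows essentially the same three-step route as the paper's proof: the design-matrix lower bound $\Lambda_h^{k+1}\succeq k\Lambda_h^\star - \wt{O}(\sqrt{k})I$ is the paper's Lemma~\ref{lem:loewner} (which resolves the measure-change issue you flag by restricting to the event that the trajectory agrees with $\pi^\star$ through stage $h$, on which the trajectory distributions coincide, and bounding the complement's mass by $R(k)/\Delta_{\min}$), the bonus-shrinkage step is Lemma~\ref{lem:key}, and the gap-collapse conclusion via Lemma~\ref{lem:gapbound} and Prop.~\ref{prop:sumofgaps} matches the paper verbatim. The only cosmetic difference is in Step~2, where you bound $\phi^\transp(B_h^k)^{-1}\phi$ directly by the smallest positive eigenvalue while the paper routes through the Kantorovich-type inequality of Prop.~\ref{prop:kanto}; both yield the required $\wt{O}(k^{-1/2})$ decay.
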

\begin{proof}
    First notice that a valid regret upper bound $g(K)$ always exists due to Lemma~\ref{lem:general.worstcase}. 
    Moreover, due to Asm.~\ref{asm:feature.structure}, for all $h\in[H]$ and $k\ge1$, we have $\phi_h(s,\pi_h^k(s))\in\mathrm{span}\{\phi^\star_h(s)|\rho_h^\star(s)>0\}$ for all $s\in\Sspace$ such that $\rho_h^{\pi^k}(s)>0$.
    Hence, with probability $1-2\delta$, the requirements of Lemma~\ref{lem:key} are satisfied  and we can apply it to the gap upper bound from Lemma~\ref{lem:gapbound}. So, with probability $1-3\delta$, for all $s\in\Sspace$, $h\in[H]$ and $k\ge \wt{\kappa} = \max_{h\in[H]}\wt{\kappa}_h$:
    \begin{align}
        \Delta_h(s,\pi_h^k(s)) &\le 2\EV_{\pi^k}\left[\sum_{i=h}^H\beta_k\norm{\phi_i(s_i,a_i)}_{(\Lambda_i^k)^{-1}}\Bigg| s_h=s\right] 
        \nonumber\\&\qquad+ \indi{h>1}C_h\beta_k\norm{\phi^\star(s)}_{(\Lambda_h^k)^{-1}}\\
        &\le (2+\indi{h>1}C_h)\beta_k\sum_{i=h}^H\frac{k+\lambda-g(k)-8\sqrt{k\log(2dHk/\delta)}}{(k\lambda_{i}^{+}+\lambda - g(k)-8\sqrt{k\log(2dHk/\delta)})^{3/2}}.\label{eq:kappabound}
    \end{align}
    Assume for now that $k\ge\wt{\kappa}$. From the previous inequality, since $g(k)=\wt{O}(\sqrt{k})$ and $\beta_k=\wt{O}(1)$, there exists a $\kappa_h$ independent of $K$ such that, for $k>\kappa_h$:
    \begin{equation}\label{eq:deltaineq}
        \Delta_h(s,\pi_h^k(s)) \le \Delta_{\min}.
    \end{equation}
    Under Asm.~\ref{asm:gap}, this implies $\Delta_h(s,\pi_h^k(s))=0$. Let $\overline{\kappa} = \max\{\wt{\kappa},\max_h\{\kappa_h\}\}$. For $k>\overline{\kappa}$, all the gaps are zero. Finally, by Prop.~\ref{prop:sumofgaps}:
    \begin{align}
        R(K) &=\sum_{k=1}^K\EV_{\pi^k}\left[\sum_{h=1}^H\Delta_h(s_h,a_h)\Bigg|s_1=s_1^k\right] \\
        &=\sum_{k=1}^{\overline{\kappa}}\EV_{\pi^k}\left[\sum_{h=1}^H\Delta_h(s_h,a_h)\Bigg|s_1=s_1^k\right] + \sum_{k=\overline{\kappa}+1}^K\EV_{\pi^k}\left[\sum_{h=1}^H\underbrace{\Delta_h(s_h,a_h)}_{=0}\Bigg|s_1=s_1^k\right] \\
        &= R(\overline{\kappa}) \le g(\overline{\kappa}).
    \end{align}
\end{proof}

Finally, we instantiate the general result of~\ref{thm:general.const} to ELEANOR on MDPs with Bellman closure and LSVI-UCB on low-rank MDPs, by showing that they satisfy Assumption~\ref{asm:algo.strong}. 

\paragraph{Proof of Theorem~\ref{thm:const.ele}.}
\begin{proof}[\normalfont Let:]
\begin{equation}\label{eq:beta.eleanor}
    \beta_{k} =  H\sqrt{\frac{d}{2}\log(1+k/d)+d\log(1+4\sqrt{dk}) +\log\frac{2Hk^2}{\delta}} + 1,
\end{equation}
and define event $G(\delta)$ as in Lemma 2 from~\citep{Zanette2020low}.
We have~\ref{asm:algo.globalopt} by Lemma 7 from~\citep{Zanette2020low}, while \ref{asm:algo.confidence} can be extracted from the proof of Theorem 1 from~\citep{Zanette2020low}. To prove~\ref{asm:algo.localopt}, we use the fact that the MDP satisfies Bellman closure, hence there exist $\vtheta_1^\star,\dots,\vtheta_H^\star$ such that~\citep[Lemma 6 from][]{Zanette2020low}:
\begin{equation}
    Q_h^\star(s,a) = \phi_h(s,a)^\transp\vtheta_h^\star.
\end{equation}
By Lemma 7 from~\citep{Zanette2020low}, $\vtheta_1^\star,\dots,\vtheta_H^\star$ is a feasible solution for $\overline{\vtheta}_1,\dots,\overline{\vtheta}_H$ in ELEANOR's program~\citep[Definition 2 from][]{Zanette2020low}. Due to the program's constraints:
\begin{equation}
        \norm{\vtheta^\star_h - \widehat{\vtheta}_h^k}_{\Lambda_{h}^k} \le \beta_k.
    \end{equation}
    Let $\overline{\vtheta}_1^k,\dots,\overline{\vtheta}_H^k$ be the values that are actually selected by ELEANOR's program. Since they are subject to the same constraints, by the triangular inequality:
    \begin{equation}
        \norm{\vtheta^\star_h - \overline{\vtheta}_h^k}_{\Lambda_{h}^k} \le 2\beta_k.
    \end{equation}
    Finally, since $\overline{Q}_h^k(s,a)=\phi_h(s,a)^\transp\overline{\vtheta}_h^k$:
    \begin{align}
        Q_h^\star(s_h,a_h) &= \phi_h(s_h,a_h)^\transp\vtheta^\star_h\\
        &= \phi_h(s_h,a_h)^\transp\overline{\vtheta}_h^k + \phi_h(s_h,a_h)^\transp(\vtheta^\star_h-\overline{\vtheta}_h^k) \\
        &\le\overline{Q}_h^k(s_h,a_h) + \norm{\phi(s_h,a_h)}_{(\Lambda_{h}^k)^{-1}}\norm{\vtheta^\star_h-\overline{\vtheta}_h^k}_{\Lambda_{h}^k} \\
        &\le \overline{Q}_h^k(s_h,a_h) + 2\beta_k\norm{\phi(s_h,a_h)}_{(\Lambda_{h}^k)^{-1}},
    \end{align}
    so~\ref{asm:algo.localopt} holds with $C_h=2$. So Asm.~\ref{asm:algo.strong} holds and we can invoke Theorem~\ref{thm:general.const} with the upper bound $g$ from Lemma~\ref{lem:general.worstcase} and the $\beta_k$ given above to obtain:
    \begin{align}
        R(K) &\le H^2\left(\sqrt{\frac{d}{2}\log(1+\overline{\kappa}/d)+d\log(1+4\sqrt{d\overline{\kappa}}) + \log(H\overline{\kappa}^2) +\log\frac{2}{\delta}} + H\right)
        \nonumber\\&\qquad\times
        \sqrt{2d\overline{\kappa}\log(1+\overline{\kappa}/\lambda)} + 2H^2\sqrt{\overline{\kappa}\log(2H\overline{\kappa}/\delta)} \\
        &\lesssim H^{3/2}d\sqrt{\wb\tau\log\frac{\wb\tau}{\delta}},
    \end{align}
    where $\overline{\tau}=H\overline{\kappa}$.
\end{proof}

\begin{remark}
We have slightly modified the ELEANOR algorithm to obtain any-time regret bounds. In particular, we have replaced the fixed $\delta'=\delta/(2T)$ term in the original $\beta_k$ (see the proof of Lemma 2 in~\citep{Zanette2020low}) with the adaptive $\delta/(2Hk^2)$. This still makes event $G(\delta)$ hold with probability $1-\delta$, but without knowledge of the horizon $K$.
This only affects logarithmic terms.
Also notice that we have considered the case of zero inherent Bellman error ($\mathcal{I}=0$), which corresponds to Bellman closure, and we have taken $[0,H]$, not $[0,1]$, as the range of the value function (see the comment following Theorem 1 in~\citep{Zanette2020low}). 
\end{remark}

For LSVI-UCB, we can instantiate Theorem~\ref{thm:general.const} with the problem-dependent logarithmic lower bound by~\citet{He2020loglin} in place of the worst-case upper bound from Lemma~\ref{lem:general.worstcase}.

\paragraph{Proof of Theorem~\ref{thm:const.lsvi}.}
\begin{proof}[\normalfont Let:]
\begin{equation}\label{eq:beta.lsvi}
    \beta_k=c_\beta dH\sqrt{\log(2dHk/\delta)},
\end{equation}
where $c_\beta$ is a constant defined in Lemma C.3 from~\citep{Jin2020linear}, and define event $G(\delta)$ as in Lemma B.3 from~\citep{Jin2020linear}. Then since the MDP is low-rank, by Lemma B.5 from~\citep{Jin2020linear} we have both~\ref{asm:algo.globalopt} and~\ref{asm:algo.localopt} with $C_h=0$. We get~\ref{asm:algo.confidence} by Lemma B.4 from~\cite{Jin2020linear}. So Asm.~\ref{asm:algo.strong} holds and, under Asm~\ref{asm:gap}, we can instantiate Theorem~\ref{thm:general.const} with the logarithmic regret bound from Theorem 4.4 by~\citet{He2020loglin}:
\begin{equation}
    g(k) = 9HG(k)\log G(k) + \frac{16H^2}{3}\log\frac{\log\lceil Hk\rceil}{\delta} + 2,
\end{equation}
where:
\begin{equation}
    G(k)\propto\frac{d^3H^4\log(4dH^2k(k+1)\log(H/\Delta_{\min})/\delta)}{\Delta_{\min}}.
\end{equation}
So:
\begin{equation}
    R(K) \le g(\overline{\kappa})\simeq \frac{d^3 H^5}{\Delta_{\min}} \log \big(dH^2 \wb\kappa / \delta\big).
\end{equation}
\end{proof}

\begin{remark}
We have slightly modified the LSVI-UCB algorithm to obtain any-time regret bounds. In particular, we have replaced the fixed $\iota=\log(2dT/\delta)$ term in the original $\beta_k$ (see Theorem 3.1 from~\citep{Jin2020linear}) with the adaptive $\log(4dHk^2/\delta)$. This still makes event $G(\delta)$ hold with probability $1-\delta$, but without knowledge of the horizon $K$.
We have also re-written the logarithmic regret bound by~\citet{He2020loglin} (Theorem 4.4) to hold with probability $1-2\delta$. These changes only affect logarithmic terms.
\end{remark}


\begin{lemma}\label{lem:kappabound_lsvi}
The critical time $\overline{\kappa}$ from Theorem~\ref{thm:const.lsvi} for LSVI-UCB is upper bounded as:
\begin{equation}
    \overline{\kappa} \le \max\left\{
    \frac{48c_1^2H^4d^3}{\lambda_+^2}\log\left(\frac{32c_1^2H^5d^4}{\lambda_+^2\delta}\right),
    \frac{432c_2^2H^4d^2}{\Delta_{\min}^2\lambda_+^3}\log\left(\frac{288d^3H^5c_2^2}{\Delta_{\min}^2\lambda_+^3\delta}\right)
    \right\}
\end{equation}
where $\lambda_+ = \min_{h\in[H]}\{\lambda_h^+\}$ and $c_1,c_2$ are universal constants.
\end{lemma}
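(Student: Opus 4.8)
The plan is to trace the definition of $\overline{\kappa}$ through the proof of Theorem~\ref{thm:general.const}, specialised to LSVI-UCB, and then resolve the two transcendental inequalities that pin it down. Recall from that proof that $\overline{\kappa}=\max\{\wt{\kappa},\max_{h\in[H]}\kappa_h\}$, where $\wt{\kappa}=\max_h\wt{\kappa}_h$ is the first episode after which the design-matrix lower bound of Lemma~\ref{lem:key} becomes effective (so that the denominators $k\lambda_i^++\lambda-g(k)-8\sqrt{k\log(2dHk/\delta)}$ in~\eqref{eq:kappabound} are at least, say, $\tfrac12 k\lambda_i^+$ while the numerators are at most $k$), and $\kappa_h$ is the first episode $k>\wt{\kappa}$ at which the right-hand side of~\eqref{eq:kappabound} drops below $\Delta_{\min}$. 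For LSVI-UCB one has $C_h=0$ and $\beta_k=c_\beta dH\sqrt{\log(2dHk/\delta)}$ from~\eqref{eq:beta.lsvi}; inside Lemma~\ref{lem:key} I would use the \emph{worst-case} regret bound of Lemma~\ref{lem:general.worstcase} as the sublinear penalty on $\Lambda_h^k$, since it is always valid and is only $\wt{O}(\sqrt{k})$, and would separately check that the problem-dependent polylogarithmic $g$ used elsewhere does not worsen the final constant.

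\textbf{First term (the $\wt{\kappa}$ condition).} Lemma~\ref{lem:key} requires that the two sublinear penalties on the relevant eigenvalues of $\Lambda_h^k$ — the matrix-concentration term $\wt{O}(\sqrt{k\log(dHk/\delta)})$ and the occupancy-mismatch term, bounded by the cumulative regret and hence by $g(k)=\wt{O}(\sqrt{d^3H^4k\log(dHk/\delta)})$ (from Lemma~\ref{lem:general.worstcase} with $\beta_k\propto dH\sqrt{\log}$) — be dominated by $\tfrac12 k\lambda_h^+$ for every $h$. Each such requirement is of the form $k\ge c'\sqrt{k\,\mathrm{poly}(d,H)\log(dHk/\delta)}/\lambda_+$, i.e.\ $k\ge A_1\log(B_1k)$ with $A_1\asymp d^3H^4/\lambda_+^2$ (the dominant polynomial) and $B_1\asymp dH/\delta$. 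Solving this transcendental inequality via the elementary fact ``$k\ge 2A\log(2AB)$ with $2AB\ge\sqrt{e}$ implies $k\ge A\log(Bk)$'' gives $\wt{\kappa}\le\frac{48c_1^2H^4d^3}{\lambda_+^2}\log\!\big(\frac{32c_1^2H^5d^4}{\lambda_+^2\delta}\big)$ for a universal constant $c_1$; the extra $dH$ in the log argument comes from evaluating $B_1k$ at $k\approx A_1$.

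\textbf{Second term (the $\kappa_h$ condition).} For $k>\wt{\kappa}$, inequality~\eqref{eq:kappabound} with $C_h=0$ collapses to
\begin{equation*}
    \Delta_h(s,\pi_h^k(s))\le 2\beta_k\sum_{i=h}^H\frac{k}{(\tfrac12 k\lambda_i^+)^{3/2}}\le\frac{2^{5/2}H\beta_k}{\sqrt{k}\,\lambda_+^{3/2}}.
\end{equation*}
Imposing that this be $<\Delta_{\min}$ and substituting $\beta_k=c_\beta dH\sqrt{\log(2dHk/\delta)}$ yields $k>\frac{32c_\beta^2 d^2H^4}{\Delta_{\min}^2\lambda_+^3}\log(2dHk/\delta)$, again of the form $k\ge A_2\log(B_2k)$ with $A_2\asymp d^2H^4/(\Delta_{\min}^2\lambda_+^3)$. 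The same elementary resolution gives $\kappa_h\le\frac{432c_2^2H^4d^2}{\Delta_{\min}^2\lambda_+^3}\log\!\big(\frac{288d^3H^5c_2^2}{\Delta_{\min}^2\lambda_+^3\delta}\big)$ with $c_2\asymp c_\beta$, uniformly in $h$ since only $\lambda_+=\min_h\lambda_h^+$ enters. I would also verify that the ``$-g(k)$'' term in~\eqref{eq:kappabound} with the problem-dependent $g$ (polylogarithmic in $k$, coefficient $\propto d^3H^5/\Delta_{\min}$) imposes only $k\gtrsim\frac{d^3H^5}{\Delta_{\min}\lambda_+}\log^2 k$, which is dominated by the two terms above. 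Taking $\overline{\kappa}=\max\{\wt{\kappa},\max_h\kappa_h\}$ then finishes the proof.

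\textbf{Main obstacle.} The delicate part is the exact constant bookkeeping through Lemma~\ref{lem:key}: one must carry the precise martingale-deviation and occupancy-mismatch penalties — their $d,H$-dependence and their $\log(2dHk/\delta)$ arguments — all the way into~\eqref{eq:kappabound}, and then run the transcendental-inequality resolution carefully enough to land exactly on the constants $48,432,32,288$. Everything else is mechanical algebra.
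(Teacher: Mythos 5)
Your proposal is correct and follows essentially the same route as the paper: it identifies the same two binding conditions (domination of the worst-case-regret and matrix-Azuma penalties by $\tfrac12 k\lambda_+$ from Lemma~\ref{lem:key}, and the gap condition from~\eqref{eq:kappabound} with $C_h=0$), uses the same worst-case $g(k)\lesssim H^2d^{3/2}\sqrt{k\log(2dHk/\delta)}$ and $\beta_k\propto dH\sqrt{\log(2dHk/\delta)}$, and resolves both via the same ``$k\ge a\sqrt{k\log(bk)}\Leftarrow k\ge 3a^2\log(a^2b)$'' device before taking the maximum. The only deviations are in intermediate numerical prefactors (e.g.\ bounding the numerator in~\eqref{eq:kappabound} by $k$ rather than $2k$), which you correctly flag as mechanical bookkeeping and which do not affect the claim up to the universal constants $c_1,c_2$.
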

\begin{proof}
For LSVI-UCB we have (see the proof of Theorem~\ref{thm:const.lsvi}):
\begin{align}
    &g(k) \le c_1 H^2d^{3/2}\sqrt{k\log(2dHk/\delta)},\\
    &\beta_k = c_2 d H \sqrt{\log(2dHk/\delta)},
\end{align}
for some universal constants $c_1,c_2$. We assume $\lambda=1$ and $c_1\ge 8$.

We will use the fact that a sufficient condition for $k\ge a\log(bk)$ is $k\ge3a\log(ab)$ for $k>0$ and reasonable values of the constants $a,b$. See App. C.6 from~\cite{papini2021leveraging} for details. This immediately implies that a sufficient condition for $k\ge a\sqrt{k\log(bk)}$ is:
\begin{equation}\label{eq:lambert}
    k\ge 3a^2\log(a^2b)
\end{equation}
We divide the rest of the proof in three parts:
\paragraph{Part 1.} First, $\kappa$ must satisfy the invertibility conditions from Lemma~\ref{lem:key}. To make matrix $B_h^k=k\Lambda_h^\star+\lambda I - g(k) + 8\sqrt{k\log(2dHk/\delta)}$ invertible for each $h$, we first require the positive eigenvalues of $\Lambda_h^\star$ to map into positive eigenvalues of $B_h^k$. A sufficient condition for this is:
\begin{align}
    &k\lambda_+ > 1 + g(k) + 8\sqrt{k\log(2dHk/\delta)}\\
    &k \ge \frac{c_1H^2d^{3/2}+8}{\lambda_+}\sqrt{k\log(2dHk/\delta)}\\
    &k \ge \frac{2c_1H^2d^{3/2}}{\lambda_+}\sqrt{k\log(2dHk/\delta)}\\
    &k \ge \frac{12c_1^2H^4d^{3}}{\lambda_+^2}\log\left(\frac{8c_1^2H^5d^4}{\lambda_+^2\delta}\right) \triangleq \overline{\kappa}_1,
\end{align}
where the latter is from~\eqref{eq:lambert}. We also need the zero eigenvalues of $\Lambda_h^\star$ to map into negative eigenvalues of $B_h^k$. However, this just requires $\lambda - g(k)+ 8\sqrt{k\log(2dHk/\delta)} < 0$ which is already true for $k=1$ given $\lambda=1$.

\paragraph{Part 2.} We require $\overline{\kappa}$ to satisfy the following, which will make the analysis of Part 3 easier:
\begin{equation}\label{eq:easier}
    g(k) + 8\sqrt{k\log(2dHk/\delta)} \le \frac{k\lambda_+}{2}.
\end{equation}
\end{proof}
After rearranging, we can proceed precisely as in Part 1, only with different numerical constants, obtaining:
\begin{equation}
    k\ge \frac{48c_1^2H^4d^{3}}{\lambda_+^2}\log\left(\frac{32c_1^2H^5d^4}{\lambda_+^2\delta}\right) \triangleq \overline{\kappa}_2.
\end{equation}

\paragraph{Part 3.} Assume for now that $k\ge\overline{\kappa}_2$. Since $\overline{\kappa}_2\ge \overline{\kappa}_1$, the invertibility conditions from Lemma~\ref{lem:key} are satisfied and, by the proof of Theorem~\ref{thm:general.const}, regret is zero for all time $k$ such that:
\begin{equation}
    (2+\indi{h>1}C_h)\beta_k\sum_{i=h}^H\frac{k+\lambda-g(k)-8\sqrt{k\log(2dHk/\delta)}}{(k\lambda_{i}^{+}+\lambda - g(k)-8\sqrt{k\log(2dHk/\delta)})^{3/2}} \le \Delta_{\min},
\end{equation}
for all $h$. Using the definition of $\lambda^+$, $\lambda=1$ and $C_h=0$ for LSVI-UCB, a sufficient condition is:
\begin{align}
    &2H\beta_k\frac{k+1-g(k)-8\sqrt{k\log(2dHk/\delta)}}{(k\lambda_{+}+1 - g(k)-8\sqrt{k\log(2dHk/\delta)})^{3/2}} \le \Delta_{\min},\\
    &2H\beta_k\frac{2k}{(k\lambda_{+} - g(k)-8\sqrt{k\log(2dHk/\delta)})^{3/2}} \le \Delta_{\min}.
\end{align}
Since $k\ge\overline{\kappa}_2$, by~\eqref{eq:easier}, we just need:
\begin{align}
    &2H\beta_k\frac{2k}{\left(\frac{1}{2}k\lambda_{+}\right)^{3/2}} \le \Delta_{\min}.
\end{align}
Rearranging and using the definition of $\beta_k$:
\begin{align}
    &\sqrt{k} \ge \frac{12c_2H^2d}{\Delta_{\min}\lambda_+^{3/2}}\sqrt{\log(2dHk/\delta)}\\
    &k \ge \frac{12c_2H^2d}{\Delta_{\min}\lambda_+^{3/2}}\sqrt{k\log(2dHk/\delta)},
\end{align}
and again from~\eqref{eq:lambert}:
\begin{equation}
    k \ge \frac{432c_2^2H^4d^2}{\Delta_{\min}^2\lambda_+^3}\log\left(\frac{288c_2^2d^3H^5}{\Delta_{\min}^2\lambda_+^2\delta}\right) \triangleq \overline{\kappa}_3.
\end{equation}

The proof is concluded by taking $\overline{\kappa} = \max\{\overline{\kappa}_2,\overline{\kappa}_3\}$.

\begin{lemma}\label{lem:kappabound_ele}
The critical time $\overline{\kappa}$ from Theorem~\ref{thm:const.ele} for ELEANOR is upper bounded as:
\begin{equation}
    \overline{\kappa} \le \max\left\{
    \frac{48c_1^2H^4d^2}{\lambda_+^2}\log\left(\frac{32c_1^2H^5d^3}{\lambda_+^2\delta}\right),
    \frac{432c_2^2H^4d}{\Delta_{\min}^2\lambda_+^3}\log\left(\frac{288d^2H^5c_2^2}{\Delta_{\min}^2\lambda_+^3\delta}\right)
    \right\}
\end{equation}
where $\lambda_+ = \min_{h\in[H]}\{\lambda_h^+\}$ and $c_1,c_2$ are universal constants.
\end{lemma}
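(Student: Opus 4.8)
The plan is to follow verbatim the structure of the proof of Lemma~\ref{lem:kappabound_lsvi}, simply substituting the quantities $g(k)$ and $\beta_k$ that are specific to ELEANOR. By the proof of Theorem~\ref{thm:general.const}, $\overline{\kappa}$ is the first episode after which (i) the matrices $B_h^k$ of Lemma~\ref{lem:key} are invertible for every $h$, and (ii) the right-hand side of the gap bound~\eqref{eq:kappabound} has dropped below $\Delta_{\min}$ for every $h$. So the whole argument reduces to solving two scalar inequalities in $k$. The only ingredient that differs from the LSVI-UCB case is the size of the two driving quantities: from Lemma~\ref{lem:general.worstcase} and the ELEANOR confidence radius~\eqref{eq:beta.eleanor} one has $g(k)\le c_1 H^2 d\sqrt{k\log(2dHk/\delta)}$ and $\beta_k\le c_2 H\sqrt{d\log(2dHk/\delta)}$ for universal constants $c_1,c_2$ --- i.e., both are a factor $\sqrt d$ smaller than their LSVI-UCB counterparts, because ELEANOR's worst-case regret is $\wt{O}(\sqrt{d^2H^3T})$ rather than $\wt{O}(\sqrt{d^3H^3T})$. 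This is exactly what produces the extra factor $d$ of savings in each term of the final bound.

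First, I would carry out the invertibility step. Requiring the positive eigenvalues of $\Lambda_h^\star$ to be mapped to positive eigenvalues of $B_h^k$ amounts to $k\lambda_+ > \lambda + g(k) + 8\sqrt{k\log(2dHk/\delta)}$, which, after plugging in the bound on $g(k)$ and applying the self-referential inequality~\eqref{eq:lambert} (``$k\ge a\sqrt{k\log(bk)}$ is implied by $k\ge 3a^2\log(a^2b)$''), yields a first threshold $\overline{\kappa}_1 = \Theta\!\big(c_1^2 H^4 d^2\lambda_+^{-2}\log(\cdot)\big)$; the condition that zero eigenvalues map to negative ones is automatic for all $k\ge1$ since $\lambda - g(k) - 8\sqrt{k\log(2dHk/\delta)} < 0$ already at $k=1$ when $\lambda=1$. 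Then I would strengthen this to the more convenient $g(k) + 8\sqrt{k\log(2dHk/\delta)} \le \tfrac12 k\lambda_+$, which is the same computation with an extra factor $2$ and gives $\overline{\kappa}_2 = \frac{48c_1^2H^4d^2}{\lambda_+^2}\log\!\big(\frac{32c_1^2H^5d^3}{\lambda_+^2\delta}\big)\ge\overline{\kappa}_1$, the first term of the claimed bound.

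For the last step I would assume $k\ge\overline{\kappa}_2$, so that in~\eqref{eq:shrink} the numerator is at most $2k$ and the denominator is at least $(\tfrac12 k\lambda_+)^{3/2}$, hence every feature-norm term is bounded by $2^{3/2}\beta_k/(\sqrt k\,\lambda_+^{3/2})$ up to a factor $2$. Summing over at most $H$ stages and multiplying by the ELEANOR prefactor $(2+\indi{h>1}C_h)=4$ in~\eqref{eq:kappabound}, the gap bound becomes $\lesssim H\beta_k/(\sqrt k\,\lambda_+^{3/2})$, so $\Delta_h(s,\pi_h^k(s))\le\Delta_{\min}$ is implied by $\sqrt k\gtrsim H\beta_k/(\Delta_{\min}\lambda_+^{3/2}) = c_2 H^2\sqrt d\,\sqrt{\log(2dHk/\delta)}/(\Delta_{\min}\lambda_+^{3/2})$; one last application of~\eqref{eq:lambert} gives $\overline{\kappa}_3 = \frac{432c_2^2H^4d}{\Delta_{\min}^2\lambda_+^3}\log\!\big(\frac{288d^2H^5c_2^2}{\Delta_{\min}^2\lambda_+^3\delta}\big)$, where $c_1,c_2$ are taken large enough to absorb the numerical factors (in particular the $C_h=2$ of ELEANOR, absent in LSVI-UCB). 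Taking $\overline{\kappa}=\max\{\overline{\kappa}_2,\overline{\kappa}_3\}$ and invoking Theorem~\ref{thm:general.const} with the $g,\beta_k$ used in the proof of Theorem~\ref{thm:const.ele} finishes the argument. The only real bookkeeping hazard is the logarithmic self-reference (the thresholds contain $\log k$), which is dispatched uniformly by~\eqref{eq:lambert}; the secondary point of care is making sure the cleaned-up any-time $\beta_k$ and the worst-case $g(k)$ really have the polynomial orders $H^2 d$ and $H\sqrt d$ claimed above, since everything downstream hinges on those two exponents.
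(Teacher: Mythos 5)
Your proposal is correct and follows essentially the same route as the paper: the paper's proof of Lemma~\ref{lem:kappabound_ele} likewise just reruns the three-part argument of Lemma~\ref{lem:kappabound_lsvi} with the ELEANOR-specific bounds $g(k)\le c_1H^2d\sqrt{k\log(2dHk/\delta)}$ and $\beta_k\le c_2H\sqrt{d\log(2dHk/\delta)}$, obtaining the same thresholds $\overline{\kappa}_1,\overline{\kappa}_2,\overline{\kappa}_3$ and taking $\overline{\kappa}=\max\{\overline{\kappa}_2,\overline{\kappa}_3\}$. Your explicit remark that the $C_h=2$ prefactor is absorbed into the universal constants is a point the paper leaves implicit but is handled identically.
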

\begin{proof}
The proof is the same as for Lemma~\ref{lem:kappabound_lsvi}, except that for ELEANOR we have (see the proof of Theorem~\ref{thm:const.ele}):
\begin{align}
    &g(k) \le c_1 H^2d\sqrt{k\log(2dHk/\delta)}\\
    &\beta_k \le c_2 H\sqrt{d\log(2dHk/\delta)},
\end{align}
where $c_1,c_2$ are universal constants. The three critical times are then:
\begin{align}
    &\overline{\kappa}_1 = \frac{12c_1^2H^4d^2}{\lambda_+^2}\log\left(\frac{8c_1^2H^5d^3}{\lambda_+^2\delta}\right)\\
    &\overline{\kappa}_2 = \frac{48c_1^2H^4d^{2}}{\lambda_+^2}\log\left(\frac{32c_1^2H^5d^3}{\lambda_+^2\delta}\right)\ge \overline{\kappa}_1\\
    &\overline{\kappa}_3 = \frac{432c_2^2H^4d}{\Delta_{\min}^2\lambda_+^3}\log\left(\frac{288c_2^2d^2H^5}{\Delta_{\min}^2\lambda_+^2\delta}\right),
\end{align}
and we can take $\overline{\kappa}=\max\{\overline{\kappa}_2,\overline{\kappa}_3\}$.
\end{proof}


\section{Representation Selection: Proofs of Section \ref{sec:rep.selection}}\label{app:rep.selection}
The main ingredient behind the proofs of Theorems~\ref{thm:mix.stage} and~\ref{thm:mix.state} In order to show a regret guarantee for the LSVI-LEADER algorithm, we start by showing a  version of Lemma B.4 in~\citep{Jin2020linear} that takes into account the presence of multiple representations. 

First we need the corresponding version of Lemma D.6 in~\citep{Jin2020linear}. 

\begin{lemma}
Given an MDP $M$ and a set of representations $\{\Phi_j\}_{j\in[N]}$ satisfying the low-rank assumption (Asm.~\ref{asm:lowrank}). Let $\mathcal{V}$ denote a class of functions mapping from $\mathcal{S}$ to $\mathbb{R}$ with the following parametric form,
\begin{equation*}
    V(\cdot) = \min\left(  \min_{j \in [N]}\max_a \boldsymbol{w}_j^\top \phi_j(\cdot, a ) + \beta \sqrt{ \phi_j(\cdot, a)^\top \boldsymbol{\Lambda}_j^{-1} \phi_j(\cdot, a) }  , H\right) 
\end{equation*}

where the parameters $\{\boldsymbol{w}_j, \boldsymbol{\Lambda}_j \}_{j=1}^N, \beta $ satisfy $\| \boldsymbol{w}\| \leq L$, $\beta \in [0, B]$ and the minimum eigenvalue of $\boldsymbol{\Lambda}_j$ satisfies $\lambda_{\mathrm{min}}(\boldsymbol{\Lambda}_j ) \geq \lambda$. Assume $\| \boldsymbol{\phi}(s,a) \| \leq 1$ for all $(s,a)$ pairs and let $\mathcal{N}_\epsilon$ be the $\epsilon-$covering number of $\mathcal{V}$ with respect to the distance $\mathrm{dist}(V, V') = \sup_s |V(s) - V'(s)|$. Then,
\begin{equation*}
\log \mathcal{N}_\epsilon \leq N\left(d \log(1+4L/\epsilon) + d^2 \log\left(1+8d^{1/2}B^2/(\lambda \epsilon^2) \right)     \right)
\end{equation*}

\end{lemma}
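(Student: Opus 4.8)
The plan is to follow the covering argument of Lemma D.6 in~\citep{Jin2020linear}, lifting it to the presence of $N$ representations by taking products of nets. First I would reparametrize: for each $j\in[N]$ set $A_j = \beta^2 \boldsymbol{\Lambda}_j^{-1}$, so that a generic element of $\mathcal{V}$ reads $V(\cdot) = \min\big( \min_{j\in[N]} V_j(\cdot), H\big)$ with $V_j(\cdot) = \max_a \big( \boldsymbol{w}_j^\transp \phi_j(\cdot,a) + \sqrt{\phi_j(\cdot,a)^\transp A_j \phi_j(\cdot,a)} \big)$. Since $\beta\le B$ and $\lambda_{\min}(\boldsymbol{\Lambda}_j)\ge\lambda$, we have $\|A_j\|_{\mathrm{op}}\le B^2/\lambda$ and hence $\|A_j\|_F \le \sqrt d\, B^2/\lambda$, while $\|\boldsymbol{w}_j\|\le L$. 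Both the truncation $x\mapsto\min(x,H)$ and the map $(x_1,\dots,x_N)\mapsto\min_j x_j$ are $1$-Lipschitz contractions, and, crucially, $|\min_j f_j(s) - \min_j g_j(s)| \le \max_{j\in[N]} |f_j(s)-g_j(s)|$ pointwise in $s$; therefore it suffices to cover, for each fixed $j$, the class of functions $V_j$ as $(\boldsymbol{w}_j,A_j)$ ranges over the parameter set, and then take the product of these $N$ covers.

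Next I would reproduce the single-representation Lipschitz estimate of~\citep{Jin2020linear}. Fix $j$ and two parameter pairs $(\boldsymbol{w}_j,A_j)$, $(\boldsymbol{w}_j',A_j')$. Using $|\max_a f(a) - \max_a g(a)|\le\max_a|f(a)-g(a)|$, Cauchy--Schwarz with $\|\phi_j(s,a)\|\le1$ on the linear part, and $|\sqrt x-\sqrt y|\le\sqrt{|x-y|}$ together with $|\phi^\transp(A_j-A_j')\phi|\le\|A_j-A_j'\|_{\mathrm{op}}\le\|A_j-A_j'\|_F$ on the bonus part, one obtains $\sup_s |V_j(s;\boldsymbol{w}_j,A_j) - V_j(s;\boldsymbol{w}_j',A_j')| \le \|\boldsymbol{w}_j-\boldsymbol{w}_j'\| + \sqrt{\|A_j-A_j'\|_F}$. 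Hence an $\epsilon/2$-net of the Euclidean ball $\{\boldsymbol{w}:\|\boldsymbol{w}\|\le L\}\subseteq\Reals^d$ combined with an $(\epsilon^2/4)$-net (in Frobenius distance) of the ball $\{A:\|A\|_F\le\sqrt d\,B^2/\lambda\}\subseteq\Reals^{d\times d}$ yields an $\epsilon$-net for the $j$-th function class. By the standard volumetric bound, the first net has cardinality at most $(1+4L/\epsilon)^d$ and the second at most $(1+8\sqrt d\,B^2/(\lambda\epsilon^2))^{d^2}$.

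Finally, I would assemble the pieces. Choosing independently, for each $j\in[N]$, a representative from the corresponding product net produces an $\epsilon$-net of $\mathcal{V}$: by the contraction properties noted above, if each $V_j$ is matched within $\epsilon$ then $\min(\min_j V_j, H)$ is matched within $\epsilon$ as well. Consequently
\[
\mathcal{N}_\epsilon \le \prod_{j=1}^N \Big[(1+4L/\epsilon)^d\,\big(1+8\sqrt d\,B^2/(\lambda\epsilon^2)\big)^{d^2}\Big],
\]
and taking logarithms gives $\log\mathcal{N}_\epsilon \le N\big(d\log(1+4L/\epsilon) + d^2\log(1+8\sqrt d\,B^2/(\lambda\epsilon^2))\big)$, as claimed.

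The main obstacle here is bookkeeping rather than anything conceptual: one must track the $\sqrt{\cdot}$ in the bonus term carefully, since it forces the matrices $A_j$ to be covered at the finer scale $\epsilon^2/4$ (which is exactly what produces the $\epsilon^{-2}$ inside the second logarithm and the squaring in the Frobenius-radius-to-net-scale ratio), and one must check that the minimum over representations genuinely decouples into a product of per-representation nets rather than requiring a joint cover — this is precisely the role of the pointwise inequality $|\min_j f_j - \min_j g_j|\le\max_j|f_j-g_j|$. Everything else is a direct transcription of~\citep[Lemma D.6]{Jin2020linear}.
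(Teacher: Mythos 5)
Your argument is correct and follows essentially the same route as the paper's proof: the reparametrization $A_j=\beta^2\boldsymbol{\Lambda}_j^{-1}$, the contraction of $\min_j$, $\min(\cdot,H)$, $\max_a$ reducing the distance to $\sup_j\big(\|\boldsymbol{w}_j-\boldsymbol{w}_j'\|+\sqrt{\|A_j-A_j'\|_F}\big)$, the $\epsilon/2$- and $\epsilon^2/4$-nets of the respective parameter balls, and the product over the $N$ representations are exactly the steps taken there. No gaps.
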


\begin{proof}
Let's reparametrize the function class $\mathcal{V}$ by $\mathbf{A}_j = \beta^2 \boldsymbol{\Lambda}_j^{-1}$, so we have,
\begin{equation}\label{equation::second_characterization_of_V}
    V(\cdot) = \min\left(  \min_{j \in [N]}\max_a \boldsymbol{w}_j^\top \phi_j(\cdot, a ) +  \sqrt{ \phi_j(\cdot, a)^\top \boldsymbol{A}_j \phi_j(\cdot, a) }  , H\right) 
\end{equation}
for $\| \boldsymbol{w}_j\|\leq L$ and $\| \boldsymbol{A}_j \| \leq B^2 \lambda^{-1}$. For any two functions $V_1, V_2 \in \mathcal{V}$, let them take the form in Equation~\ref{equation::second_characterization_of_V} with parameters $(\{\boldsymbol{w}_j^{(1)}, \boldsymbol{A}_j^{(1)}\}_{j=1}^N$ and $(\{\boldsymbol{w}_j^{(2)}, \boldsymbol{A}_j^{(2)}\}_{j=1}^N$. Then since $\min_j$, $\min(\cdot, H)$ and $\max_a$ are contraction maps, we have

\begin{align}
    \mathrm{dist}(V_1, V_2) &\leq  \sup_{j,s,a}\Big|   \left[   \left(\boldsymbol{w}^{(1)}_j\right)^\top \phi_j(\cdot, a ) +  \sqrt{ \phi_j(\cdot, a)^\top \boldsymbol{A}^{(1)}_j \phi_j(\cdot, a) }    \right] - \\
    &\quad\left[ \left(\boldsymbol{w}^{(2)}_j\right)^\top \phi_j(\cdot, a ) +  \sqrt{ \phi_j(\cdot, a)^\top \boldsymbol{A}^{(2)}_j \phi_j(\cdot, a) }   \right]         \Big| \notag\\
    &\leq \sup_j\left( \sup_{\| \phi_j\|\leq 1}  \left|   \left[   \left(\boldsymbol{w}^{(1)}_j\right)^\top \phi_j +  \sqrt{ \phi_j^\top \boldsymbol{A}^{(1)}_j \phi_j }    \right] - \left[ \left(\boldsymbol{w}^{(2)}_j\right)^\top \phi_j +  \sqrt{ \phi_j^\top \boldsymbol{A}^{(2)}_j \phi_j }   \right]         \right| \right)\notag \\
    &\leq \sup_j \left(\sup_{\| \phi_j\|\leq 1} \left| \left(  \boldsymbol{w}^{(1)}_j - \boldsymbol{w}^{(2)}_j   \right)^\top \phi_j \right| + \sup_{\| \phi_j\|\leq 1}  \sqrt{\left| \phi_j^\top \left(  \boldsymbol{A}^{(1)}_j - \boldsymbol{A}^{(2)}_j   \right)\phi_j \right|  }    \right) \notag\\
    &= \sup_j \| \boldsymbol{w}^{(1)}_j - \boldsymbol{w}^{(2)}_j  \| + \sqrt{\|    \boldsymbol{A}^{(1)}_j -  \boldsymbol{A}^{(2)}_j    \|  } \notag \\
    &\leq \sup_j \| \boldsymbol{w}^{(1)}_j - \boldsymbol{w}^{(2)}_j  \| + \sqrt{\|    \boldsymbol{A}^{(1)}_j -  \boldsymbol{A}^{(2)}_j    \|_F  } \label{equation::upper_bound_on_norm}
\end{align}

For matrices $\| \cdot \|$ and $\| \cdot \|_F$ denote the matrix operator norm and the frobenius norm respectively. 

Let $\mathcal{C}_j^{\boldsymbol{w}}$ be an $\epsilon/2$ cover of $\{ \boldsymbol{w}_j \in \mathbb{R}^d | \| \boldsymbol{w}_j \|\leq L\}$ with respect to the $2$-norm and let $\mathcal{C}_j^{\boldsymbol{A}}$ be an $\epsilon^2/4-$cover of $\{ \boldsymbol{A} \in \mathbb{R}^{d\times d} | \| \boldsymbol{A}\|_F \leq d^{1/2} B^2 \lambda^{-1} \}$ with respect to the Frobenius norm. By Lemma D.5. in~\citep{Jin2020linear} we know that,

\begin{equation*}
    | \mathcal{C}^{\boldsymbol{w}}_j | \leq (1+4L/\epsilon)^d, \qquad | \mathcal{C}_j^{\boldsymbol{A}} | \leq \left(1+8d^{1/2} B^2 / (\lambda \epsilon^2) \right)^{d^2}
\end{equation*}

By Equation~\ref{equation::upper_bound_on_norm}, for any $V_1 \in \mathcal{V}$ there exists points $\{ \boldsymbol{w}_j^{(2)}\}_{j=1}^N$ and $\{ \boldsymbol{A}_j^{(2)}\}_{j=1}^N$ such that $V_2$ parametrized by $( \{ \boldsymbol{w}_j^{(2)}\}_{j=1}^N,  \boldsymbol{A}_j^{(2)}\}_{j=1}^N   )$ satisfies $\mathrm{dist}(V_1, V_2) \leq \epsilon$. Hence it holds that $\mathcal{N}_\epsilon \leq \left(|  \mathcal{C}^{\boldsymbol{w}}_j| |\mathcal{C}_j^{\boldsymbol{A}}  | \right)^N$, which gives:

\begin{equation*}
  \log  \mathcal{N}_\epsilon \leq  N\left(d \log(1+4L/\epsilon) + d^2 \log\left(1+8d^{1/2}B^2/(\lambda \epsilon^2) \right)     \right).
\end{equation*}
\end{proof}

\begin{lemma}[Multi-representation version of Lemma B.3 in~\citep{Jin2020linear}]\label{lemma::representation_sel_1}
  Given an MDP $M$ and a set of representations $\{\Phi_j\}_{j\in[N]}$ satisfying the low-rank assumption (Asm.~\ref{asm:lowrank}). For all $k \in \mathbb{N},h \in [H]$, with probability $1-2\delta$:
\begin{align}\label{equation::multi_representation_lemmaB3}
    \norm{\sum_{i=1}^k\phi^{(j)}_h(s_h^i, a_h^i)\left(\overline{V}^k_{h+1}(s_{h+1}^i)-\Prob_h \overline{V}^k_{h+1}(s_h^i,a_h^i)\right)}_{\Lambda_{h,k}^{-1}(j)} \le C d H \sqrt{N\log(2N(c_\beta + 1)dHk/\delta)},
\end{align}
for all $j \in [N]$ and for some constant $C$ independent of $c_\beta$.

\end{lemma}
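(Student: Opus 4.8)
The plan is to run the covering-number proof of Lemma~B.3 in \citep{Jin2020linear}, but with the cover built so as to see all $N$ representations at once. Fix $h\in[H]$, $j\in[N]$ and $k$. The obstacle to a direct self-normalized martingale bound is that $\overline V_{h+1}^k$ is data-dependent, so first I would argue that, by the update rule of Algorithm~\ref{alg:LSVI-LEADER}, $\overline V_{h+1}^k$ lies in the function class $\mathcal V$ of the preceding lemma, i.e.\ functions $V(\cdot)=\min\{H,\ \min_{j'\in[N]}\max_a(\boldsymbol w_{j'}^\transp\phi_{h+1}^{(j')}(\cdot,a)+\sqrt{\phi_{h+1}^{(j')}(\cdot,a)^\transp\boldsymbol A_{j'}\phi_{h+1}^{(j')}(\cdot,a)})\}$ with $\|\boldsymbol w_{j'}\|\le L:=2H\sqrt{dk/\lambda}$ (the standard deterministic bound on the LSVI regression weights), $\|\boldsymbol A_{j'}\|\le\beta_k^2/\lambda$ where $\beta_k\propto dH\sqrt{N\log(2dNHk/\delta)}$, and $\lambda_{\min}(\boldsymbol\Lambda_{j'})\ge\lambda$.

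Next, for a \emph{fixed} function $V:\mathcal S\to[0,H]$ the vectors $\phi_h^{(j)}(s_h^i,a_h^i)\big(V(s_{h+1}^i)-\Prob_h V(s_h^i,a_h^i)\big)$ form a vector-valued martingale difference sequence w.r.t.\ the filtration generated by the trajectory up to stage $h$ of each episode, with conditionally $H$-sub-Gaussian increments; the self-normalized bound (Lemma~D.4 of \citep{Jin2020linear}, from \citep{abbasi2011improved}) then gives, with probability $1-\delta'$,
\[
\norm{\sum_i\phi_h^{(j)}(s_h^i,a_h^i)\big(V(s_{h+1}^i)-\Prob_h V(s_h^i,a_h^i)\big)}_{\Lambda_{h,k}^{-1}(j)}^2 \le C_0 H^2\Big(d\log\tfrac{k+\lambda}{\lambda}+\log\tfrac1{\delta'}\Big)
\]
for a universal $C_0$. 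I would then take an $\epsilon$-cover $\mathcal N_\epsilon$ of $\mathcal V$ in the metric $\sup_s|V(s)-V'(s)|$, whose cardinality satisfies $\log|\mathcal N_\epsilon|\le N\big(d\log(1+4L/\epsilon)+d^2\log(1+8\sqrt d\,\beta_k^2/(\lambda\epsilon^2))\big)$ by the previous lemma; applying the display with $\delta'=\delta/|\mathcal N_\epsilon|$ and union-bounding over $\mathcal N_\epsilon$, then passing to the actual $\overline V_{h+1}^k$ via its nearest neighbour $V'\in\mathcal N_\epsilon$, the discretization residual is at most $2k^2\epsilon^2/\lambda$ to the squared weighted norm (using $\|\phi_h^{(j)}\|\le1$ and $\lambda_{\min}(\Lambda_{h,k}(j))\ge\lambda$), which is $O(d^2H^2)$ for $\epsilon=dH/k$.

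Putting the pieces together: with $\epsilon=dH/k$ both logarithms in $\log|\mathcal N_\epsilon|$ are $\wt{O}(\log((c_\beta+1)dNHk/\delta))$, so $\log|\mathcal N_\epsilon|=\wt{O}\big(Nd^2\log(2N(c_\beta+1)dHk/\delta)\big)$; substituting $\delta'=\delta/|\mathcal N_\epsilon|$ into the self-normalized bound and adding the $O(d^2H^2)$ discretization term yields
\[
\norm{\sum_i\phi_h^{(j)}(s_h^i,a_h^i)\big(\overline V_{h+1}^k(s_{h+1}^i)-\Prob_h\overline V_{h+1}^k(s_h^i,a_h^i)\big)}_{\Lambda_{h,k}^{-1}(j)}^2 = O\!\Big(H^2Nd^2\log\tfrac{2N(c_\beta+1)dHk}{\delta}\Big),
\]
which is the claim after taking square roots, with leading constant independent of $c_\beta$ (since $c_\beta$, and the $\sqrt N$ already inside $\beta_k$, enter only inside the logarithm of $\log|\mathcal N_\epsilon|$). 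A single union bound over all triples $(k,h,j)$, with the failure probability split as $\delta/(k^2HN)$, keeps the total below $2\delta$ and inflates the logarithm only by $\log(HN)$, already absorbed.

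The step I expect to be the main obstacle is the covering-number bookkeeping: one must verify that inflating $\beta_k$ by \emph{exactly} $\sqrt N$ suffices, i.e.\ that $H\sqrt{\log|\mathcal N_\epsilon|}$ still matches $CdH\sqrt{N\log(\cdot)}$ rather than a worse power of $N$. The crucial asymmetry that makes this work is that $N$ enters $\log|\mathcal N_\epsilon|$ linearly — one cover per representation, structural — whereas $\beta_k$ (hence the $\sqrt N$ it carries) enters only logarithmically; everything else is a routine transcription of the single-representation argument of \citep{Jin2020linear}.
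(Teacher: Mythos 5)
Your proposal is correct and follows essentially the same route as the paper's proof: both invoke the multi-representation covering-number bound (with its linear-in-$N$ log-cardinality) inside the self-normalized martingale argument of Lemma D.4 of \citep{Jin2020linear}, with $\epsilon = dH/k$ and a union bound over the $N$ representations, so that $c_\beta$ and the $\sqrt{N}$ inside $\beta_k$ only enter logarithmically. The bookkeeping you flag as the main obstacle is exactly how the paper resolves it, yielding the squared-norm bound $O(d^2NH^2\log(2N(c_\beta+1)dHk/\delta))$.
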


\begin{proof}
This result follows from a simple use of an anytime version of Lemma D.4 from~\citep{Jin2020linear} with $\epsilon=dH/k$ and $\delta'=\frac{\delta}{2N}$ and $\lambda=1$. Let $j \in [N]$ be one of the representations. 
\begin{align*}
     &\norm{\sum_{i=1}^k\phi_h^{(j)}(s_h^i, a_h^i)\left(\overline{V}^k_{h+1}(s_{h+1}^i)-\Prob_h\overline{V}^k_{h+1}(s_h^i,a_h^i)\right)}_{\Lambda_{h,k}^{-1}(j)}^2\\ &\le 4H^2\Bigg[\frac{d}{2}\log\left(\frac{k+\lambda}{\lambda}\right)+2\log\frac{\pi k}{\sqrt{6}} + \log\frac{2}{\delta} + dN\log\left(1+\frac{8k^{3/2}}{\sqrt{\lambda d}}\right) + \nonumber\\&\quad d^2 N\log\left(1+\frac{8\sqrt{d}c_\beta^2k^2\log(2dHk/\delta)}{\lambda}\right)\Bigg] + \frac{8d^2H^2}{\lambda} \\&
     = \mathcal{O}(d^2 N H^2\log(2N(c_\beta+1)dHk/\delta))
\end{align*}
A simple union bound over all representations in $\{\Phi_j\}_{j\in[N]}$ yields the desired result. 

\end{proof}

We have now the necessary ingredients to prove an equivalent version to Lemma B.4 from~\citep{Jin2020linear} for the case of multiple representations.

\begin{lemma}[Equivalent to Lemma B.4 in~\citep{Jin2020linear}]\label{lemma::representation_sel_2} Given an MDP $M$ and a set of representations $\{\Phi_j\}_{j\in[N]}$ satisfying the low-rank assumption (Asm.~\ref{asm:lowrank}). With probability at least $1-2\delta$, for any policy $\pi$, any episode $k\in\mathbb{N}$, stage $h\in[H]$, state $s\in\mathcal{S}$ and action $a \in \mathcal{A}$,
    \begin{align*}
    \left|  \langle \phi_h^{(j)}(s,a), \boldsymbol{w}_h^k(j) \rangle - Q_h^{\pi}(s,a) -   \mathbb{P}_h\left(  \overline{V}_{h+1}^k - V_{h+1}^\pi   \right)(s,a)         \right| &\leq \beta_k  \norm{ \phi^{(j)}(s,a)   }_{\Lambda_{h, k}(j)^{-1}  }
    \end{align*}
where $\beta_k = C' d H \sqrt{N\log(2N(c_\beta + 1)dHk/\delta)} $. For some absolute constant $C'$. 
\end{lemma}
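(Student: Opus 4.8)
The plan is to follow the proof of Lemma~B.4 in~\citep{Jin2020linear}, replacing its single-representation covering step by the multi-representation version established above (the covering-number lemma and Lemma~\ref{lemma::representation_sel_1}). Fix $j\in[N]$, an episode $k$, a stage $h$, a policy $\pi$, and a pair $(s,a)$. Since $\Phi_j$ induces a low-rank MDP (Asm.~\ref{asm:lowrank}), the Bellman backup of \emph{any} bounded function is linear in $\phi_h^{(j)}$; in particular, setting $\widetilde\theta_h^k(j) := \theta_h^{(j)} + \int_{\Sspace}\mu_h^{(j)}(s')\overline{V}_{h+1}^k(s')\de s'$, one has $r_h(s',a') + (\Prob_h\overline{V}_{h+1}^k)(s',a') = \phi_h^{(j)}(s',a')^\transp\widetilde\theta_h^k(j)$ for every $(s',a')$. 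Because $Q_h^\pi = r_h + \Prob_h V_{h+1}^\pi$, the quantity inside the absolute value in the statement collapses to $\phi_h^{(j)}(s,a)^\transp\big(\boldsymbol{w}_h^k(j) - \widetilde\theta_h^k(j)\big)$, so it suffices to bound this inner product by $\beta_k\norm{\phi_h^{(j)}(s,a)}_{\Lambda_h^k(j)^{-1}}$.

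Next I would substitute the closed form of $\boldsymbol{w}_h^k(j)$ from Alg.~\ref{alg:LSVI-LEADER}. Because the reward is \emph{exactly} linear in $\phi_h^{(j)}$, the regression residual at sample $i$ collapses to $\overline{V}_{h+1}^k(s_{h+1}^i) - (\Prob_h\overline{V}_{h+1}^k)(s_h^i,a_h^i)$, yielding the standard decomposition
\begin{align*}
\phi_h^{(j)}(s,a)^\transp\big(\boldsymbol{w}_h^k(j) - \widetilde\theta_h^k(j)\big)
&= -\lambda\,\phi_h^{(j)}(s,a)^\transp\Lambda_h^k(j)^{-1}\widetilde\theta_h^k(j)\\
&\quad + \phi_h^{(j)}(s,a)^\transp\Lambda_h^k(j)^{-1}\sum_{i=1}^{k-1}\phi_h^{(j)}(s_h^i,a_h^i)\Big(\overline{V}_{h+1}^k(s_{h+1}^i) - (\Prob_h\overline{V}_{h+1}^k)(s_h^i,a_h^i)\Big).
\end{align*}
The first (regularization) term is at most $\sqrt{\lambda}\,\|\widetilde\theta_h^k(j)\|_2\,\norm{\phi_h^{(j)}(s,a)}_{\Lambda_h^k(j)^{-1}}$ by Cauchy--Schwarz, and the norm bounds of Asm.~\ref{asm:lowrank} together with $0\le\overline{V}_{h+1}^k\le H$ give $\|\widetilde\theta_h^k(j)\|_2\le(1+H)\sqrt{d}$; with $\lambda=1$ this contributes only an $O(H\sqrt{d})$ term to the radius. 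For the second (noise) term, Cauchy--Schwarz peels off $\norm{\phi_h^{(j)}(s,a)}_{\Lambda_h^k(j)^{-1}}$ and leaves precisely $\big\|\sum_{i=1}^{k-1}\phi_h^{(j)}(s_h^i,a_h^i)(\overline{V}_{h+1}^k(s_{h+1}^i)-\Prob_h\overline{V}_{h+1}^k(s_h^i,a_h^i))\big\|_{\Lambda_h^k(j)^{-1}}$, which Lemma~\ref{lemma::representation_sel_1} bounds by $CdH\sqrt{N\log(2N(c_\beta+1)dHk/\delta)}$ on an event of probability $1-2\delta$ that is uniform over $j,h,k$. Summing the two contributions and absorbing constants gives the claim with $\beta_k = C'dH\sqrt{N\log(2N(c_\beta+1)dHk/\delta)}$, the $1-2\delta$ probability being inherited directly from Lemma~\ref{lemma::representation_sel_1}.

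I expect the only genuinely delicate point to be the one already isolated in the lemmas invoked: $\overline{V}_{h+1}^k$ is built from episode-$k$ data through the backward pass, so the summands in the noise term are \emph{not} a martingale-difference sequence for a fixed function. The resolution, carried out in the covering-number lemma and Lemma~\ref{lemma::representation_sel_1}, is to apply the self-normalized concentration bound of~\citep{abbasi2011improved} uniformly over an $\epsilon$-net of the value-function class that $\overline{V}_{h+1}^k$ belongs to; since that class involves a minimum over the $N$ representations, its log-covering number --- hence the confidence radius --- inflates by a factor $N$, which is the extra $\sqrt{N}$ in $\beta_k$ relative to single-representation learning. In the present proof I would only need to verify the mild side conditions that place $\overline{V}_{h+1}^k$ inside that class with the parameter ranges used there (a standard $\|\boldsymbol{w}_h^k(j)\|_2 = O(H\sqrt{dk})$ least-squares bound and the polynomial bound on $\beta_k$) and then combine the two estimates above. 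Thus the conceptual obstacle --- constructing the correct multi-representation cover --- is discharged before this lemma, and what remains here is the elementary decomposition and norm bookkeeping.
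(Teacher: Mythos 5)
Your proposal is correct and follows essentially the same route as the paper: the paper decomposes $\boldsymbol{w}_h^k(j)-\boldsymbol{w}_h^\pi(j)$ into a regularization term, a self-normalized noise term (bounded via Lemma~\ref{lemma::representation_sel_1} and the $N$-fold covering argument), and a term that produces $\mathbb{P}_h(\overline{V}_{h+1}^k-V_{h+1}^\pi)(s,a)$, which is exactly the piece you absorb upfront into the target parameter $\widetilde\theta_h^k(j)$. Your two-term decomposition relative to $\widetilde\theta_h^k(j)$ is an algebraically equivalent rearrangement, and the key ingredients and resulting constants match.
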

\begin{proof}
We know that for any $(s,a,h) \in \mathcal{S} \times \mathcal{A} \times [H]$:
\begin{equation*}
    Q_h^\pi( s,a) = \langle \phi_h^{(j)}(s,a), \boldsymbol{w}_h^\pi(j) \rangle = \left( r_h + \mathbb{P}_h V_{h+1}^\pi \right)(s,a)  \quad \forall j \in [N],
\end{equation*}
This gives 

\begin{align*}
    \boldsymbol{w}_h^k(j) - \boldsymbol{w}_h^\pi(j) &=   \Lambda^{-1}_{h,k}(j) \sum_{i=1}^{k-1}  \phi_h^{(j)}(s_h^i, a_h^i)\left(   r_h(s_h^i,a_h^i)+\max_{a\in\Aspace}\overline{Q}^{k-1}_{h+1}(s_{h+1}^i,a)\right) - \boldsymbol{w}_h^{\pi} \\
    &=   \Lambda_{h,k}(j)^{-1} \left\{  -\lambda \boldsymbol{w}_h^\pi + \sum_{i=1}^{k-1} \phi_h^{(j)}(s_h^i, a_h^i) \left(\overline{V}_{h+1}^k(s_{h+1}^i) - \mathbb{P}_h V_{h+1}^\pi(s_h^i, a_h^i) \right)     \right\} \\
    &=
    \underbrace{- \lambda \Lambda^{-1}_{h, k}(j)  \boldsymbol{w}_h^{\pi}(j) }_{\boldsymbol{q}_1}  +  \underbrace{ \Lambda_{h, k}^{-1}(j) \sum_{i=1}^{k-1} \phi_h^{(j)}(s_h^i, a_h^i) \left(       \overline{V}_{h+1}^k(s_{h+1}^i) - \mathbb{P}_h \overline{V}_{h+1}^k(s_{h}^i, a_h^i) \right)    }_{\boldsymbol{q}_2} +\\
    &\quad \underbrace{\Lambda^{-1}_{h,k}(j) \left( \sum_{i=1}^{k-1} \phi_h^{(j)}(s_h^i, a_h^i) \mathbb{P}_h\left(  \overline{V}_{h+1}^k - V_{h+1}^\pi   \right)(s_h^i, a_h^i) \right) }_{\boldsymbol{q}_3}  
\end{align*}
Now we bound the terms on the right hand side. For the first term,
\begin{equation*}
    \left|\langle \phi_h^{(j)}(s,a), \boldsymbol{q}_1\rangle   \right| = \left| \lambda \langle \phi_h^{(j)}(s,a), \Lambda_{h, k}^{-1}(j)  \boldsymbol{w}_h^\pi \rangle \right| \leq \sqrt{\lambda} \| \boldsymbol{w}_h^\pi \|  \norm{ \phi_h^{(j)}(s,a)   }_{\Lambda_{h, k}^{-1}(j)  } \stackrel{(i)}{\leq} 2H\sqrt{ d \lambda} \norm{ \phi_h^{(j)}(s,a)   }_{\Lambda_{h, k}^{-1}(j)  }
\end{equation*}
Inequality $(i)$ above holds because of Lemma B.1 of~\citep{Jin2020linear}. For the second term $\boldsymbol{q}_2$, given the event defined in Lemma~\ref{lemma::representation_sel_1} (which holds with probability at least $1-2\delta$) we have,
\begin{equation*}
    \left|   \langle \phi_h^{(j)}(s,a), \boldsymbol{q}_2 \rangle \right| \leq C d H \sqrt{N\log(2N(c_\beta + 1)dHk/\delta)} \norm{ \phi_h^{(j)}(s,a)   }_{\Lambda_{h, k}^{-1}(j)  }
\end{equation*}
For the third term,
\begin{align*}
    &\langle \phi_h^{(j)}(s,a
    ), \boldsymbol{q}_3 \rangle \\
    &= \left\langle \phi_h^{(j)}(s,a), \left(\Lambda^{-1}_{h,k}(j) \right)\sum_{i=1}^{k-1} \phi_h^{(j)}(s_h^i, a_h^i)\mathbb{P}_h\left( \overline{V}_{h+1}^k - V_{h+1}^\pi \right)(s_h^i, a_h^i) \right\rangle\\
    &=\left\langle \phi_h^{(j)}(s,a), \left(\Lambda^{-1}_{h,k}(j) \right)\sum_{i=1}^{k-1} \phi_h^{(j)}(s_h^i, a_h^i)\phi^\top_j(s_h^i, a_h^i)\int \left( \overline{V}_{h+1}^k - V_{h+1}^\pi \right)(s'_{h+1}) d\boldsymbol{\mu}^j_h(s'_{h+1} | s_h^i, a_h^i) \right\rangle \\
    &= \underbrace{\left\langle \phi_h^{(j)}(s,a), \int \left( \overline{V}_{h+1}^k - V_{h+1}^\pi \right)(s'_{h+1}) d\boldsymbol{\mu}^j_h(s'_{h+1} | s_h^i, a_h^i) \right\rangle}_{p_1} - \\
    &\quad \underbrace{\lambda \left \langle \phi_h^{(j)}(s,a), \Lambda^{-1}_{h,k}(j) \int \left( \overline{V}_{h+1}^k - V_{h+1}^\pi \right)(s'_{h+1}) d\boldsymbol{\mu}^j_h(s'_{h+1} | s_h^i, a_h^i)   \right \rangle}_{p_2}
\end{align*}

And therefore,

\begin{equation*}
    p_1 = \mathbb{P}_h\left(  \overline{V}_{h+1}^k - V_{h+1}^\pi   \right)(s,a), \qquad | p_2 | \leq 2H \sqrt{d\lambda} \norm{ \phi_h^{(j)}(s,a)   }_{\Lambda_{h, k}^{-1}(j)  }
\end{equation*}

Finally since $\langle \phi_h^{(j)}(s,a), \boldsymbol{w}_h^k(j) \rangle - Q_h^{\pi}(s,a) = \langle \phi_h^{(j)}(s,a), \boldsymbol{w}_h^k - \boldsymbol{w}_h^{\pi} \rangle = \langle \phi_h^{(j)}(s,a) , \boldsymbol{q}_1 + \boldsymbol{q}_2 + \boldsymbol{q}_3 \rangle$, we have
\begin{align*}
    &\left|  \langle \phi_h^{(j)}(s,a), \boldsymbol{w}_h^k(j) \rangle - Q_h^{\pi}(s,a) -   \mathbb{P}_h\left(  \overline{V}_{h+1}^k - V_{h+1}^\pi   \right)(s,a)         \right|\\
    &\leq \left( C d H \sqrt{N\log(2N(c_\beta + 1)dHk/\delta)} + 4H\sqrt{d\lambda} \right) \norm{ \phi_h^{(j)}(s,a)   }_{\Lambda_{h, k}^{-1}(j)  } \\
    &\leq  C' d H \sqrt{N\log(2N(c_\beta + 1)dHk/\delta)}  \norm{ \phi_h^{(j)}(s,a)   }_{\Lambda_{h, k}^{-1}(j)  }
\end{align*}

For some constant $C'$. The result follows.

\end{proof}

\begin{lemma}\label{lemma::peeling_lemma_mixing} Given an MDP $M$ and a set of representations $\{\Phi_j\}_{j\in[N]}$ satisfying the low-rank assumption (Asm.~\ref{asm:lowrank}). With probability at least $1-2\delta$, for any episode $k\in \mathbb{N}$, stage $h\in[H]$, and state $s\in\mathcal{S}$,
    \begin{align*}
    \overline{V}_h^k(s) - V_h^{\pi^k}(s) \leq 2\beta_{k} \min_{j\in[N]} \|\phi_h^{(j)}(s,\pi_h^k(s)) \|_{\Lambda^{-1}_{h,k}(j)} + \mathbb{E}_{s'\sim p_h(s,\pi_h^k(s))} \Big[ \overline{V}_{h+1}^k(s') - V_{h+1}^{\pi^k}(s') \Big].
    \end{align*}
    Where $\beta_{k} =  C' d H \sqrt{N\log(2N(c_\beta + 1)dHk/\delta)}$.  
\end{lemma}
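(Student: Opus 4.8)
The plan is to deduce this one-step ``peeling'' inequality directly from the multi-representation concentration bound of Lemma~\ref{lemma::representation_sel_2} together with the fact that LSVI-LEADER acts greedily with respect to the minimum over representations. First I would work on the event of Lemma~\ref{lemma::representation_sel_2}, which holds with probability at least $1-2\delta$ and guarantees, \emph{simultaneously for every} $j\in[N]$, every policy $\pi$, and all $k,h,s,a$,
\begin{align*}
    \big|\langle \phi_h^{(j)}(s,a), \boldsymbol{w}_h^k(j)\rangle - Q_h^{\pi}(s,a) - \mathbb{P}_h\big(\overline{V}_{h+1}^k - V_{h+1}^{\pi}\big)(s,a)\big| \le \beta_k\,\norm{\phi_h^{(j)}(s,a)}_{\Lambda_{h,k}^{-1}(j)},
\end{align*}
with $\beta_k = C'dH\sqrt{N\log(2N(c_\beta+1)dHk/\delta)}$. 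It is essential that this event is uniform in $j$, so that the representation can be chosen only at the very end of the argument.

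Next I would carry out the one-step recursion. Fix $k,h,s$, set $a:=\pi_h^k(s)$, and specialize the bound above to $\pi=\pi^k$ (valid since it holds for every policy). By the definition of $\overline{Q}_h^k$ in Alg.~\ref{alg:LSVI-LEADER}, dropping both the outer clip at $H$ and the minimum over representations gives, for \emph{each} fixed $j\in[N]$,
\begin{align*}
    \overline{V}_h^k(s) = \overline{Q}_h^k(s,a) \le \langle \phi_h^{(j)}(s,a),\boldsymbol{w}_h^k(j)\rangle + \beta_k\norm{\phi_h^{(j)}(s,a)}_{\Lambda_{h,k}^{-1}(j)},
\end{align*}
where the equality uses that $\pi^k$ is greedy w.r.t.\ $\overline{Q}_h^k$. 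Substituting the concentration bound and using $Q_h^{\pi^k}(s,\pi_h^k(s)) = V_h^{\pi^k}(s)$ together with $\mathbb{P}_h(\,\cdot\,)(s,\pi_h^k(s)) = \EV_{s'\sim p_h(s,\pi_h^k(s))}[\,\cdot\,]$ yields
\begin{align*}
    \overline{V}_h^k(s) - V_h^{\pi^k}(s) \le 2\beta_k\norm{\phi_h^{(j)}(s,\pi_h^k(s))}_{\Lambda_{h,k}^{-1}(j)} + \EV_{s'\sim p_h(s,\pi_h^k(s))}\big[\overline{V}_{h+1}^k(s') - V_{h+1}^{\pi^k}(s')\big].
\end{align*}
Since the left-hand side is independent of $j$ and the inequality holds for every $j\in[N]$ on the \emph{same} event, taking the minimum over $j\in[N]$ on the right-hand side gives exactly the claimed bound.

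I do not expect a genuine obstacle at this level of the argument: the heavy lifting --- bounding the martingale term uniformly over the random, $\overline{V}_{h+1}^k$-dependent value-function class, which in the multi-representation case requires one $\epsilon$-cover per representation (hence the $\sqrt{N}$ inflation in $\beta_k$) and a union bound over all $k$, $h$ and $j$ --- has already been absorbed into Lemmas~\ref{lemma::representation_sel_1} and~\ref{lemma::representation_sel_2}. The only points that demand care are bookkeeping ones: the clip at $H$ and the outer $\min_j$ in $\overline{Q}_h^k$ must be dropped only in the ``$\le$'' direction, and one must verify that the concentration event is the same for all $j$ so that the final $\min_j$ step is legitimate.
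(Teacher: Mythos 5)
Your proposal is correct and follows essentially the same route as the paper's proof: both work on the uniform-in-$j$ event of Lemma~\ref{lemma::representation_sel_2}, use the greedy identity $\overline{V}_h^k(s)=\overline{Q}_h^k(s,\pi_h^k(s))$ together with $Q_h^{\pi^k}(s,\pi_h^k(s))=V_h^{\pi^k}(s)$, obtain the one-step bound for each fixed $j$ by dropping the clip at $H$ and the outer minimum in the ``$\le$'' direction, and only then minimize over $j$. The ordering differs cosmetically (the paper takes the $\min_j$ on the left-hand side to recover $\overline{Q}_h^k$, while you upper-bound $\overline{Q}_h^k$ by the $j$-th term first), but the argument is the same.
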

\begin{proof}
Note that $\overline{V}_h^k(s) - V_h^{\pi^k}(s) = \overline{Q}_h^k(s,\pi_h^k(s)) - Q_h^{\pi^k}(s,\pi_h^k(s))$. Using Lemma~\ref{lemma::representation_sel_2}, for any $j \in [N]$
\begin{align*}
Q_h^{\pi^k}(s,\pi_h^k(s)) &\geq \langle \phi_h^{(j)}(s,\pi_h^k(s)), \boldsymbol{w}_h^k(j)\rangle - \\
&\quad\mathbb{E}_{s'\sim p_h(s,\pi_h^k(s))}[\overline{V}_{h+1}^k(s') - V_{h+1}^{\pi^k}(s')] - \beta_{h,k}\|\phi_h^{(j)}(s,\pi_h^k(s))\|_{\Lambda^{-1}_{h,k}(j)} \\ 
\end{align*}
And therefore for all $j \in [N]$,
\begin{align*}
    \langle \phi_h^{(j)}(s,\pi_h^k(s)), \boldsymbol{w}_h^k(j)\rangle +     \beta_{h,k}\|\phi_h^{(j)}(s,\pi_h^k(s))\|_{\Lambda^{-1}_{h,k}(j)}  -V_h^{\pi^k}(s) \leq\\ 2\beta_{h,k}\|\phi_h^{(j)}(s,\pi_h^k(s))\|_{\Lambda^{-1}_{h,k}(j)} + \mathbb{E}_{s'\sim p_h(s,\pi_h^k(s))}[\overline{V}_{h+1}^k(s') - V_{h+1}^{\pi^k}(s')]
\end{align*}

Taking the minimum over $j \in [N]$ (and $H$) on the LHS yields the result,
\begin{align*}
\overline{V}_h^k(s) - V_h^{\pi^k}(s) \leq 2\beta_{h,k} \min_{j\in[N]} \|\phi_h^{(j)}(s,\pi_h^k(s)) \|_{\Lambda^{-1}_{h,k}(j)}  + \mathbb{E}_{s'\sim p_h(s,\pi_h^k(s))}[\overline{V}_{h+1}^k(s') - V_{h+1}^{\pi^k}(s')].
\end{align*}

\end{proof}

Finally we show this implies optimism holds,

\begin{lemma}\label{lemma::optimism_leader}[Optimism. Equivalent version of Lemma B.5 in~\citep{Jin2020linear}]
    With probability $1-\delta$ and for all $s,a \in \mathcal{S} \times \mathcal{A}$, $k \in \mathbb{N}$ and $h \in [H]$, the $\{\overline{Q}_h^{k}\}_{h \in [H]}$ functions of LSVI-LEADER satisfy,
    \begin{equation*}
        \overline{Q}_{h}^k(s,a) \geq Q_h^{*}(s,a)
.     \end{equation*}
\end{lemma}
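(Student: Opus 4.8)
The plan is to mirror the proof of Lemma~B.5 in~\citep{Jin2020linear} and proceed by backward induction on the stage $h$, using the multi-representation concentration guarantee of Lemma~\ref{lemma::representation_sel_2} in place of the single-representation Lemma~B.4. Throughout, we work on the event of Lemma~\ref{lemma::representation_sel_2}, on which, for every policy $\pi$, episode $k$, stage $h$, state-action pair $(s,a)$, and representation $j\in[N]$,
\begin{equation*}
\left|\langle\phi_h^{(j)}(s,a),\boldsymbol{w}_h^k(j)\rangle - Q_h^\pi(s,a) - \mathbb{P}_h(\overline{V}_{h+1}^k - V_{h+1}^\pi)(s,a)\right| \le \beta_k\,\|\phi_h^{(j)}(s,a)\|_{\Lambda_{h,k}(j)^{-1}}.
\end{equation*}

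The base case is $h=H+1$, where by convention $\overline{V}_{H+1}^k\equiv 0 = V_{H+1}^\star$, so there is nothing to prove. For the inductive step, assume $\overline{V}_{h+1}^k(s)\ge V_{h+1}^\star(s)$ for all $s$. Instantiate the displayed inequality with $\pi=\pi^\star$, so $Q_h^\pi=Q_h^\star$ and $V_{h+1}^\pi=V_{h+1}^\star$. Since $\overline{V}_{h+1}^k\ge V_{h+1}^\star$ pointwise, the term $\mathbb{P}_h(\overline{V}_{h+1}^k - V_{h+1}^\star)(s,a)$ is nonnegative, whence for every $j\in[N]$,
\begin{equation*}
\langle\phi_h^{(j)}(s,a),\boldsymbol{w}_h^k(j)\rangle + \beta_k\,\|\phi_h^{(j)}(s,a)\|_{\Lambda_{h,k}(j)^{-1}} \ge Q_h^\star(s,a).
\end{equation*}
Taking the minimum over $j\in[N]$ on the left-hand side preserves the inequality, and since $Q_h^\star(s,a)\le H$ under Asm.~\ref{asm:lowrank}, also taking the minimum with $H$ preserves it; thus $\overline{Q}_h^k(s,a)\ge Q_h^\star(s,a)$. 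Finally $\overline{V}_h^k(s)=\max_{a}\overline{Q}_h^k(s,a)\ge\max_a Q_h^\star(s,a)=V_h^\star(s)$, closing the induction and yielding the claim for all $h\in[H]$.

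The argument is essentially routine once Lemma~\ref{lemma::representation_sel_2} is available; the only real subtlety — and the reason the single-representation proof does not transfer verbatim — is that the inner minimum over $j$ in the definition of $\overline{Q}_h^k$ requires the concentration bound to hold \emph{simultaneously} for all $N$ representations, which is precisely what the union bound inside Lemma~\ref{lemma::representation_sel_1}/\ref{lemma::representation_sel_2} provides (at the cost of the $\sqrt{N}$ inflation of $\beta_k$). One should also reconcile the failure probability: the event of Lemma~\ref{lemma::representation_sel_2} holds with probability $1-2\delta$, so if a $1-\delta$ statement is desired one simply reparametrizes $\delta\leftarrow\delta/2$ there.
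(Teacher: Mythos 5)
Your proof is correct and follows essentially the same route as the paper's: backward induction on $h$, instantiating the multi-representation concentration bound (Lemma~\ref{lemma::representation_sel_2}) with $\pi=\pi^\star$, using nonnegativity of $\mathbb{P}_h(\overline{V}_{h+1}^k-V_{h+1}^\star)$ from the inductive hypothesis, and then taking the minimum over $j\in[N]$ and with $H$. Your observation about reconciling the $1-\delta$ statement with the $1-2\delta$ event of Lemma~\ref{lemma::representation_sel_2} is a fair catch that the paper glosses over.
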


\begin{proof}

    The same proof as in Lemma B.5 in~\citep{Jin2020linear} works just simply modifying it to have a minimum over $j \in [N]$ in the necessary places. We reproduce the argument here for completeness. The proof of the Lemma proceeds by induction.
    
    First, we prove the base case, at the last step $H$. The statement holds because $\overline{Q}_H^{k}(s,a) \geq Q_H^*(s,a)$ since the value function at $H+1$ is zero and by Lemma~\ref{lemma::representation_sel_2} we have that with probability at least $1-2\delta$ for all $k \in \mathbb{N}$, $s \in \mathcal{S}, a \in \mathcal{A}$ and any $j \in [N]$,
    
    \begin{equation*}
         \left|  \langle \phi_h^{(j)}(s,a), \boldsymbol{w}_{H}^k(j) \rangle - Q_H^{\pi_*}(s,a)    \right| \leq  C' d H \sqrt{N\log(2N(c_\beta + 1)dHk/\delta)}  \norm{ \phi^{(j)}_h(s,a)   }_{\Lambda_{H, k}^{-1}(j)  }
    \end{equation*}
    
    Therefore for all $j \in [N]$, with probability at least $1-2\delta$,
    \begin{equation*}
        \langle \phi_h^{(j)}(s,a), \boldsymbol{w}_{H}^k(j) \rangle + C' d H \sqrt{N\log(2N(c_\beta + 1)dHk/\delta)}  \norm{ \phi^{(j)}_h(s,a)   }_{\Lambda_{H, k}^{-1}(j)  } \geq  Q_H^{\pi_*}(s,a)   
    \end{equation*}
    Since $H \geq Q_H^{\pi_*}(s,a)$ by definition, we conclude that taking the mimimum over $j \in [N]$ (and $H$), and using the fact that \begin{small}$$\overline{Q}_h^{k}(s,a) = \min\left(\min_{j\in[N]}    \langle \phi_h^{(j)}(s,a), \boldsymbol{w}_{H}^k(j) \rangle + C' d H \sqrt{N\log(2N(c_\beta + 1)dHk/\delta)}  \norm{ \phi_h^{(j)}(s,a)   }_{\Lambda_{H, k}^{-1}(j)  }     , H\right)$$\end{small}
    We conclude that,
    \begin{equation*}
        \overline{Q}_H^{k}(s,a) \geq Q_H^{\pi_*}(s,a).
    \end{equation*}

    Now, suppose the statement holds true at step $h+1$ and consider step $h$. Again by Lemma~\ref{lemma::representation_sel_2} we have, for all $k \in [K]$ and all $j \in [N]$
    \begin{align*}
         & \left|  \langle \phi_h^{(j)}(s,a), \boldsymbol{w}_h^k(j) \rangle - Q_h^{\pi_*}(s,a) -   \mathbb{P}_h\left(  \overline{V}_{h+1}^k - V_{h+1}^{\pi_*}   \right)(s,a)         \right| \\
         &\leq  C' d H \sqrt{N\log(2N(c_\beta + 1)dHk/\delta)}  \norm{ \phi_j(s,a)   }_{\Lambda_{h, k}^{-1}(j)  }
    \end{align*}
    By the induction assumption that $\mathbb{P}_h\left(  \overline{V}_{h+1}^k -V_{h+1}^{\pi_*}   \right)(s,a) \geq 0$, we have for all $j \in [N]$:
    \begin{small}
    \begin{equation*}
        Q_h^{\pi_*}(s,a) \leq \min\left(  \langle \phi_h^{(j)}(s,a), \boldsymbol{w}_h^k(j) \rangle +    C' d H \sqrt{N\log(2N(c_\beta + 1)dHk/\delta)}  \norm{ \phi_h^{(j)}(s,a)   }_{\Lambda_{h, k}^{-1}(j)  }       , H \right)
    \end{equation*}
    \end{small}
    The result follows by taking a minimum over $j \in [N]$.
\end{proof}




\paragraph{Finishing the proof of Theorem~\ref{thm:mix.stage}.} Having proven Lemma~\ref{lemma::peeling_lemma_mixing} and that optimism holds for LSVI-LEADER (Lemma~\ref{lemma::optimism_leader}), we conclude that an equivalent version of Assumption~\ref{asm:algo.strong} holds. The same logic of the proofs of Lemmas~\ref{lem:general.worstcase},~\ref{lem:gapbound} and Theorem~\ref{thm:general.const} apply in this case. Hence, we conclude that the regret of LSVI-LEADER is upper bounded by the minimum of these regret bounds for all representations $z\in\mathcal{Z}$, thus proving the first result. To obtain the second result, simply notice that, if $z^\star \in \mathcal{Z}$ is \hls{}, then we can use the refined analysis for LSVI-UCB of Thm.~\ref{thm:const.lsvi} to show that $\wt{R}(K,z^\star,\{\beta_k\})$ is upper bounded by a constant independent of $K$, hence proving constant regret for LSVI-LEADER.

\paragraph{Proof of Theorem~\ref{thm:mix.state}.}  The proof follows the template of Thm~\ref{thm:const.lsvi}, but as shown in Lemma~\ref{lemma::peeling_lemma_mixing}, the confidence sets of LSVI-LEADER scale with the minimum w.r.t. $j$ of the feature norms. In place of Equation~\ref{eq:shrink}, and with the aid of Lemma~\ref{lem:key} we see that since the collection of feature maps $\{ \Phi_j\}_{j\in[M]}$ is \hls{}-mixing for all reachable $s,a$:
\begin{align}
    \beta_k \min_{j \in [N]}\norm{\phi^{(j)}_h(s,a)}_{\Lambda^{-1}_{h,k}(j)} \le   &\beta_k \frac{k + \lambda-g(k)-8\sqrt{k\log(2NdHk/\delta)}}{(k\lambda^+(h,s,a)+\lambda-g(k) -8\sqrt{k\log(2NdHk/\delta)})^{3/2}} \label{eq:mix.state.main}\\
    &= \wt{O}(k^{-1/2}),\nonumber
\end{align}
where $g(k)=\wt{O}(\sqrt{k})$ is the regret upper bound from Thm.~\ref{thm:mix.stage},
\begin{equation}
    \lambda^+(h,s,a)=\max_{j\in\mathcal{J}(h,s,a)}  \lambda^+_{h,j},
\end{equation}
and $\mathcal{J}(h,s,a) \subseteq [N]$ is such that $j\in \mathcal{J}(h,s,a)$ if $\phi^{(j)}_h(s,a) \in \mathrm{span}\left\{\phi_h^{(j)}(s, \pi^*_h(s)) | \rho^{\star}_h(s) > 0\right\}$. 
To see this, notice that we can instantiate Lemma~\ref{lem:key} with any representation $j\in[N]$ such that $\phi^{(j)}_h(s,a)$ belongs to the span of optimal features. So we use the representation with the largest eigenvalue $\lambda_{h,j}^{+}$. The \hls{}-mixing property (Def.~\ref{def:mixing-hls}) guarantees $\mathcal{J}(h,s,a)$ is always nonempty.

By~\eqref{eq:mix.state.main} and Lemma~\ref{lem:gapbound} (where $C_h=0$ thanks to local optimism), for each $h\in[H]$ there exists an episode $\kappa_h$ independent of $K$ such that, for all reachable $s$ and $k>\kappa_h$: 
\begin{align}
    \Delta_h(s,\pi^{k}_h(s)) 
    &\le 2\beta_{k}\EV_{\pi^{k}}\left[\sum_{i=h}^H \frac{k+ \lambda-g(k)-8\sqrt{k\log(2NdHk/\delta)}}{(k\lambda^+(i,s_{i},a_{i}) + \lambda-g(k)-8\sqrt{k\log(2NdHk/\delta)})^{3/2}}\bigg|s_h=s\right] \nonumber\\
    &< \Delta_{\min}.
\end{align}
So after $\wt{\kappa}=\max_{h} \{\kappa_h\}$ episodes, LSVI-UCB suffers zero regret. Finally, the regret up to $\wt{\kappa}$ cannot be worse than that obtained in Thm.~\ref{thm:mix.stage} without the \hls{}-mixing property.

\section{Auxiliary Results}\label{app:auxiliary}

\begin{proposition}[Azuma's inequality]\label{prop:azuma}
Let $\{(Z_t,\mathcal{F}_t)\}_{t\in\mathbb{N}}$ be a martingale difference sequence such that $|Z_t| \leq a$ almost surely for all $t\in\mathbb{N}$. Then, for all $\delta \in (0,1)$,
\begin{align}
\mathbb{P}\left(\forall t \geq 1 : \left|\sum_{k=1}^t Z_k \right| \leq a\sqrt{t \log(2t/\delta)} \right) \geq 1-\delta.
\end{align}
\end{proposition}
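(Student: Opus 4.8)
The statement is the anytime (time-uniform) form of the Azuma--Hoeffding inequality, so the plan is to build a fixed-$\lambda$ exponential supermartingale and then make the deviation control uniform over $t$. Writing $S_t=\sum_{k=1}^t Z_k$, for each $\lambda>0$ I would consider $W_t^\lambda=\exp\!\big(\lambda S_t-\tfrac12\lambda^2 a^2 t\big)$. Since each $Z_t$ is mean-zero conditionally on $\mathcal{F}_{t-1}$ and $|Z_t|\le a$, Hoeffding's lemma gives $\mathbb{E}[e^{\lambda Z_t}\mid\mathcal{F}_{t-1}]\le e^{\lambda^2 a^2/2}$, so $(W_t^\lambda)_{t\ge0}$ is a nonnegative supermartingale with $W_0^\lambda=1$. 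Ville's maximal inequality then yields, for every $\eta\in(0,1)$, $\mathbb{P}\big(\exists t\ge1:\lambda S_t-\tfrac12\lambda^2 a^2 t\ge\log(1/\eta)\big)\le\eta$, and symmetrically for $-S_t$ using $-\lambda$. The difficulty is that $\lambda$ is frozen across all $t$, whereas the optimal choice scales like $\sqrt{\log(\cdot)/(a^2 t)}$ and thus depends on $t$; a naive union bound over $t$ at the exact threshold $a\sqrt{t\log(2t/\delta)}$ in fact fails, because fixed-time Azuma only gives per-time failure probability $2(2t/\delta)^{-1/2}$, whose sum over $t$ diverges. Hence the $\sqrt{t\log t}$ rate (rather than $\sqrt{t\,\log^{1+\epsilon} t}$) must come from optimizing $\lambda$ across time scales.

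I would do this by \emph{peeling}. Partition $\{t\ge1\}$ into geometric blocks $B_j=\{t:2^j\le t<2^{j+1}\}$, $j\ge0$, and on $B_j$ apply the supermartingale bound with a fixed $\lambda_j\asymp\sqrt{\log(1/\eta_j)/(a^2 2^{j})}$ tuned to the scale $2^j$ and a per-block budget $\eta_j$. On the complement of the bad event, for $t\in B_j$ one has $S_t<\tfrac12\lambda_j a^2 t+\lambda_j^{-1}\log(1/\eta_j)$; substituting $t<2^{j+1}$ and the value of $\lambda_j$ gives $S_t< 2a\sqrt{2^{j}\log(1/\eta_j)}\le 2a\sqrt{t\log(1/\eta_j)}$ since $t\ge 2^j$. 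Choosing the budgets \emph{geometrically}, $\eta_j=\delta\,2^{-(j+2)}$ so that $\sum_{j\ge0}2\eta_j\le\delta$ (the factor $2$ covering the two-sided bound), and union bounding over $j$ and the two signs, one gets a single high-probability event. Because $t\in B_j$ forces $j\le\log_2 t$, we have $\log(1/\eta_j)=\log(2/\delta)+(j+1)\log 2\le\log(2t/\delta)$ up to absolute constants, and the per-block estimates combine into $|S_t|\le C\,a\sqrt{t\log(2t/\delta)}$ for all $t$ simultaneously, with probability $1-\delta$. It is exactly the geometric (rather than polynomial $1/j^2$) choice of $\eta_j$ that produces the $\log t$ dependence matching the claimed bound.

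The main obstacle is sharpening the absolute constant: the peeling above loses a factor (here $\approx 2$) because a fixed $\lambda_j$ cannot be optimal across a whole dyadic block, and shrinking the block ratio toward $1$ to recover the constant inflates the number of blocks per scale and hence the logarithmic term, so naive peeling cannot achieve both the sharp constant and the sharp $\log t$ at once. The clean resolution is the \emph{method of mixtures}: integrate $W_t^\lambda$ against a Gaussian density in $\lambda$ to form a mixture supermartingale $\bar W_t=\int W_t^\lambda\,dh(\lambda)$, which is again a nonnegative supermartingale, and apply Ville's inequality once. Evaluating the Gaussian integral in closed form yields directly a bound of the form $|S_t|\le a\sqrt{(t+\rho)\log\!\big((t+\rho)/(\rho\delta^2)\big)}$, i.e.\ precisely the stated $a\sqrt{t\log(2t/\delta)}$ up to the choice of the regularizer $\rho$ and harmless constants; this is exactly the scalar, all-ones-feature specialization of the self-normalized bound of~\citep{abbasi2011improved}, which the paper already invokes. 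I would present the peeling version for self-containedness and remark that, since this proposition is only used to certify an $\wt{O}(\sqrt{k})$ regret rate, any version matching the stated threshold up to constants (even the loose union bound giving $a\sqrt{2t\log(2t^2/\delta)}$) suffices for every downstream application.
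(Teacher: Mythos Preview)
The paper does not prove this proposition; it is listed in the auxiliary results as a standard concentration fact and invoked without argument (see its use in Lemma~\ref{lem:general.worstcase} and Lemma~\ref{lem:loewner}). So your proposal already goes well beyond what the paper offers.

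Both routes you sketch are correct and standard. The exponential supermartingale $W_t^\lambda=\exp(\lambda S_t-\tfrac12\lambda^2a^2t)$ built from Hoeffding's lemma, combined with Ville's maximal inequality and dyadic peeling, is a textbook derivation of time-uniform Azuma; the method-of-mixtures alternative is exactly the scalar specialization of the self-normalized bound the paper already relies on elsewhere. Your diagnosis that a naive per-$t$ union bound at the stated threshold fails---because fixed-time Azuma gives per-time failure $2(2t/\delta)^{-1/2}$, which is not summable---is accurate and is the reason one must peel or mix rather than union-bound directly.

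You are also right to flag the constant: peeling loses a factor of roughly $2$, and the mixture bound carries a regularizer, so the \emph{exact} form $a\sqrt{t\log(2t/\delta)}$ as stated is somewhat optimistic. But your closing remark is the correct perspective for this paper: every downstream use absorbs the result into an $\wt{O}(\sqrt{k})$ rate, so any anytime bound of order $a\sqrt{t\log(t/\delta)}$ up to absolute constants---including the crude union bound $a\sqrt{2t\log(2t^2/\delta)}$---is sufficient.
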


\begin{proposition}[Matrix Azuma,~\citealp{tropp2012user}]\label{prop:mazuma}
	Let $\{X_k\}_{k=1}^t$ be a finite adapted sequence of symmetric matrices of dimension $d$, and $\{C_k\}_{k=1}^t$ a sequence of symmetric matrices such that for all $k$, $\EV_{k}[X_k]=0$ and $X_k^2\preceq C_k^2$ almost surely. Then, with probability at least $1-\delta$:
	\begin{equation}
	\lambda_{\max}\left(\sum_{k=1}^tX_k\right) \le \sqrt{8\sigma^2\log(d/\delta)},
	\end{equation}
	where $\sigma^2=\norm{\sum_{k=1}^tC_k^2}$.
\end{proposition}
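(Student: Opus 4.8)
The statement is the matrix analogue of the scalar Azuma--Hoeffding inequality, and I would prove it by the \emph{matrix Laplace transform} (Chernoff) method, reducing the high-probability bound to an exponential tail estimate that is then inverted. Write $Y := \sum_{k=1}^t X_k$ and recall $\sigma^2 = \norm{\sum_{k=1}^t C_k^2}$. The plan is to first establish the sub-Gaussian tail bound
\[
\mathbb{P}\left(\lambda_{\max}(Y) \geq s\right) \leq d\, e^{-s^2/(8\sigma^2)} \qquad \text{for all } s \geq 0,
\]
and then, setting the right-hand side equal to $\delta$ and solving for $s$, obtain $s = \sqrt{8\sigma^2\log(d/\delta)}$, which is exactly the claimed inequality. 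The entry point is the standard trace-exponential Markov step: since $\lambda_{\max}(Y) \leq \tfrac{1}{\theta}\log\trace\exp(\theta Y)$ for every $\theta > 0$, Markov's inequality gives $\mathbb{P}(\lambda_{\max}(Y)\geq s) \leq e^{-\theta s}\,\mathbb{E}[\trace\exp(\theta Y)]$, so everything reduces to controlling the matrix moment generating function $\mathbb{E}[\trace\exp(\theta Y)]$.

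The core of the argument is bounding this MGF even though the $X_k$ are only adapted, not independent. I would peel off the terms one at a time by conditioning: writing $Y_j := \sum_{i=1}^j X_i$ and using the tower rule, $\mathbb{E}[\trace\exp(\theta Y_t)] = \mathbb{E}\big[\mathbb{E}_{t-1}\trace\exp(\theta Y_{t-1} + \theta X_t)\big]$. The key tool is Lieb's concavity theorem, which states that $A \mapsto \trace\exp(H + \log A)$ is concave on positive-definite $A$ for any fixed self-adjoint $H$. Applying Jensen's inequality with $H = \theta Y_{t-1}$ (measurable with respect to $\mathcal{F}_{t-1}$) and $A = e^{\theta X_t}$ yields
\[
\mathbb{E}_{t-1}\trace\exp(\theta Y_{t-1} + \theta X_t) \leq \trace\exp\big(\theta Y_{t-1} + \log \mathbb{E}_{t-1}[e^{\theta X_t}]\big).
\]
Iterating this down from $k=t$ to $k=1$, each time absorbing a \emph{deterministic} upper bound on the conditional cumulant $\log\mathbb{E}_{k-1}[e^{\theta X_k}]$, collapses the random MGF into a single deterministic trace exponential.

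The remaining ingredient is the matrix Hoeffding cumulant estimate: under $\mathbb{E}_{k-1}[X_k]=0$ and $X_k^2 \preceq C_k^2$ almost surely, one has $\log\mathbb{E}_{k-1}[e^{\theta X_k}] \preceq 2\theta^2 C_k^2$ (the matrix extension of Hoeffding's lemma, obtained by comparing $e^{\theta x}$ against a quadratic majorant in the functional calculus). Substituting into the iterated bound and using monotonicity of the trace exponential gives $\mathbb{E}[\trace\exp(\theta Y)] \leq \trace\exp\big(2\theta^2\sum_k C_k^2\big) \leq d\,\exp\big(2\theta^2\sigma^2\big)$, since the trace of a $d\times d$ positive semidefinite matrix exponential is at most $d$ times its largest eigenvalue. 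Combining with the Markov step and optimizing at $\theta = s/(4\sigma^2)$ produces $\mathbb{P}(\lambda_{\max}(Y)\geq s) \leq d\,e^{-s^2/(8\sigma^2)}$, and inverting as above yields the stated constant.

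The main obstacle is precisely non-commutativity. In the scalar proof one factors $\mathbb{E}_{t-1}[e^{\theta(Y_{t-1}+X_t)}] = e^{\theta Y_{t-1}}\,\mathbb{E}_{t-1}[e^{\theta X_t}]$, but this identity fails for matrices because $Y_{t-1}$ and $X_t$ need not commute. Lieb's concavity theorem (or, in the independent case, the Golden--Thompson inequality) is exactly what replaces the lost factorization and makes the martingale peeling go through; getting this conditional-expectation-inside-the-trace-exponential step correct is the crux, whereas the Hoeffding cumulant estimate and the final scalar optimization are routine. Since the proposition is a verbatim restatement of a known result, one may alternatively simply invoke Theorem 7.1 of \citet{tropp2012user}.
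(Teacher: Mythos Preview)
The paper does not prove this proposition at all: it is stated as a named auxiliary result and attributed directly to \citet{tropp2012user}, with no accompanying argument. Your proposal is therefore strictly more detailed than what the paper supplies, and the sketch you give --- matrix Chernoff via the trace-exponential Markov inequality, Lieb's concavity to handle the martingale conditioning, the matrix Hoeffding cumulant bound $\log\mathbb{E}_{k-1}[e^{\theta X_k}] \preceq 2\theta^2 C_k^2$, and scalar optimization over $\theta$ --- is exactly Tropp's own argument and is correct. Your closing remark that one may simply invoke Theorem~7.1 of \citet{tropp2012user} is in fact all the paper does.
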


\begin{proposition}[\cite{He2020loglin}]\label{prop:sumofgaps}
For any $h\in[H]$, $s\in\mathcal{S}$, and $\pi\in\Pi$:
\begin{equation*}
    V^\star_h(s) - V^{\pi}_h(s) = \EV_{\pi}\left[\sum_{i=h}^H\Delta_i(s_i,a_i) \Bigg| s_h=s\right],
\end{equation*}
Hence the regret after $K$ episodes can be expressed as:
\begin{equation*}
    R(K) = \sum_{k=1}^K V_1^\star(s_1^k) - V_1^{\pi^k}(s_1^k) = \sum_{k=1}^K\EV_{\pi^k}\left[\sum_{h=1}^H\Delta_h(s_h,a_h)\Bigg|s_1=s_1^k\right].
\end{equation*}
\end{proposition}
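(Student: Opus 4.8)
The plan is to establish the first identity by backward induction on the stage $h$ --- this is the classical ``performance-difference'' telescoping argument --- and then to obtain the regret expression by specializing to $h=1$, $s=s_1^k$ and summing over the episodes $k\in[K]$. Throughout, $\EV_{\pi}[\,\cdot\mid s_h=s]$ denotes expectation over trajectories $(s_h,a_h,s_{h+1},a_{h+1},\dots)$ generated by executing $\pi$ in $M$ from $s_h=s$, and since each $\pi_i$ is a deterministic decision rule, along such trajectories $a_i=\pi_i(s_i)$.

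For the base case $h=H$ there is no future reward, so $Q_H^\pi(s,a)=r_H(s,a)=Q_H^\star(s,a)$ for every $a$, and hence $V_H^\star(s)-V_H^\pi(s)=V_H^\star(s)-Q_H^\star(s,\pi_H(s))=\Delta_H(s,\pi_H(s))=\EV_{\pi}[\Delta_H(s_H,a_H)\mid s_H=s]$. For the inductive step I would assume the identity at stage $h+1$ and write
\[
V_h^\star(s)-V_h^\pi(s)=\underbrace{\big(V_h^\star(s)-Q_h^\star(s,\pi_h(s))\big)}_{=\,\Delta_h(s,\pi_h(s))}+\big(Q_h^\star(s,\pi_h(s))-Q_h^\pi(s,\pi_h(s))\big).
\]
In the last bracket the reward terms cancel, leaving $\EV_{s'\sim p_h(s,\pi_h(s))}\big[V_{h+1}^\star(s')-V_{h+1}^\pi(s')\big]$; substituting the inductive hypothesis inside this expectation and using the tower rule together with the Markov property of the $\pi$-induced trajectory turns it into $\EV_{\pi}\big[\sum_{i=h+1}^H\Delta_i(s_i,a_i)\mid s_h=s\big]$. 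Absorbing $\Delta_h(s,\pi_h(s))=\EV_{\pi}[\Delta_h(s_h,a_h)\mid s_h=s]$ into the sum then gives the identity at stage $h$, closing the induction.

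Finally, setting $h=1$, $s=s_1^k$ gives $V_1^\star(s_1^k)-V_1^{\pi^k}(s_1^k)=\EV_{\pi^k}\big[\sum_{h=1}^H\Delta_h(s_h,a_h)\mid s_1=s_1^k\big]$, and summing over $k=1,\dots,K$ yields the claimed regret decomposition. I do not expect any genuine obstacle here: the only steps needing care are the cancellation of the reward terms in $Q_h^\star-Q_h^\pi$ and the correct nesting of the conditional expectations (conditioning the inner expectation on $s_{h+1}$ before invoking the tower rule); everything else is routine bookkeeping.
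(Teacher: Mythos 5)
Your proposal is correct and follows essentially the same route as the paper: both rest on the one-step identity $V^\star_h(s)-V^\pi_h(s)=\Delta_h(s,\pi_h(s))+\EV_{s'\sim p_h(s,\pi_h(s))}\left[V^\star_{h+1}(s')-V^\pi_{h+1}(s')\right]$, obtained by cancelling the reward terms in the Bellman expansions; the paper phrases the conclusion as ``unrolling the recursion up to $H$'' while you formalize it as backward induction with an explicit base case, which is the same telescoping argument.
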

\begin{proof}
    By definition of $\Delta_h$:
    \begin{align}
        V^\star_h(s) - V^{\pi}_h(s) &= Q^\star_h(s,\pi_h(s)) + \Delta_h(s,\pi_h(s)) - V^{\pi}_h(s) \\
        &= r_h(s,\pi_h(s)) + \EV_{s'\sim p_h(s,\pi_h(s))}[V^\star_{h+1}(s')] + \Delta_h(s,\pi_h(s)) - r_h(s,\pi_h(s)) \nonumber\\&\qquad- \EV_{s'\sim\Prob_h(s_h,\pi_h(s_h))}[V^\pi_{h+1}(s')] \\
        &= \Delta_h(s_h,\pi_h(s_h)) + \EV_{s'\sim\Prob_h(s_h,\pi_h(s_h))}[V^\star_{h+1}(s') - V^\pi_{h+1}(s')].
    \end{align}
    Unrolling the recursion up to $H$ concludes the proof.
\end{proof}

\begin{lemma}\label{lem:loewner}
Assume $R(k)\le g(k)$ for all $k\ge 1$ and Asm.~\ref{asm:gap} holds. Then, probability $1-\delta$, for all $h,k$:
    \begin{equation}
        \Lambda_h^{k+1} \succeq k\Lambda_h^\star + \lambda I - \Delta_{\min}^{-1}g(k) I - 8L^2I\sqrt{k\log(2dkH/\delta)}.
    \end{equation}
\end{lemma}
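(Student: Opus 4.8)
The plan is to turn the empirical design matrix $\Lambda_h^{k+1}=\lambda I+\sum_{i=1}^k\phi_h(s_h^i,a_h^i)\phi_h(s_h^i,a_h^i)^\transp$ into $k\Lambda_h^\star$ in two moves: first replace the rank‑one updates by their conditional means up to a matrix‑martingale fluctuation, and then show each conditional mean is at least $\Lambda_h^\star$ minus a term controlled by the episode's suboptimality. Let $\mathcal F_{i-1}$ be the history before episode $i$ (under which $\pi^i$ is determined, since all the algorithms compute $\overline Q^{i}$ from past data, and $s_1^i\sim\mu$ is drawn afresh), and set $\Sigma_h^i:=\EV[\phi_h(s_h^i,a_h^i)\phi_h(s_h^i,a_h^i)^\transp\mid\mathcal F_{i-1}]=\EV_{\pi^i,\,s_1\sim\mu}[\phi_h(s_h,a_h)\phi_h(s_h,a_h)^\transp]$. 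Because $-L^2I\preceq \phi_h(s_h^i,a_h^i)\phi_h(s_h^i,a_h^i)^\transp-\Sigma_h^i\preceq L^2I$, Matrix Azuma (Prop.~\ref{prop:mazuma}) together with a union bound over $h\in[H]$ and a peeling/union over $k$ (per‑$k$ budget $\propto\delta/k^2$) gives, with probability $1-\delta/2$ and for all $h,k$, $\sum_{i=1}^k\phi_h(s_h^i,a_h^i)\phi_h(s_h^i,a_h^i)^\transp\succeq\sum_{i=1}^k\Sigma_h^i-\wt O\big(L^2\sqrt{k\log(2dkH/\delta)}\big)I$.

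The heart of the argument is the bound $\Sigma_h^i\succeq\Lambda_h^\star-\tfrac{L^2}{\Delta_{\min}}\bar\Delta_i\,I$, where $\bar\Delta_i:=\EV[V_1^\star(s_1^i)-V_1^{\pi^i}(s_1^i)\mid\mathcal F_{i-1}]$. I would prove it by coupling a rollout of $\pi^i$ with a rollout of $\pi^\star$ from the common initial state $s_1\sim\mu$ and the same transition noise, and letting $\tau$ be the first stage at which the two choose different actions. On $\{\tau>h\}$ the trajectories agree through stage $h$, so $a_h=\pi^\star_h(s_h)$ and $\phi_h(s_h,a_h)=\phi_h^\star(s_h)$, and the conditional law of $s_h$ on this event matches that of the $\pi^\star$‑rollout; hence $\Sigma_h^i\succeq\EV_{\pi^\star}[\phi_h^\star(s_h)\phi_h^\star(s_h)^\transp\indi{\tau>h}]\succeq\Lambda_h^\star-L^2\Pr[\tau\le h]I$. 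On $\{\tau\le H\}$ the $\pi^i$‑rollout takes a strictly suboptimal action at stage $\tau$, which by the uniqueness of the optimal action and the minimum gap (Asm.~\ref{asm:gap}) contributes at least $\Delta_{\min}$ to $\sum_{h'}\Delta_{h'}(s_{h'},a_{h'})$; combined with Prop.~\ref{prop:sumofgaps} this yields $\Delta_{\min}\Pr[\tau\le h]\le\EV_{\pi^i,\,s_1\sim\mu}[\sum_{h'}\Delta_{h'}(s_{h'},a_{h'})]=\bar\Delta_i$. Summing over $i$ gives $\sum_{i=1}^k\Sigma_h^i\succeq k\Lambda_h^\star-\tfrac{L^2}{\Delta_{\min}}\big(\sum_{i=1}^k\bar\Delta_i\big)I$.

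It remains to replace $\sum_{i=1}^k\bar\Delta_i$ by the realized regret $R(k)\le g(k)$. Since $V_1^\star(s_1^i)-V_1^{\pi^i}(s_1^i)\in[0,H]$ and $\bar\Delta_i$ is its conditional mean, a scalar Azuma bound (Prop.~\ref{prop:azuma}) with the remaining $\delta/2$ budget and a union over $k$ gives $\sum_{i=1}^k\bar\Delta_i\le R(k)+\wt O(\sqrt k)\le g(k)+\wt O(\sqrt k)$. Chaining the three displays and adding back $\lambda I$ produces $\Lambda_h^{k+1}\succeq\lambda I+k\Lambda_h^\star-\tfrac{L^2}{\Delta_{\min}}g(k)I-\wt O(\sqrt k)I$, which (with $L=1$ as under Asm.~\ref{asm:lowrank}/Asm.~\ref{asm:zero.ibe}, and collecting the logarithmic factors) is the claimed inequality, the two $\wt O(\sqrt k)$ contributions being absorbed into the $8L^2\sqrt{k\log(2dkH/\delta)}$ term.

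The main obstacle is the coupling/performance‑difference step bounding $\Sigma_h^i$: one must not assert that the states $s_h^i$ visited by the merely near‑optimal policy $\pi^i$ are $\rho_h^\star$‑distributed (they are not), but instead charge the discrepancy between $\EV_{\pi^i}[\phi_h^\star(s_h)\phi_h^\star(s_h)^\transp]$ and $\Lambda_h^\star$ to the event that $\pi^i$ first deviates from $\pi^\star$ before stage $h$, whose probability the minimum‑gap assumption ties to $\pi^i$'s suboptimality $\bar\Delta_i$. Everything else — the two Azuma applications, the anytime union bounds, and tracking exactly which poly‑$(H,d,\Delta_{\min}^{-1})$ constants ride on each $\wt O(\sqrt k)$ term — is routine bookkeeping.
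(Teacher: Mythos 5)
Your proof is correct and follows essentially the same route as the paper's: your agreement event $\{\tau>h\}$ is exactly the paper's event $E_h^i$, the observation that the $\pi^i$- and $\pi^\star$-trajectory laws coincide on that event replaces your explicit coupling, and the probability of deviating from $\pi^\star$ before stage $h$ is charged to the regret via $\Delta_{\min}$ and Prop.~\ref{prop:sumofgaps} in the same way. The only organizational differences are that the paper applies Matrix Azuma to the truncated sequence $\indi{\tau_h^i\in E_h^i}\phi_h^\star(s_h^i)\phi_h^\star(s_h^i)^\transp$ rather than to the full outer products, and it keeps the per-episode suboptimality conditioned on the realized initial state $s_1^i$ so that the deviation term is exactly $R(k)/\Delta_{\min}\le g(k)/\Delta_{\min}$ with no extra concentration step --- your detour through the $\mu$-averaged $\bar\Delta_i$ costs an additional scalar-Azuma fluctuation of order $(H L^2/\Delta_{\min})\sqrt{k\log(k/\delta)}$ that does not literally fit under the stated $8L^2\sqrt{k\log(2dkH/\delta)}$ constant, though it is still $\wt{O}(\sqrt{k})$ and harmless for every downstream use of the lemma.
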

\begin{proof}
Define a trajectory as a sequence of states and actions $\tau_h=(s_1,a_1,\dots,s_h,a_h)$. Let $\Gamma_h$ denote the set of all trajectories of length $h$. The distribution over trajectories induced by a (deterministic) policy $\pi$ is $p^{\pi}_h(\tau_h)=\mu(s_1)\indi{a_1=\pi_1(s_1)}p_1(s_2|s_1,a_1)\dots p_{h-1}(s_h|s_{h-1},a_{h-1})\indi{a_h=\pi_h(s_h)}$. We abbreviate as $p_h^\star$ the distribution induced by the optimal policy $\pi^\star$ and as $p_h^k$ the one induced by $\pi^k$, the algorithm's policy at episode $k$. 
Let us define the following event:
\begin{equation}
    E_h^k = \{\tau\in\Gamma_h \text{ s.t. } a_i=\pi_h^k(s_i)=\pi^\star_h(s_i) \text{ for } i=1,\dots, h\}.
\end{equation}
Then:
\begin{align}
    \Lambda_h^{k+1} - \lambda I 
    &= \sum_{i=1}^k \phi(s_h^i,a_h^i)\phi(s_h^i,a_h^i)^\transp \nonumber\\
    &\succeq \sum_{i=1}^k 
    \indi{\tau_h^i\in E_h^i}
    \phi(s_h^i,a_h^i)\phi(s_h^i,a_h^i)^\transp \nonumber\\
    &=\sum_{i=1}^k \indi{\tau_h^i\in E_h^i}\phi_h^\star(s_h^i)\phi_h^\star(s_h^i)^\transp \label{pp:loewner.1} \\
    &=\underbrace{\sum_{i=1}^k \EV_{\tau_h\sim p_h^i}\left[\indi{\tau_h\in E_h^i}\phi_h^\star(s_h)\phi_h^\star(s_h)^\transp\right]}_{(A)} \nonumber\\
    &\qquad+ \underbrace{\sum_{i=1}^k \left(
    \indi{\tau_h^i\in E_h^i}\phi_h^\star(s_h^i)\phi_h^\star(s_h^i)^\transp
    -\EV_{\tau_h\sim p_h^i}\left[\indi{\tau_h\in E_h^i}\phi_h^\star(s_h)\phi_h^\star(s_h)^\transp\right] 
    \right)}_{\text{(B)}}, \nonumber
    \end{align}
    where~\eqref{pp:loewner.1} is by definition of $E_h^i$ and expectations are conditioned on history up to the beginning of the $i$-th episode. We first bound $(B)$ with a matrix version of Azuma's inequality. Let:
    \begin{equation*}
        X_h^i = \indi{\tau_h^i\in E_h^i}\phi_h^\star(s_h^i)\phi_h^\star(s_h^i)^\transp
    -\EV_{\tau_h\sim p_h^i}\left[\indi{\tau_h\in E_h^i}\phi_h^\star(s_h)\phi_h^\star(s_h)^\transp\right].
    \end{equation*}
    Clearly $\EV[X_h^i]=0$. Moreover, since $X_h^i$ is symmetric:
    \begin{equation}
        (X_h^i)^2 \preceq \lambda_{\max}((X_h^i)^2)I \preceq \norm{X_h^i}^2 I \preceq 4I.
    \end{equation}
    Then by Proposition~\ref{prop:mazuma}, with probability $1-\delta_h^k$:
    \begin{equation}
        \lambda_{\max}\left(\sum_{i=1}^k X_h^i\right) \le 4\sqrt{2k\log(d/\delta_h^k)}.
    \end{equation}
    Setting $\delta_h^k=\delta/(2Hk^2)$ we can perform a union bound over episodes and stages to obtain, with probability $1-\delta$, for all $h,k$:
    \begin{equation}
        (B) = \sum_{i=1}^kX_h^i \preceq \lambda_{\max}\left(\sum_{i=1}^k X_h^i\right)I \preceq 8I\sqrt{k\log(2dHk/\delta)}.
    \end{equation}
    Now we focus on the $(A)$ term. First, observe that the probability measures $p_h^k$ and $p_h^\star$ agree on $E_h^k$. Indeed, if $\tau_h\in E_h^k$:
    \begin{align}
        p_h^k(\tau_h) &= \mu(s_1)\indi{a_1=\pi^k_1(s_1)}p_1(s_2|s_1,a_1)\dots p_{h-1}(s_h|s_{h-1},a_{h-1})\indi{a_h=\pi^k_h(s_h)}\nonumber\\
        &=\mu(s_1)\indi{a_1=\pi^\star_1(s_1)}p_1(s_2|s_1,a_1)\dots p_{h-1}(s_h|s_{h-1},a_{h-1})\indi{a_h=\pi^\star_h(s_h)}\\
        &=\mu(s_1)p_1(s_2|s_1,a_1)\dots p_{h-1}(s_h|s_{h-1},a_{h-1}).
    \end{align}
    So:
    \begin{align}
    (A)&=\sum_{i=1}^{k}\EV_{\tau_h\sim p_h^i}[\indi{\tau_h\in E_h^i}\phi^\star_h(s_h)\phi^\star_h(s_h)^\transp] \nonumber\\
    &= \sum_{i=1}^{k}\EV_{\tau_h\sim p_{h}^\star}[\indi{\tau_h\in E_h^i}\phi^\star_h(s_h)\phi^\star_h(s_h)^\transp] \\
    &= k\EV_{\tau_h\sim p_{h}^\star}[\phi^\star_h(s_h)\phi^\star_h(s_h)^\transp] - \sum_{i=1}^k\int_{\Gamma_h\setminus E_h^i}\phi^\star_h(s_h)\phi^\star_h(s_h)^\transp p^\star_{h}(\de\tau_h)\\
    &= k\EV_{s\sim \rho_{h}^\star}[\phi^\star_h(s_h)\phi^\star_h(s_h)^\transp] - \sum_{i=1}^k\int_{\Gamma_h\setminus E_h^i}\phi^\star_h(s_h)\phi^\star_h(s_h)^\transp p^\star_{h}(\de\tau_h)\\
    &\succeq k\EV_{s\sim \rho_{h}^\star}[\phi^\star_h(s_h)\phi^\star_h(s_h)^\transp] - I\sum_{i=1}^k\left(1-\int_{E_h^i}p^\star_{h}(\de\tau_h)\right)
    \\
    &= k\EV_{s\sim \rho_{h}^\star}[\phi^\star_h(s_h)\phi^\star_h(s_h)^\transp] - I\sum_{i=1}^k\left(1-\int_{E_h^i}p^\star_{h}(\de\tau_h)\right)
    \\
    &= k\EV_{s\sim \rho_{h}^\star}[\phi^\star_h(s_h)\phi^\star_h(s_h)^\transp] - I\underbrace{\sum_{i=1}^k\EV_{\tau_h\sim p_h^i(\tau_h)}[\indi{\tau_h\notin E_h^i}]}_{(C)}.
\end{align}
Finally, under Asm.~\ref{asm:gap} and the regret upper bound:
\begin{align}
    (C) &= \sum_{i=1}^k\EV_{\tau_h\sim p_h^i(\tau_h)}[\indi{\tau_h\notin E_h^i}] \nonumber\\
    &\le \sum_{i=1}^k\sum_{j=1}^{h} \EV_{\pi^i}[\indi{a_j\neq \pi^\star_j(s_j)}] \label{pp:loewner.2}\\
    &\le \sum_{i=1}^k\sum_{j=1}^{h} \EV_{\pi^i}[\indi{\Delta_j(s_j,a_j)\ge \Delta}] \label{pp:loewner.3}\\
    &\le \sum_{i=1}^k\sum_{j=1}^{h} \EV_{\pi^i}\left[\frac{\Delta_j(s_j,a_j)}{\Delta_{\min}}\right] \\
    &= \frac{1}{\Delta_{\min}}\sum_{i=1}^k\EV_{\pi^i}\left[\sum_{j=1}^{h} \Delta_j(s_j,a_j)\right] \\
    &\le \frac{1}{\Delta_{\min}}\sum_{i=1}^k\EV_{\pi^i}\left[\sum_{h=1}^{H} \Delta_h(s_h,a_h)\right] \\
    &= \frac{R(k)}{\Delta_{\min}} \le \frac{g(k)}{\Delta_{\min}}, \label{pp:loewner.4}
\end{align}
where~\eqref{pp:loewner.2} is by definition of $E_h^i$, \eqref{pp:loewner.3} is from the uniqueness of the optimal policy and Asm.~\ref{asm:gap}, and~\eqref{pp:loewner.4} is from Proposition~\ref{prop:sumofgaps}.
\end{proof}

\begin{proposition}[Lemma 29 from~\citep{papini2021leveraging}]\label{prop:kanto}
	Let $\boldsymbol{v}\in\Reals^d$ with $\norm{\boldsymbol{v}}=1$ and $A\in\Reals^{d\times d}$ symmetric invertible with non-zero eigenvalues $\lambda_1\le\dots\le\lambda_d$ and corresponding orthonormal eigenvectors $u_1,\dots,u_d$. Let $\mathcal{I}\subseteq[d]$ be any index set. If $\boldsymbol{v}\in\spann\{u_i\}_{i\in \mathcal{I}}$ and $\lambda_i>0$ for all $i\in\mathcal{I}$:
	\begin{equation*}
	\boldsymbol{v}^\transp A^{-1}\boldsymbol{v} \le \frac{(\max_{i\in \mathcal{I}}\lambda_i+\min_{i\in \mathcal{I}}\lambda_i)^2}{4\max_{i\in \mathcal{I}}\lambda_i\min_{i\in \mathcal{I}}\lambda_i} \frac{1}{\boldsymbol{v}^\transp A\boldsymbol{v}}.
	\end{equation*}
\end{proposition}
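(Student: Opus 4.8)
The plan is to reduce the claimed inequality to the classical scalar (weighted) Kantorovich inequality by diagonalizing $A$ on the subspace in which $\boldsymbol{v}$ lives. First I would expand $\boldsymbol{v}$ in the orthonormal eigenbasis: since $\boldsymbol{v}\in\spann\{u_i\}_{i\in\mathcal{I}}$, write $\boldsymbol{v}=\sum_{i\in\mathcal{I}}c_iu_i$ with $\sum_{i\in\mathcal{I}}c_i^2=\norm{\boldsymbol{v}}^2=1$. Because $Au_i=\lambda_iu_i$ and $A^{-1}u_i=\lambda_i^{-1}u_i$ (each $\lambda_i$ with $i\in\mathcal{I}$ is positive, so $A^{-1}$ acts diagonally and the reciprocals are well defined), orthonormality gives $\boldsymbol{v}^\transp A\boldsymbol{v}=\sum_{i\in\mathcal{I}}c_i^2\lambda_i>0$ and $\boldsymbol{v}^\transp A^{-1}\boldsymbol{v}=\sum_{i\in\mathcal{I}}c_i^2\lambda_i^{-1}$. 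Hence the proposition is equivalent to the statement that $\big(\sum_{i\in\mathcal{I}}c_i^2\lambda_i\big)\big(\sum_{i\in\mathcal{I}}c_i^2\lambda_i^{-1}\big)\le(\lambda_{\max}+\lambda_{\min})^2/(4\lambda_{\max}\lambda_{\min})$, where I abbreviate $\lambda_{\min}=\min_{i\in\mathcal{I}}\lambda_i$ and $\lambda_{\max}=\max_{i\in\mathcal{I}}\lambda_i$; note every eigenvalue that appears lies in $[\lambda_{\min},\lambda_{\max}]\subseteq(0,\infty)$, so the bad or negative eigenvalues of $A$ outside $\mathcal{I}$ never enter.

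For this scalar inequality I would treat $p_i:=c_i^2$ as a probability distribution on $\mathcal{I}$ and set $\EV[f(\lambda)]:=\sum_{i\in\mathcal{I}}p_if(\lambda_i)$. The elementary observation driving everything is that for every $\lambda\in[\lambda_{\min},\lambda_{\max}]$ we have $(\lambda-\lambda_{\min})(\lambda_{\max}-\lambda)\ge0$, which rearranges to $\lambda+\lambda_{\min}\lambda_{\max}/\lambda\le\lambda_{\min}+\lambda_{\max}$. Taking expectations yields $\EV[\lambda]+\lambda_{\min}\lambda_{\max}\EV[1/\lambda]\le\lambda_{\min}+\lambda_{\max}$; applying AM-GM to the two nonnegative terms on the left gives $2\sqrt{\lambda_{\min}\lambda_{\max}\,\EV[\lambda]\,\EV[1/\lambda]}\le\lambda_{\min}+\lambda_{\max}$, and squaring and dividing by $4\lambda_{\min}\lambda_{\max}$ produces exactly $\EV[\lambda]\,\EV[1/\lambda]\le(\lambda_{\min}+\lambda_{\max})^2/(4\lambda_{\min}\lambda_{\max})$. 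Substituting back the identities $\EV[\lambda]=\boldsymbol{v}^\transp A\boldsymbol{v}$ and $\EV[1/\lambda]=\boldsymbol{v}^\transp A^{-1}\boldsymbol{v}$ and dividing through by $\boldsymbol{v}^\transp A\boldsymbol{v}>0$ yields the stated bound.

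I do not expect a genuine obstacle here: the result is just the Kantorovich inequality restricted to the $A$-invariant subspace $\spann\{u_i\}_{i\in\mathcal{I}}$. The only points requiring care are (i) verifying that the reduction in the first step really only involves $\{\lambda_i\}_{i\in\mathcal{I}}$, so that positivity of those eigenvalues alone suffices even when $A$ has negative or tiny eigenvalues elsewhere — this is exactly what the hypothesis $\boldsymbol{v}\in\spann\{u_i\}_{i\in\mathcal{I}}$ buys us — and (ii) checking that $\boldsymbol{v}^\transp A\boldsymbol{v}>0$, so that dividing by it in the final rearranged form is legitimate, which again follows because $\boldsymbol{v}\neq0$ and all the relevant $\lambda_i$ are strictly positive.
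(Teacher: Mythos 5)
Your proof is correct: the reduction to the invariant subspace $\spann\{u_i\}_{i\in\mathcal{I}}$ is exactly what the hypothesis provides, and the scalar step is the standard Kantorovich argument ($(\lambda-\lambda_{\min})(\lambda_{\max}-\lambda)\ge 0$, take expectations, apply AM--GM). The paper itself does not reprove this statement --- it imports it verbatim as Lemma 29 of \citet{papini2021leveraging} --- and your argument matches the standard proof given there, so there is nothing to add.
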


\begin{proposition}[e.g., Lemma 30 from~\citep{papini2021leveraging}]\label{prop:minposeig}
	The smallest nonzero eigenvalue of a symmetric p.s.d. matrix $A\in\Reals^{d\times d}$ is:
	\begin{equation*}
	\lambda_{\min}^+(A) = \min_{\substack{\boldsymbol{v}\in\Imm(A)\\\norm{\boldsymbol{v}}=1}}\boldsymbol{v}^\transp A\boldsymbol{v},
	\end{equation*}
	where $\Imm(A)$ denotes the column space of $A$.
\end{proposition}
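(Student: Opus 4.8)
The plan is to reduce the claim to the spectral decomposition of $A$. Since $A\in\Reals^{d\times d}$ is symmetric and positive semi-definite, I would first write $A=\sum_{i=1}^d\lambda_i u_i u_i^\transp$ with an orthonormal eigenbasis $u_1,\dots,u_d$ and eigenvalues $\lambda_1,\dots,\lambda_d\ge 0$, and set $\mathcal{I}:=\{i\in[d]:\lambda_i>0\}$ (nonempty as soon as $A\ne 0$). The first key step is the identification $\Imm(A)=\spann\{u_i\}_{i\in\mathcal{I}}$: each $u_i$ with $i\in\mathcal{I}$ lies in $\Imm(A)$ since $u_i=\lambda_i^{-1}Au_i$, while conversely $A\boldsymbol{v}=\sum_{i\in\mathcal{I}}\lambda_i(u_i^\transp\boldsymbol{v})u_i$ for every $\boldsymbol{v}$, so $\Imm(A)$ is contained in that span. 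By definition, $\lambda_{\min}^+(A)=\min_{i\in\mathcal{I}}\lambda_i$.

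With this in hand I would prove the two inequalities separately. For the ``$\ge$'' direction, take any unit vector $\boldsymbol{v}\in\Imm(A)$, expand it as $\boldsymbol{v}=\sum_{i\in\mathcal{I}}c_i u_i$ with $\sum_{i\in\mathcal{I}}c_i^2=\norm{\boldsymbol{v}}^2=1$ by orthonormality, and compute $\boldsymbol{v}^\transp A\boldsymbol{v}=\sum_{i\in\mathcal{I}}\lambda_i c_i^2\ge\big(\min_{i\in\mathcal{I}}\lambda_i\big)\sum_{i\in\mathcal{I}}c_i^2=\lambda_{\min}^+(A)$; hence the minimum on the right-hand side is at least $\lambda_{\min}^+(A)$. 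For the ``$\le$'' direction, pick $i^\star\in\argmin_{i\in\mathcal{I}}\lambda_i$ and note that $\boldsymbol{v}=u_{i^\star}$ is a feasible unit vector with $\boldsymbol{v}^\transp A\boldsymbol{v}=\lambda_{i^\star}=\lambda_{\min}^+(A)$, so the minimum is attained. Combining the two bounds gives the identity. I do not expect a genuine obstacle here; the only point that needs care is the identification of $\Imm(A)$ with the span of the positive-eigenvalue eigenvectors, which is exactly where symmetry of $A$ (hence the existence of an orthonormal eigenbasis) enters — for a non-symmetric matrix the statement would fail.
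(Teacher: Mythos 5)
Your proof is correct and complete; the paper itself gives no proof of this proposition, simply citing Lemma 30 of \citet{papini2021leveraging}, and your spectral-decomposition argument (identifying $\Imm(A)$ with the span of the positive-eigenvalue eigenvectors, then proving the two inequalities) is exactly the standard argument one would expect there. The only minor caveat is the degenerate case $A=0$, where the quantity $\lambda_{\min}^+(A)$ is undefined, which you correctly flag by noting $\mathcal{I}\neq\emptyset$ when $A\neq 0$.
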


\begin{lemma}\label{lem:key}
Consider a $d$-dimensional representation $(\phi_h)_{h\in[H]}$.
Assume there exists an increasing $\wt{O}(\sqrt{k})$ function $g$ such that $R(k)\le g(k)$ for all $k\ge 1$, Asm. \ref{asm:gap} holds, and $\beta_k=\wt{O}(1)$. Then with probability $1-\delta$, for all $h$, there exists a constant $\wt{\kappa}_h$ such that, for every $k\geq \wt{\kappa}_h$ and all $s,a$ having  $\phi_h(s,a) \in \mathrm{span}\left\{\phi_h^\star(s) | \rho^{\star}_h(s) > 0\right\}$,
\begin{equation*}
    \beta_k\norm{\phi_h(s,a)}_{(\Lambda_h^k)^{-1}} \le \beta_k\frac{k+\lambda-g(k)-8\sqrt{k\log(2dHk/\delta)}}{(k\lambda_{h}^{+}+\lambda - g(k)-8\sqrt{k\log(2dHk/\delta)})^{3/2}} = \wt{\mathcal{O}}(k^{-1/2}),
\end{equation*}
where $\lambda_h^+$ is the minimum nonzero eigenvalue of $\Lambda_h^\star$.
\end{lemma}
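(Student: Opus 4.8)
The plan is to refine the Loewner bound behind Lemma~\ref{lem:loewner} so that it tracks the \emph{subspace} in which the dominant part of the design matrix lives, and then invoke the Kantorovich-type inequality of Proposition~\ref{prop:kanto}. Write $\phi := \phi_h(s,a)$, which by hypothesis lies in $V_h := \spann\{\phi_h^\star(s) : \rho_h^\star(s) > 0\} = \Imm(\Lambda_h^\star)$ and has $\norm{\phi}\le 1$ by Asm.~\ref{asm:lowrank}; let $P$ be the orthogonal projector onto $V_h$. Revisiting the proof of Lemma~\ref{lem:loewner}, the design matrix obeys $\Lambda_h^k \succeq \lambda I + \Sigma_h^k$, where $\Sigma_h^k := \sum_{i<k}\indi{\tau_h^i\in E_h^i}\,\phi_h^\star(s_h^i)\phi_h^\star(s_h^i)^\transp$ collects only the optimal features from episodes whose trajectory follows $\pi^\star$ through stage $h$, all other summands being positive semidefinite and hence dropped. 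Two facts about $\Sigma_h^k$ are needed. First, since initial states are drawn from $\mu$, a trajectory following $\pi^\star$ up to stage $h$ visits only states $s$ with $\rho_h^\star(s)>0$, so every summand of $\Sigma_h^k$ lies in $V_h$ and $\Sigma_h^k = P\Sigma_h^k P$. Second, the matrix-Azuma argument of Lemma~\ref{lem:loewner}, run on both sides, gives with probability $1-\delta$ that $k\Lambda_h^\star - c(k) I \preceq \Sigma_h^k \preceq k\Lambda_h^\star + c(k) I$ with $c(k) := \Delta_{\min}^{-1}g(k) + 8\sqrt{k\log(2dHk/\delta)} = \wt{O}(\sqrt{k})$ (this is the quantity written, up to notation, as $g(k)+8\sqrt{k\log(2dHk/\delta)}$ in the statement). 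Since $\Lambda_h^\star \preceq I$ (as $\norm{\phi_h^\star(s)}\le 1$) and $\min_{v\in V_h,\,\norm{v}=1} v^\transp\Lambda_h^\star v = \lambda_h^+$ by Proposition~\ref{prop:minposeig}, the restriction $\Sigma_h^k|_{V_h}$ has all eigenvalues in $[\,k\lambda_h^+ - c(k),\, k + c(k)\,]$.

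Now the key step, which exploits this block structure. The matrix $M := \lambda I + \Sigma_h^k$ is positive definite and satisfies $M \preceq \Lambda_h^k$, so $(\Lambda_h^k)^{-1} \preceq M^{-1}$ and in particular $\norm{\phi}^2_{(\Lambda_h^k)^{-1}} \le \phi^\transp M^{-1}\phi$. Because $\Sigma_h^k = P\Sigma_h^k P$, the matrix $M$ is block diagonal with respect to $V_h\oplus V_h^\perp$: on $V_h^\perp$ it equals $\lambda I$, while on $V_h$ its eigenvalues all lie in $[\Lambda_{\min},\Lambda_{\max}]$ with $\Lambda_{\min} := k\lambda_h^+ + \lambda - c(k)$ and $\Lambda_{\max} := k + \lambda + c(k)$ (written in the statement as $k+\lambda-c(k)$; the $\wt{O}(\sqrt{k})$ discrepancy is immaterial). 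For $k$ past the threshold $\wt{\kappa}_h$, defined as the least $k$ with $\Lambda_{\min}>\lambda$ — equivalently $k\lambda_h^+ \gtrsim \wt{O}(\sqrt{k})$, i.e.\ $\wt{\kappa}_h = \wt{O}\bigl((\lambda_h^+)^{-2}\bigr)$ up to the other problem-dependent factors — these $V_h$-eigenvalues all exceed $\lambda$, so $\phi\in V_h$ lies in the span of exactly those eigenvectors of $M$ whose eigenvalues lie in $[\Lambda_{\min},\Lambda_{\max}]$. Applying Proposition~\ref{prop:kanto} to $A=M$, $v=\phi/\norm{\phi}$, and $\mathcal I$ this index set, and using $(\phi/\norm{\phi})^\transp M(\phi/\norm{\phi}) \ge \Lambda_{\min}$ together with $\norm{\phi}\le 1$,
\begin{equation*}
    \norm{\phi}^2_{(\Lambda_h^k)^{-1}} \;\le\; \frac{(\Lambda_{\max}+\Lambda_{\min})^2}{4\Lambda_{\max}\Lambda_{\min}}\cdot\frac{1}{\Lambda_{\min}} \;\le\; \frac{\Lambda_{\max}^2}{\Lambda_{\min}^3} \;=\; \frac{(k+\lambda-c(k))^2}{(k\lambda_h^+ + \lambda - c(k))^3}.
\end{equation*}
Taking square roots and multiplying by $\beta_k$ yields the displayed inequality; since $c(k)=\wt{O}(\sqrt{k})=o(k)$, $\lambda_h^+>0$, and $\beta_k=\wt{O}(1)$, the right-hand side is of order $\beta_k\,\Theta(k)/\Theta(k^{3/2}) = \wt{O}(k^{-1/2})$.

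The bulk of the work — and the main obstacle — is the refinement of Lemma~\ref{lem:loewner} described above: one has to show that the $\pi^\star$-consistent part $\Sigma_h^k$ of the Gram matrix is supported on $V_h=\Imm(\Lambda_h^\star)$ (this is where the assumption that initial states come from $\mu$ enters, so that following $\pi^\star$ keeps $\rho_h^\star(s_h)>0$) and that it is sandwiched between $k\Lambda_h^\star \pm \wt{O}(\sqrt{k})I$, which needs the two-sided matrix-Azuma bound together with the fact — from Asm.~\ref{asm:gap} and $R(k)\le g(k)$ — that the number of episodes deviating from $\pi^\star$ before stage $h$ is $\wt{O}(\sqrt{k})$. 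Once this block-diagonal, positive-definite lower bound on $\Lambda_h^k$ is in place, everything else is the elementary Kantorovich estimate of Proposition~\ref{prop:kanto}, and the only remaining bookkeeping is the precise value of $\wt{\kappa}_h$ and the lower-order terms, none of which affect the $\wt{O}(k^{-1/2})$ rate. (One could in fact obtain the slightly sharper $\norm{\phi}_{(\Lambda_h^k)^{-1}} \le \Lambda_{\min}^{-1/2}$ directly from $\phi^\transp M^{-1}\phi \le \Lambda_{\min}^{-1}\norm{\phi}^2$, bypassing Proposition~\ref{prop:kanto}, but the stated looser form is the one used downstream.)
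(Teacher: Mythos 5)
Your proof is correct, and it reaches the same Kantorovich-type endgame as the paper (Propositions~\ref{prop:kanto} and~\ref{prop:minposeig}, plus a threshold $\wt{\kappa}_h$ defined by when the linear term dominates the $\wt{O}(\sqrt{k})$ penalty), but you organize the Loewner comparison differently. The paper takes $B_h^k = k\Lambda_h^\star + \lambda I - f(k)I$ directly from Lemma~\ref{lem:loewner}; this matrix is invertible for $k\ge\wt{\kappa}_h$ but \emph{indefinite} (negative on $\ker\Lambda_h^\star$), and the step $x^\transp(\Lambda_h^k)^{-1}x \le x^\transp(B_h^k)^{-1}x$ therefore rests on the implicit restriction to the positive eigenspace. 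You instead compare against $M = \lambda I + \Sigma_h^k$, where $\Sigma_h^k$ keeps only the $\pi^\star$-consistent episodes; your observation that $\Sigma_h^k$ is supported on $V_h=\Imm(\Lambda_h^\star)$ (because following $\pi^\star$ from $\mu$ only reaches states with $\rho_h^\star(s)>0$) makes $M$ genuinely positive definite and block-diagonal, so $(\Lambda_h^k)^{-1}\preceq M^{-1}$ is an unconditional Loewner inversion. This buys a cleaner justification of the inversion step at the cost of the extra support argument and a two-sided concentration bound --- though the upper side is actually free, since $\Sigma_h^k \preceq (k-1)I$ follows from $\|\phi\|\le 1$ without any Azuma argument, which would also restore the paper's numerator $k+\lambda-c(k)$ in place of your $k+\lambda+c(k)$ (as you note, this only shifts lower-order terms and does not affect the $\wt{O}(k^{-1/2})$ rate or the value of $\wt{\kappa}_h$). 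Your flag about the $\Delta_{\min}^{-1}g(k)$ versus $g(k)$ mismatch between Lemma~\ref{lem:loewner} and the statement of Lemma~\ref{lem:key} is accurate and is present in the paper as well.
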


\begin{proof}
    We follow the proof scheme of Lemma 19 from~\citep{papini2021leveraging}.
    Let $f(k)=g(k) + 8\sqrt{k\log(2dHk/\delta)} = \wt{O}(\sqrt{k})$. Notice that $f(k)$ is positive.
    
    Fix $h$ and let $B_h^k = k\Lambda_h^\star + \lambda I - f(k)I$. First, notice that $B_h^k$ is an affine transformation of $\Lambda_h^\star$. As such, $B_h^k$ has the same orthonormal eigenvectors as $\Lambda_h^\star$, and we can define a mapping between the eigenvalues of the two matrices. Next, notice that $B_h^k$ is always invertible for sufficiently large $k$. Indeed, zero eigenvalues of $\Lambda_h^\star$ are mapped to negative eigenvalues of $B_h^k$ for sufficiently large $k$ --- and since $f(k)$ is increasing and sublinear, positive eigenvalues of $\Lambda_h^\star$ are mapped to positive eigenvalues of $B_h^k$ for sufficiently large $k$.
    We call $\wt{\kappa}_h$ the smallest $k$ such as both conditions hold.
    For the rest of the proof assume $k\ge \kappa_h$.
    We have shown that $B_h^k$ is invertible and all and only the nonzero eigenvalues of $\Lambda_h^\star$ are mapped into positive eigenvalues of $B_h^k$, with the same orthonormal eigenvectors.
    
    Now fix $(s,a)$ such that $\phi_h(s,a) \in \mathrm{span}\left\{\phi_h^\star(s) | \rho^{\star}_h(s) > 0\right\}$ and let $x = \phi_h(s,a)/\norm{\phi_h(s,a)}$. From Lemma~\ref{lem:loewner}, with probability $1-\delta$, $\Lambda_h^k \succeq B_h^k$. So:
    \begin{equation}
        x^\transp (\Lambda_h^k)^{-1}x \leq
        x^\transp (B_h^k)^{-1}x.
    \end{equation}
    By hypothesis $x$ belongs to the column space $\Imm(\Lambda_h^\star)$, so it belongs to the span of $\wt{d}\le d$ orthonormal eigenvectors of $\Lambda_h^\star$. From the properties of $B_h^k$ stated above, $x$ belongs to the span of $\wt{d}$ orthonormal eigenvectors of $B_h^k$ corresponding to positive eigenvalues.
    The smallest such eigenvalue is:
    \begin{equation}
        k\lambda_{h}^{+} + \lambda - f(k),
    \end{equation}
    where $\lambda_{h}^{+}$ is the smallest nonzero eigenvalue of $M^\star_h$. Moreover, all the eigenvalues are upper bounded by:
    \begin{equation}
        k +\lambda - f(k).
    \end{equation}
    From Proposition~\ref{prop:kanto}:
    \begin{align}
        \norm{\phi_h(s,a)}_{(\Lambda_h^k)^{-1}} &\le \sqrt{x^\transp (\Lambda_h^k)^{-1}x} \\
        &\le \sqrt{x^\transp (B_h^k)^{-1}x} \\
        &\le \frac{k+\lambda-f(k)}{k\lambda_{h}^{+} +\lambda - f(k)}\frac{1}{\sqrt{x^\transp B_h^kx}}.\label{pp:key.1}
    \end{align}
    Again from the properties of $B_h^k$, $x$ is orthogonal to all the orthonormal eigenvector of $B_h^k$ that correspond to zero eigenvalues of $\Lambda_h^\star$. Hence by Proposition~\ref{prop:minposeig}:
    \begin{align}
        x^\transp B_h^k x &= kx^\transp \Lambda_h^\star x + \lambda - f(k) \\
        &\ge k \min_{y\in\Imm(\Lambda_h^\star),\norm{y}=1} y^\transp \Lambda_h^\star y +\lambda - f(k) \\
        &= k \lambda_{h}^{+} +\lambda - f(k).\label{pp:key.2}
    \end{align}
    Since $\beta_k = \wt{O}(1)$ and $f(k) = \wt{O}(\sqrt{k})$, from~\eqref{pp:key.1} and~\eqref{pp:key.2}:
    \begin{equation}
        \beta_k\norm{\phi_h(s,a)}_{(\Lambda_h^k)^{-1}} \le \beta_k\frac{k+\lambda-f(k)}{(k\lambda_{h}^{+}+\lambda - f(k))^{3/2}} = \wt{O}(k^{-1/2}).
    \end{equation}
\end{proof}

\begin{lemma}\label{lemma:abcd}
Let $\{\phi_j\}_{j\in[n]}$ be a set of $n$ vectors in $\mathbb{R}^d$ and $v\in\mathbb{R}^d$ be such that $v \notin  \mathrm{span}\{\phi_j : j\in[n]\} $. Then, there exists a scalar $\epsilon > 0$ such that, for any $t \geq 0, \eta > 0$, 
\begin{align*}
\|v\|_{(t\sum_{j\in[n]}\phi_j\phi_j^T + \eta I)^{-1}} \geq \frac{\epsilon}{\sqrt{\eta}}.
\end{align*}
\end{lemma}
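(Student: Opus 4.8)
The plan is to decompose $v$ into its component inside the subspace $W := \mathrm{span}\{\phi_j : j \in [n]\}$ and its orthogonal component, and show that the orthogonal component alone forces the weighted norm to stay bounded below. Write $v = v_\parallel + v_\perp$ where $v_\parallel \in W$ and $v_\perp \perp W$; since $v \notin W$ by hypothesis, we have $v_\perp \neq 0$, so $\epsilon := \|v_\perp\| > 0$ is a strictly positive scalar independent of $t$ and $\eta$. Let $A := \sum_{j\in[n]}\phi_j\phi_j^\transp$, which is positive semidefinite with column space exactly $W$. The key structural fact is that $A v_\perp = 0$, so $v_\perp$ lies in the kernel of $A$ and hence is an eigenvector of $tA + \eta I$ with eigenvalue $\eta$, for every $t \geq 0$ and $\eta > 0$.

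The main computation is then a lower bound on the quadratic form. First I would note that $(tA + \eta I)^{-1}$ is positive definite, so $\|v\|_{(tA+\eta I)^{-1}}^2 = v^\transp (tA + \eta I)^{-1} v$. The cleanest route is to use the variational characterization: for any positive definite $M$,
\[
    v^\transp M^{-1} v = \max_{u \neq 0} \frac{2 u^\transp v - u^\transp M u}{1} \geq 2 u^\transp v - u^\transp M u \quad \text{for any fixed } u.
\]
Choosing $u = c\, v_\perp$ for a scalar $c$ to be optimized, and using $M v_\perp = \eta v_\perp$ together with $u^\transp v = c\, v_\perp^\transp v = c\|v_\perp\|^2$ (since $v_\perp \perp v_\parallel$), we get $v^\transp M^{-1} v \geq 2c\|v_\perp\|^2 - c^2 \eta \|v_\perp\|^2$. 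Maximizing the right-hand side over $c$ gives $c = 1/\eta$ and the bound $v^\transp M^{-1} v \geq \|v_\perp\|^2 / \eta = \epsilon^2/\eta$. Taking square roots yields $\|v\|_{(tA + \eta I)^{-1}} \geq \epsilon/\sqrt{\eta}$, as required. (An even more direct argument: decompose the spectral expansion of $M^{-1}$; since $v_\perp$ is in the $\eta$-eigenspace, $v^\transp M^{-1} v \geq \eta^{-1}\|v_\perp\|^2$ simply by dropping the nonnegative contributions from the other eigenspaces.)

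I do not anticipate any real obstacle here: the only subtlety is making sure $\epsilon$ is genuinely independent of $t$ and $\eta$, which is immediate because $v_\perp$ is the orthogonal projection of $v$ onto the fixed subspace $W^\perp$ and does not depend on the matrix at all. One should just be slightly careful that $\|\cdot\|_{(tA+\eta I)^{-1}}$ is well-defined, i.e.\ that $tA + \eta I$ is invertible --- this holds for all $\eta > 0$ regardless of $t$ since $A \succeq 0$. The spectral-decomposition version of the argument is probably the one I would write up, since it avoids introducing the auxiliary variational inequality and makes the role of the kernel of $A$ fully transparent.
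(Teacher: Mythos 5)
Your proof is correct and follows essentially the same route as the paper's: both arguments reduce to the spectral decomposition of $tA+\eta I$ and the observation that $v$ has a nonzero component in $\ker(A)=W^\perp$, which $tA+\eta I$ maps by multiplication by $\eta$, so that component alone contributes at least $\epsilon^2/\eta$ to the quadratic form. The only cosmetic difference is that you take $\epsilon=\|v_\perp\|$ explicitly and self-containedly (and in fact slightly sharper, since you keep the entire kernel component), whereas the paper obtains its $\epsilon$ as $|v^\transp u_i|$ for a single zero-eigenvalue eigenvector by invoking Lemma 28 of \citet{papini2021leveraging}.
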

\begin{proof}
Let $\{\lambda_i, u_i\}_{i\in[d]}$ denote the eigenvalues/eigenvectors of the matrix $\sum_{j\in[n]}\phi_j\phi_j^T$. Note that $\mathrm{span}\{u_i : i\in[d]\} = \mathrm{span}\{\phi_j : j\in[n]\}  \subset \mathbb{R}^d$. Then, Lemma 28 of \cite{papini2021leveraging} ensures that there exists a scalar $\epsilon > 0$ such that $|v^T u_i| \geq \epsilon$ for at least one eigenvector $u_i$ associated with a zero eigenvalue. Noting that the eigenvectors of $(t\sum_{j\in[n]}\phi_j\phi_j^T + \eta I)^{-1}$ are the same as the those of $\sum_{j\in[n]}\phi_j\phi_j^T$, we have that
\begin{align*}
\|v\|_{(t\sum_{j\in[n]}\phi_j\phi_j^T + \eta I)^{-1}}^2 = \sum_{j\in[d]} \frac{(v^T u_j)^2}{\eta + \lambda_j} \geq \frac{(v^T u_i)^2}{\eta} \geq \frac{\epsilon^2}{\eta},
\end{align*}
which concludes the proof.
\end{proof}

\section{Examples and Numerical Validations}\label{app:examples.experiments}
Consider the following two-stage MDP ($H=2$) with states $\Sspace=\{s_1,s_2\}$ and actions $\Aspace=\{a_1,a_2\}$:
\begin{align}
    &r_1(s,a) = 1 &&\text{for all $s\in\Sspace$ and $a\in\Aspace$}, \\
    &p_1(s_1|s_1,a_1) = 1, &&p_1(s_1|s_1,a_2)=\frac{1}{2}, &&p_1(s_1|s_2,a_1)=\frac{1}{2},
    &&p_1(s_1|s_2,a_2)=\frac{3}{4}, \\
    &r_2(s_1,a_1) = 1,
    &&r_2(s_1,a_2)=\frac{7}{8},
    &&r_2(s_2,a_1)=\frac{1}{2},
    &&r_2(s_2,a_2)=\frac{5}{8},
\end{align}
$\mu(s_1)=\mu(s_2)=1/2$, and of course
$p(s_2|s,a)=1-p(s_1|s,a)$ for all $s\in\Sspace$ and $a\in\Aspace$.
Backward induction shows that the (unique) optimal policy is:
\begin{align}
    &\pi^\star_1(s_1) = a_1, &&\pi^\star_1(s_2) = a_2, 
    &&\pi^\star_2(s_1) = a_1,
    &&\pi^\star_2(s_2) = a_2,
\end{align}
with the following values:
\begin{align}
    &V^\star_1(s_1) = 2,
    &&V^\star_1(s_2) = \frac{61}{32},
    &&V^\star_2(s_1) = 1,
    &&V^\star_2(s_2) = \frac{5}{8}.
\end{align}
Notice also that all states and actions are reachable, i.e. $\rho_h(s,a)>0$ for all $s\in\Sspace$, $a\in\Aspace$, and $h\in[H]$.

\paragraph{\hls{} representation.}
Consider the following $2$-dimensional representation $\Phi^{(1)}$:
\begin{align}
    &\underline{\phi_1^{(1)}(s_1,a_1)} = \begin{bmatrix}1\\0\end{bmatrix} 
    &&\phi_1^{(1)}(s_1,a_2) = \begin{bmatrix}1/2\\1/2\end{bmatrix}
    &&\phi_1^{(1)}(s_2,a_1) = \begin{bmatrix}1/2\\1/2\end{bmatrix}
    &&\underline{\phi_1^{(1)}(s_2,a_2)} = \begin{bmatrix}3/4\\1/4\end{bmatrix} \\
    &\underline{\phi_2^{(1)}(s_1,a_1)} = \begin{bmatrix}0\\1\end{bmatrix} 
    &&\phi_2^{(1)}(s_1,a_2) = \begin{bmatrix}1/4\\3/4\end{bmatrix}
    &&\phi_2^{(1)}(s_2,a_1) = \begin{bmatrix}1\\0\end{bmatrix}
    &&\underline{\phi_2^{(1)}(s_2,a_2)} = \begin{bmatrix}3/4\\1/4\end{bmatrix}.
\end{align}

It is easy to check that the MDP is low-rank (Asm~\ref{asm:lowrank}) and $\Phi^{(1)}$ is a realizable representation with $\vtheta_1=[1,1]^\transp$, $\boldsymbol{\mu}_1(s_1)=[1,0]^\transp$, $\boldsymbol{\mu}_1(s_2)=[0,1]^\transp$, and $\vtheta_2=[1/2, 1]^\transp$. This is an example of low-rank MDP with \emph{simplex feature space} (see Example 2.2 in~\citep{Jin2020linear}).
We have underlined optimal features. It is easy to see that optimal features span $\Reals^2$ at both stages\footnote{It may appear counterintuitive that simplex features, which live on a one-dimensional manifold, can span $\Reals^2$. However, notice that the simplex is not a \emph{linear} subspace of the Euclidean space (it does not include the origin). Indeed, we could describe the example MDP with less parameters, but we would loose the linear structure.}, so $\Phi^{(1)}$ is \hls{}. The optimal covariance matrices are:
\begin{align}
    &\Lambda_{1,\star}^{(1)} = \frac{1}{32}\begin{bmatrix}25 &3\\3&1\end{bmatrix},
    &&\Lambda_{2,\star}^{(1)}=\frac{1}{128}\begin{bmatrix}9 &3\\3&113\end{bmatrix}.
\end{align}
Both are full rank, and their minimum eigenvalues are:
\begin{align}
    &\lambda_{1,+}^{(1)} = \frac{13-3\sqrt{17}}{32} \simeq 0.02,
    &&\lambda_{2,+}^{(1)} = \frac{61-\sqrt{2713}}{128} \simeq 0.07.
\end{align}
As shown in Theorems~\ref{thm:const.ele} and~\ref{thm:const.lsvi}, both LSVI-UCB and ELEANOR will only suffer constant regret on this problem.

\paragraph{Non-\hls{} representation.}
We apply the procedure described in the proof of Lemma 7 from~\citep{papini2021leveraging} to the second stage
 of $\Phi^{(1)}$ to obtain an equivalent representation $\Phi^{(2)}$:
\begin{align}
    &\underline{\phi_2^{(2)}(s_1,a_1)} = \begin{bmatrix}30/89\\74/89\end{bmatrix} 
    &&\phi_2^{(2)}(s_1,a_2) = \begin{bmatrix}1/4\\3/4\end{bmatrix}
    &&\phi_2^{(2)}(s_2,a_1) = \begin{bmatrix}1\\0\end{bmatrix}
    &&\underline{\phi_2^{(2)}(s_2,a_2)} = \begin{bmatrix}75/356\\185/356\end{bmatrix},
\end{align}
while the feature map for $h=1$ is the same.
It is easy to check that this is still a realizable representation for our MDP with the same parameters.\footnote{However, notice that some of the new features do not belong to the simplex.} 
Although the \hls{} property holds for $h=1$, it no longer does for $h=2$. Indeed, we have the following linear dependence between optimal features:
\begin{equation}
    \phi_{2,\star}^{(2)}(s_2) = \frac{5}{8} \phi_{2,\star}^{(2)}(s_1),
\end{equation}
so optimal features only span $\Reals^1$. However, suboptimal features still span $\Reals^2$, e.g., by taking action $a_2$ in $s_1$ and $a_1$ in $s_2$ (recall that all state-action pairs are reachable).
Due to Theorem~\ref{thm:necessity}, neither LSVI-UCB nor ELEANOR will achieve constant regret on this problem.

\paragraph{Alternative Non-\hls{} representation.}
It is also easy to build a representation that is non-\hls{} by changing the representation at the first stage.
For example, let $h$ be any stage (e.g., $h=1$ in our example) for which we want to transform a \hls{} representation (in our case $\phi^{(1)}$) into a non-\hls{} one. We can define a new representation $\phi^{(3)}$ as follows

\begin{align}
    \label{eq:tr1}
    s \in\Sspace, ~~\phi^{(3)}_h(s,a^\star) = \begin{bmatrix}0_d\\\phi_h^{(1)}(s,a^\star_s)\end{bmatrix}
    && \forall a \neq a^\star_s, ~~\phi^{(3)}_h(s,a) = \begin{bmatrix}\phi_h^{(1)}(s,a)\\0_d \end{bmatrix}\\
    \label{eq:tr2}
    s' \in\Sspace,~~\mu^{(3)}_h(s’) = \begin{bmatrix}\mu_h^{(1)}(s’)\\ \mu_h^{(1)}(s’)\end{bmatrix}
    && \theta^{(3)}_h = \begin{bmatrix}\theta_h^{(1)}\\ \theta_h^{(1)} \end{bmatrix}
\end{align}
Since all states are reachable, it is easy to verify that $\lambda_{\min} \Big(\mathbb{E}_{s \sim \rho^\star_h} \Big[ \phi^{(3)}_h(s,a^\star_s)^\intercal \phi^{(3)}_h(s,a^\star_s) \Big] \Big) = 0$ and that 
\[
\mathrm{span}\Big\{\phi_h(s,a) \;|\; \forall (s,a), \;\exists \pi \in \Pi : \rho^\pi_h(s,a) > 0\Big\} \neq \mathrm{span}\Big\{\phi_h^\star(s) \;|\; \forall s,\; \rho^{\star}_h(s) > 0\Big\}.
\]
Then, the representation is not \hls{} at stage $h$.

\subsection{Numerical Validations}
We provide a numerical validation of the behavior of the algorithms with and without a \hls{} representation.
We consider the following representations: $\phi^{(1)}$, $\phi^{(2)}$, $\phi^{(3)}$ which is obtained by applying the transformation in Eq.~\ref{eq:tr1}-\ref{eq:tr2} to $\phi^{(2)}$ at stage $h=1$, and $\phi^{(4)}$ which is obtained by applying the transformation in Eq.~\ref{eq:tr1}-\ref{eq:tr2} to $\phi^{(1)}$ at stage $h=1$. Note that we have $d_1 = 4$ and $d_2 =2$ for $\phi^{(3)}$ and $\phi^{(4)}$. Furthermore, $\lambda_{1,1}^{(3)}=\lambda_{1,1}^{(4)} = 0$, while $\lambda_{2,1}^{(3)}=0$ and $\lambda_{2,1}^{(4)} > 0$, which means that $\phi^{(4)}$ is ``locally'' \hls{} at stage $h=2$. The reward is stochastic and drawn from a Bernoulli distribution: $r_{h,t} \sim \text{Ber}(r_h(s_t,a_t))$.
We tested both LSVI-UCB on each individual representation and LSVI-LEADER with different combinations of the representations. 
We consider $\beta_{h,k} = c_{\beta} d_h H \sqrt{\log(d_h K)}$ and $\beta_{h,k} = c_{\beta} d_h H\sqrt{N\log(N d_h K)}$ for LSVI-UCB and LSVI-LEADER, respectively. We set $c_{\beta}=0.2$ and $K = 30000$.
The regret is shown in Fig.~\ref{fig:exp_1}, averaged over the same $100$ seeds.

As expected from the theoretical analysis, LSVI-UCB with \hls{} representation  suffers constant regret since, after the initial exploration phase, it only selects optimal actions. On the other hand, when the representation is Non-\hls{}, LSVI-UCB suffers a non-constant regret that grows over episodes. 
LSVI-LEADER is able to exploit the structure of the \hls{} representation and it achieves constant regret as well in all the configurations containing a \hls{} representation. The higher regret is due to a longer exploration phase that is a consequence of the enlarged confidence intervals; this is also in line with the theoretical analysis.
It is interesting to notice that LSVI-LEADER performs equally good with all the combinations of representations of dimension three (i.e., $\{\phi^{(1)},\phi^{(2)},\phi^{(4)}\}$, $\{\phi^{(1)},\phi^{(2)},\phi^{(3)}\}$ and $\{\phi^{(2)},\phi^{(3)},\phi^{(4)}\}$). LSVI-LEADER is indeed able to mix representations and achieve constant regret even when none of the individual representation would.  In the case of $\{\phi^{(2)},\phi^{(3)},\phi^{(4)}\}$, LSVI-LEADER is able to mix $\phi^{(2)}$ and $\phi^{(4)}$, that are \hls{} in stage $h=1$ and $h=2$, respectively.

\begin{figure}[t]
    \includegraphics[width=.48\textwidth]{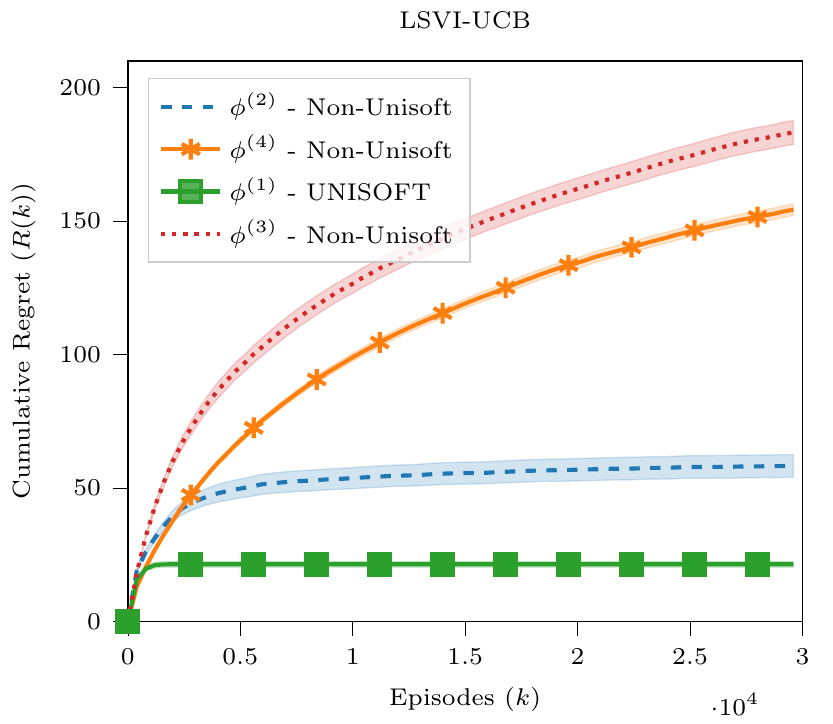}
    \hfill
    \includegraphics[width=.48\textwidth]{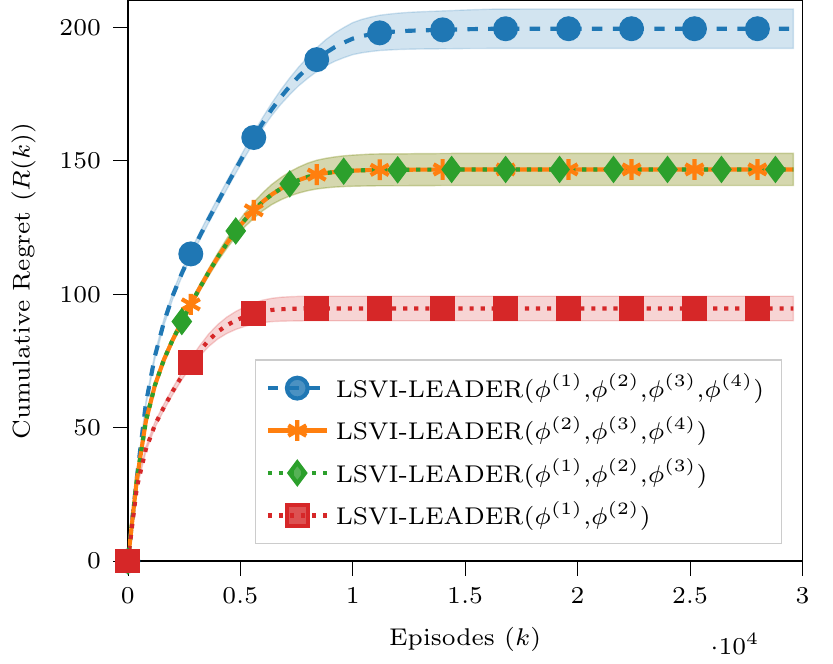}
    \caption{Cumulative regret of LSVI-UCB and LSVI-LEADER with different representations. The performance of LSVI-LEADER with $\{\phi^{(1)},\phi^{(2)},\phi^{(4)}\}$ is the same of the one with $\{\phi^{(1)},\phi^{(2)},\phi^{(3)}\}$ and $\{\phi^{(2)},\phi^{(3)},\phi^{(4)}\}$.}
    \label{fig:exp_1}
\end{figure}

\paragraph{\hls{} in DeepRL.}
We wanted also to verify the existence of \hls{} representations in DeepRL. We trained A2C~\citep{MnihBMGLHSK16} on different domains and evaluated whether the recovered representation (i.e., last layer of the neural network used to approximate $V^\star$) satisfies the \hls{} assumptions. Standard benchmark problems are not finite-horizon, we thus considered the following ``strong'' \hls{} condition  $\lambda_{\min}\left(\mathbb{E}_{s \sim \rho^\star} [\phi^\star(s) \phi^\star(s)^\transp]\right) >0$, which was evaluated by simulating multiple trajectories: 
\begin{equation}
    \Lambda_m^\pi = \frac{1}{m} \sum_{i=1}^m \sum_{t=1}^{T_i} \phi(s_t,a_t) \phi(s_t,a_t)^\transp
\end{equation}
where $a_t = \pi(s_t)$. We use a deterministic version of the policy recovered by A2C for evaluation. We trained A2C using the implementation provided by stable-baselines3~\citep{stable-baselines3}. 
We use the default parameters (provided by rl-baselines3-zoo~\citep{rl-zoo3}
) and tested different network architectures. Since we did not optimize the parameter, we reported only the domains where we obtained good results with at least one network architecture (highlighted in the table).
Since A2C estimates directly $V^\star$, we used the features of the last layer as features of the optimal policy (i.e., $\phi^\star(s)$) to test for the ``strong'' \hls{} condition.

Tables~\ref{tab:deep16}--\ref{tab:deep64} show that in several domains the learnt representation is \hls{}, although the minimum eigenvalue is small. As expected, the number of ``strong'' \hls{} representations decreases as the size of the last layer increases. This initial experiment shows that \hls{} representations are not uncommon in practice but also leave open the possibility of designing algorithms that explicitly try to force the \hls{} while learning. We believe this is an interesting direction for future work.

\begin{table}
    \centering
    \scriptsize
    \begin{tabular}{|l|c|c|c|c|c|c|c|}
    \hline
        Domain & \makecell{mean\\reward} & \makecell{std\\reward} & \makecell{eval\\timesteps} & \makecell{eval\\episodes ($m$)} & $\mathrm{rank}(\Lambda_m^\pi)$ & $\lambda_{\min}(\Lambda_m^\pi)$ & \hls{} \\ \hline
        \rowcolor{LightCyan}
        Acrobot-v1 & -84.5 & 20.7 & 149923 & 1753 & 16 & 0.02 & \faCheck \\ \hline
        \rowcolor{LightCyan}
        AntBulletEnv-v0 & 2303.9 & 68.3 & 150000 & 150 & 15 & 0 & \\ \hline
        BipedalWalker-v3 & 2.2 & 1.6 & 148800 & 93 & 10 & 0 & \\ \hline
        \rowcolor{LightCyan}
        CartPole-v1 & 500.0 & 0.0 & 150000 & 300 & 1 & 0 & \\ \hline
        HopperBulletEnv-v0 & 836.3 & 536.2 & 149982 & 372 & 16 & 0 & \\ \hline
        \rowcolor{LightCyan}
        MountainCar-v0 & -124.9 & 31.4 & 149979 & 1201 & 16 & 0.01 & \faCheck \\ \hline
        \rowcolor{LightCyan}
        MountainCarContinuous-v0 & 91.6 & 0.2 & 149966 & 1736 & 5 & 0 & \\ \hline
        \rowcolor{LightCyan}
        Pendulum-v0 & -173.5 & 107.0 & 150000 & 750 & 16 & 0 & \\ \hline
    \end{tabular}
    \caption{Results for A2C policy network of dimension $[64,16]$ and value network of dimension $[64, 16]$. These dimensions represent the size of the hidden layers (with tanh activation function). We highlighted the environments where A2C achieved good performance.}
    \label{tab:deep16}
\end{table}

\begin{table}
    \centering
    \scriptsize
    \begin{tabular}{|l|c|c|c|c|c|c|c|}
    \hline
        Domain & \makecell{mean\\reward} & \makecell{std\\reward} & \makecell{eval\\timesteps} & \makecell{eval\\episodes ($m$)} & $\mathrm{rank}(\Lambda_m^\pi)$ & $\lambda_{\min}(\Lambda_m^\pi)$ & \hls{} \\ \hline
        \rowcolor{LightCyan}
        Acrobot-v1 & -84.9 & 29.4 & 149987 & 1747 & 32 & 0.0018 & \faCheck \\ \hline
        \rowcolor{LightCyan}
        AntBulletEnv-v0 & 2109.9 & 46.1 & 150000 & 150 & 32 & 0.0010 & \faCheck \\ \hline
        \rowcolor{LightCyan}
        BipedalWalker-v3 & 267.3 & 53.3 & 149278 & 201 & 24 & 0 &  \\ \hline
        \rowcolor{LightCyan}
        CartPole-v1 & 500.0 & 0.0 & 150000 & 300 & 1 & 0 &  \\ \hline
        \rowcolor{LightCyan}
        HopperBulletEnv-v0 & 1461.6 & 707.1 & 149123 & 205 & 32 & 0.0001 & \faCheck \\ \hline
        \rowcolor{LightCyan}
        MountainCar-v0 & -116.5 & 28.0 & 149999 & 1288 & 32 & 0.0001 & \faCheck \\ \hline
        \rowcolor{LightCyan}
        MountainCarContinuous-v0 & 91.5 & 0.2 & 149975 & 1742 & 10 & 0 &  \\ \hline
        \rowcolor{LightCyan}
        Pendulum-v0 & -236.5 & 187.7 & 150000 & 750 & 26 & 0 & \\ \hline
    \end{tabular}
    \caption{Results for A2C policy network of dimension $[64,32]$ and value network of dimension $[64, 32]$. These dimensions represent the size of the hidden layers (with tanh activation function). We highlighted the environments where A2C achieved good performance.}
    \label{tab:deep32}
\end{table}

\begin{table}
    \centering
    \scriptsize
    \begin{tabular}{|l|c|c|c|c|c|c|c|}
    \hline
        Domain & \makecell{mean\\reward} & \makecell{std\\reward} & \makecell{eval\\timesteps} & \makecell{eval\\episodes ($m$)} & $\mathrm{rank}(\Lambda_m^\pi)$ & $\lambda_{\min}(\Lambda_m^\pi)$ & \hls{} \\ \hline
        \rowcolor{LightCyan}
        Acrobot-v1 & -83.3 & 17.1 & 149970 & 1778 & 64 & 0.0003 & \faCheck \\ \hline
        \rowcolor{LightCyan}
        AntBulletEnv-v0 & 1912.7 & 106.0 & 150000 & 150 & 64 & 0.0008 & \faCheck \\ \hline
        \rowcolor{LightCyan}
        BipedalWalker-v3 & 276.1 & 25.8 & 149707 & 198 & 28 & 0 & \\ \hline
        \rowcolor{LightCyan}
        CartPole-v1 & 500.0 & 0.0 & 150000 & 300 & 2 & 0 & \\ \hline
        HopperBulletEnv-v0 & 14.0 & 0.8 & 149997 & 26620 & 59 & 0 & \\ \hline
        \rowcolor{LightCyan}
        MountainCar-v0 & -107.3 & 20.1 & 149944 & 1397 & 44 & 0 & \\ \hline
        \rowcolor{LightCyan}
        MountainCarContinuous-v0 & 92.4 & 0.1 & 149984 & 1948 & 10 & 0 & \\ \hline
        \rowcolor{LightCyan}
        Pendulum-v0 & -153.3 & 92.9 & 150000 & 750 & 32 & 0 & \\ \hline
    \end{tabular}
    \caption{Results for A2C policy network of dimension $[64,64]$ and value network of dimension $[64, 64]$. These dimensions represent the size of the hidden layers (with tanh activation function). We highlighted the environments where A2C achieved good performance.}
    \label{tab:deep64}
\end{table}

\end{appendix}
\end{document}